\newtheorem{theorem}{Theorem}[section]
\newtheorem{definition}[theorem]{Definition}
\newtheorem{proposition}[theorem]{Proposition}
\newtheorem{corollary}[theorem]{Corollary}
\newtheorem{remark}[theorem]{Remark}
\newcommand{\bitem}{\begin{itemize}}
\newcommand{\eitem}{\end{itemize}}
\newcommand{\mc}[1]{\mathcal{#1}}
\newcommand{\mb}[1]{\mathbb{#1}}
\newcommand{\N}{\mathbb{N}}
\newcommand{\R}{\mathbb{R}}
\newcommand{\EE}{\mathbb{E}}
\newcommand{\bpm}{\begin{pmatrix}}
\newcommand{\epm}{\end{pmatrix}}
\newcommand{\bvm}{\begin{vmatrix}}
\newcommand{\evm}{\end{vmatrix}}
\newcommand{\bsm}{\left(\begin{smallmatrix}}
\newcommand{\esm}{\end{smallmatrix}\right)}
\newcommand{\T}{\top}
\newcommand{\wh}[1]{\widehat{#1}}
\newcommand{\wt}[1]{\widetilde{#1}}
\newcommand{\la}{\langle}
\newcommand{\ra}{\rangle}
\newcommand{\mrm}[1]{\mathrm{#1}}
\newcommand{\veps}{\varepsilon}
\newcommand{\eins}{\mathbbm{1}}
\definecolor{defred}{RGB}{200,0,0}
\renewcommand{\r}[1]{\textcolor{defred}{#1}}
\newcommand{\g}[1]{\textcolor{green!60!black}{#1}}
\renewcommand{\o}[1]{\textcolor{orange}{#1}}
\DeclareMathSymbol{\mydiv}{\mathbin}{symbols}{"04}
\DeclareMathOperator{\Diag}{Diag}
\DeclareMathOperator{\diag}{diag}
\DeclareMathOperator{\NLL}{NLL}
\DeclareMathOperator{\MHD}{MHD}
\DeclareMathOperator{\Cross}{Cross-Entropy}
\DeclareMathOperator{\argmin}{arg min}
\DeclareMathOperator{\round}{round}
\DeclareMathOperator{\Span}{span}
\DeclareMathOperator{\KL}{KL}
\DeclareMathOperator{\CFM}{CFM}
\def\widebreve{\mathpalette\wide@breve}
\def\wide@breve#1#2{\sbox\z@{$#1#2$}%
     \mathop{\vbox{\m@th\ialign{##\crcr
\kern0.08em\brevefill#1{0.8\wd\z@}\crcr\noalign{\nointerlineskip}%
                    $\hss#1#2\hss$\crcr}}}\limits}
\def\brevefill#1#2{$\m@th\sbox\tw@{$#1($}%
  \hss\resizebox{#2}{\wd\tw@}{\rotatebox[origin=c]{90}{\upshape(}}\hss$}
\title{Generative Modeling of Discrete Data \\ Using Geometric Latent Subspaces}
\author{%
Daniel Gonzalez-Alvarado
\textsuperscript{1,2,3}\thanks{Correspondence to: \texttt{daniel.gonzalez@iwr.uni-heidelberg.de}.}
\quad
Jonas Cassel\textsuperscript{1,4}
\quad
Stefania Petra\textsuperscript{1,2,3}
\quad
Christoph Schnörr\textsuperscript{1,3,4}
\AND
\textsuperscript{1}Institute for Mathematics, Heidelberg University\\
\textsuperscript{2}Mannheim Institute for Intelligent Systems in Medicine, Heidelberg University\\
\textsuperscript{3}IWR, Heidelberg University\\
\textsuperscript{4}Research Station Geometry and Dynamics, Heidelberg University
}
\begin{document}

\maketitle

\begin{abstract}
We propose a geometric latent-subspace framework for generative modeling of discrete data. Specifically, we introduce latent subspaces in the exponential parameter space of product manifolds of categorical distributions as a novel method for learning generative models of discrete data. The resulting low-dimensional latent space encodes statistical dependencies and removes redundant degrees of freedom among the categorical variables. We equip the parameter domain with a Riemannian geometry such that the latent subspace and induced data manifold are related by isometries enabling consistent flow matching. Exploiting this structure, we propose a geometry-aware dimensionality reduction objective, called geometric PCA (GPCA), which we formulate as a regularized cross-entropy minimization that encourages small Riemannian distances between the data and their reconstructions. In particular, under the induced geometry, geodesics become straight lines in the latent parameter space which makes model training by flow matching effective. Empirical results show that low-dimensional latent representations suffice to accurately model high-dimensional discrete data.
\end{abstract}

\section{Introduction}\label{sec:Introduction}

Generative models of data constitute a major area of research in machine learning. 
More recently, generative models of  \textit{discrete} (categorical) data and joint probability distributions of discrete random variables have become key aspects of generative modeling
\cite{Boll:2024ab,Boll:2025aa,Stark:2024,Davis:2024,Cheng:2024,Cheng:2025,Williams:2025}.  
Key research topics in this context concern the \textit{representation} of discrete data, either directly or through quantization of real-valued variables, and the \textit{geometry} of the spaces for which these variables and the corresponding probability distributions are defined. 

Our \textbf{main contribution} in this paper combines \textit{continuous and geometric} modeling in a way that is tailored to 
\begin{enumerate}
\item the accurate \textit{encoding and representation} of \textit{discrete} data,
\item the \textit{compression} of \textit{high-dimensional} discrete data, and 
\item the design of a \textit{generative model} of discrete data.
\end{enumerate}

To this end, we elaborate the \textit{generalized PCA} approach of classical machine learning \cite{Collins:2001aa} such that it conforms to the information geometry of discrete joint distributions regarded as statistical manifolds \cite{Amari:2000aa}. Regarding the underlying \textit{geometry}, we introduce and propose the Riemannian connection induced by the e-metric (see Definition~\ref{def:e-metric}), rather than by the Fisher-Rao metric, in order to relate isometrically the latent subspace and the corresponding induced data manifold, for the consistent representation of discrete data in both domains. Based on this structure, we propose a geometry-aware dimensionality reduction objective, formulated as a regularized cross-entropy minimization that encourages small Riemannian distances between the data and their reconstructions. We call the resulting latent subspace the \textit{Geometric PCA (GPCA)} latent subspace. One of the consequences of our model is that \textit{geodesics} for training the generative model are \textit{straight lines by geometric construction} which makes flow matching in the low-dimensional latent GPCA subspace particularly computationally efficient. Figure \ref{fig:GPCA-approach} illustrates the components of our approach.
\begin{figure}[H]
\centering
\includegraphics[width=1.\linewidth]{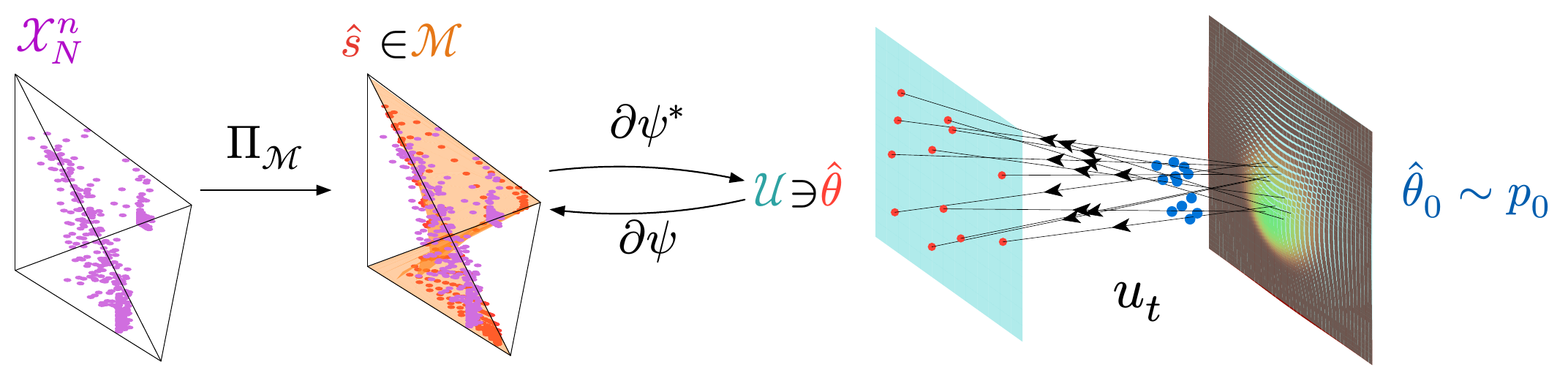}
\caption{
\textbf{Approach (sketch):} 
GPCA represents the data $\mc X_N^n$ on the nonlinear manifold $\mc M$, which is then mapped isometrically to the latent space $\mc U$ for learning the vector field $u_t$ via Flow Matching. \vspace{-5mm}
}
\label{fig:GPCA-approach}
\end{figure}

Figure \ref{fig:H3-GPCA} demonstrates the favorable property of using GPCA for representing \textit{discrete} data, rather than treating such data as points embedded in the ambient Euclidean space.  Using the \textit{same} dimension of a linear space $\mc{U}$ in comparison to basic PCA, a significant subset of the entire discrete data space $\mc{X}$ can be represented not only \textit{exactly}, but is also intrinsically \textit{larger} due to the \textit{nonlinear} manifold $\mc{M}$ induced in the ambient data space. 

\begin{figure}[H]
\centering
\includegraphics[width=0.2\linewidth]{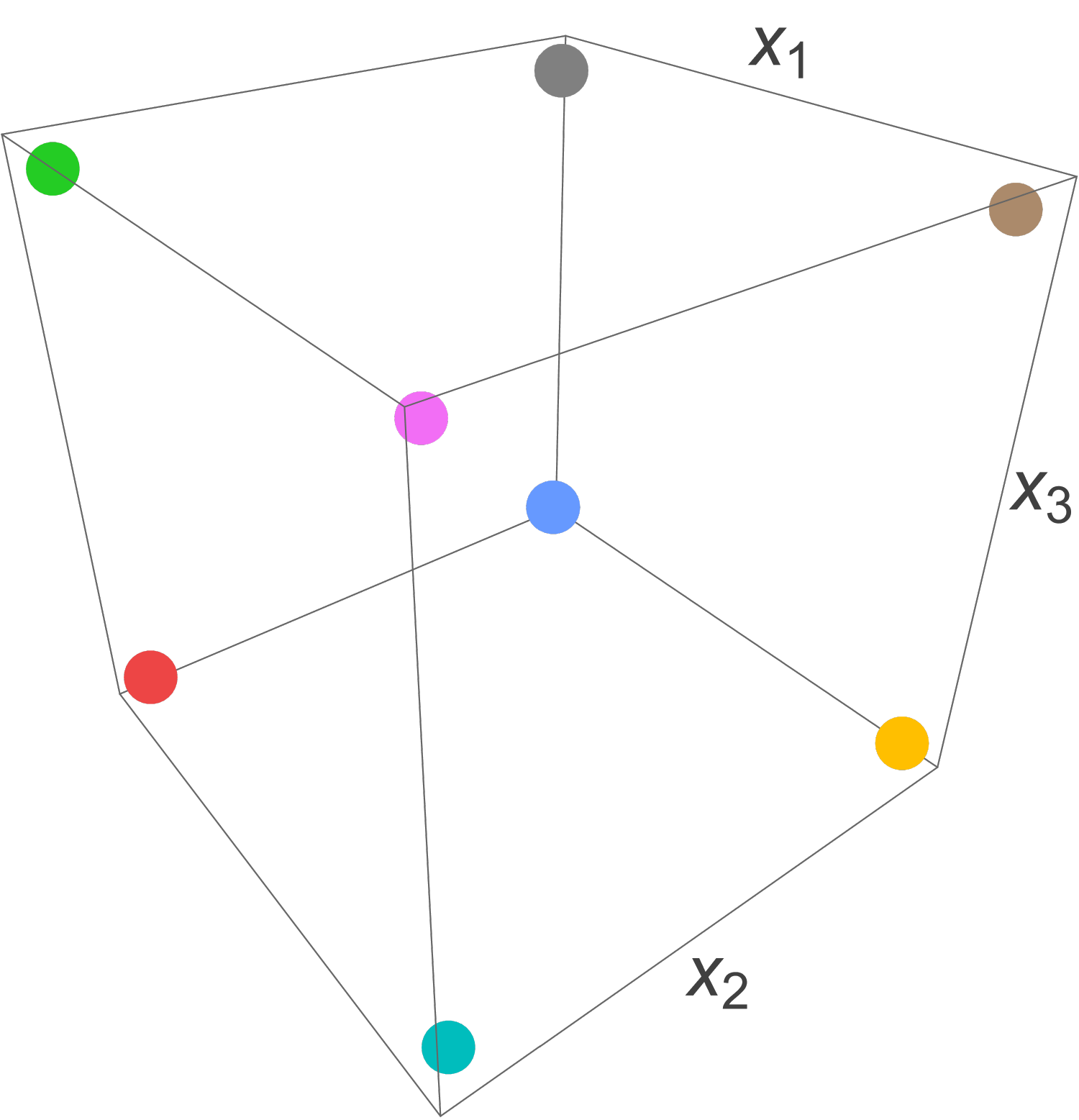}
\hspace{0.02\linewidth}
\includegraphics[width=0.2\linewidth]{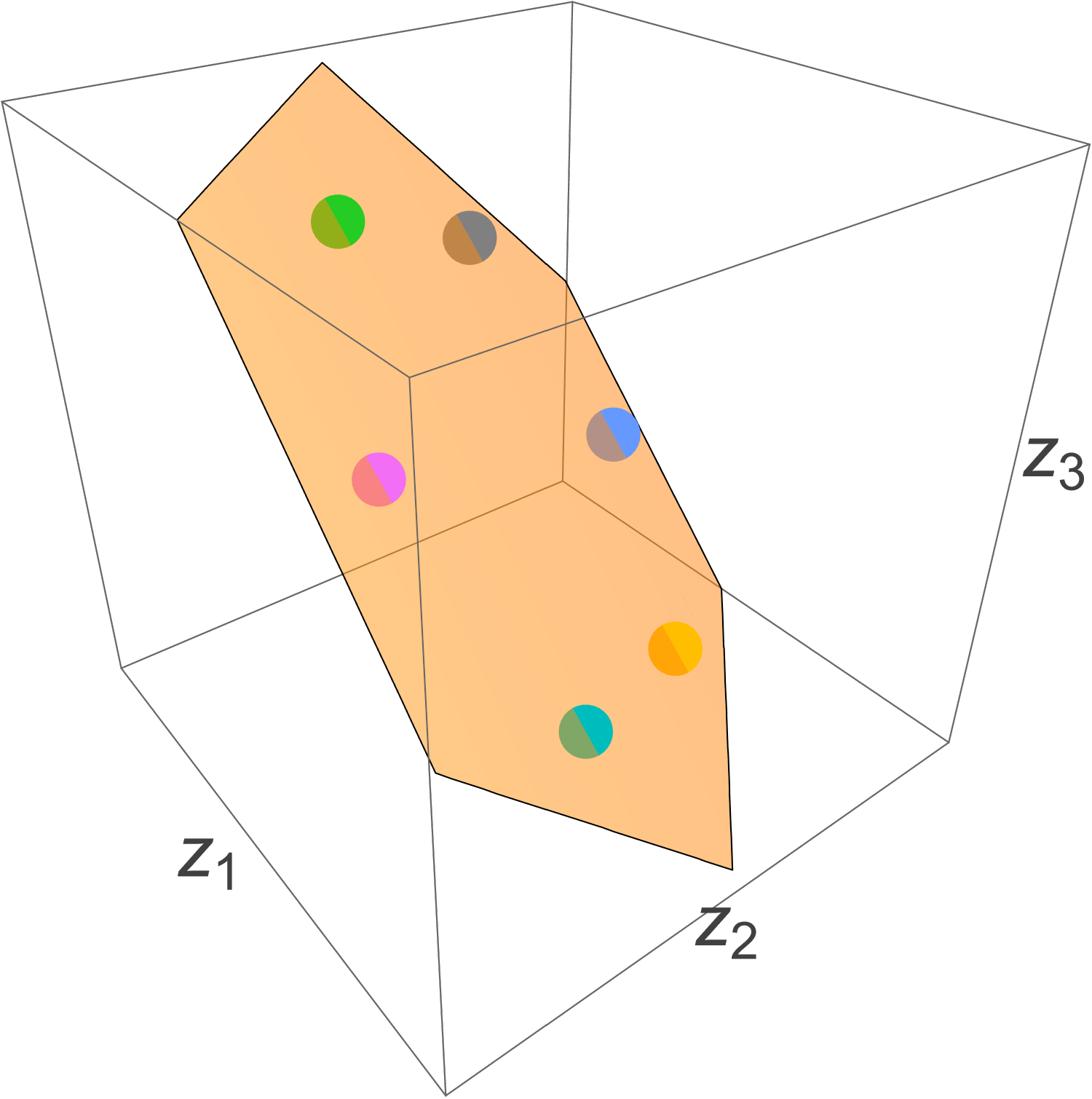}
\hspace{0.02\linewidth}
\includegraphics[width=0.2\linewidth]{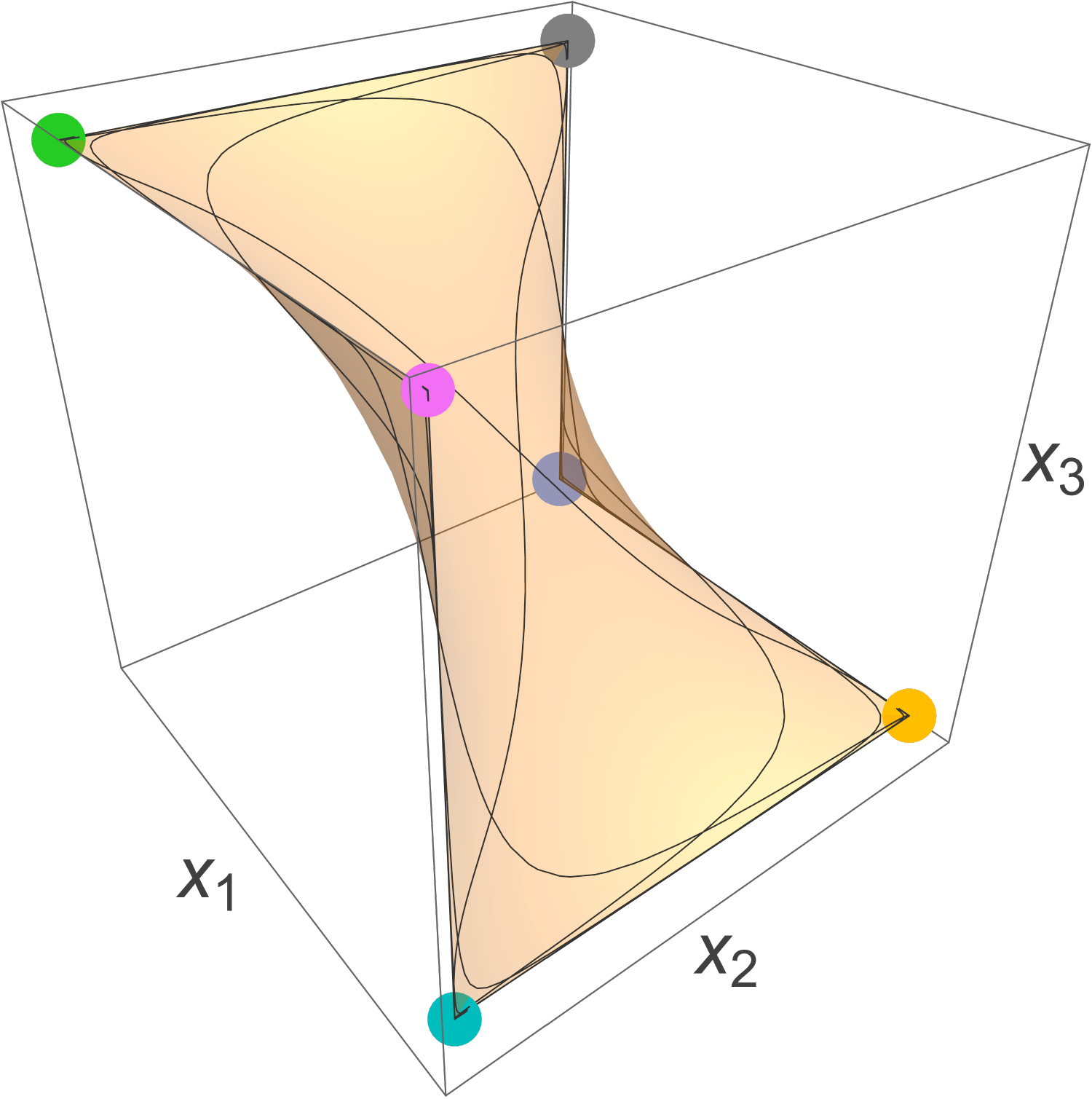}
\hspace{0.02\linewidth}
\includegraphics[width=0.2\linewidth]{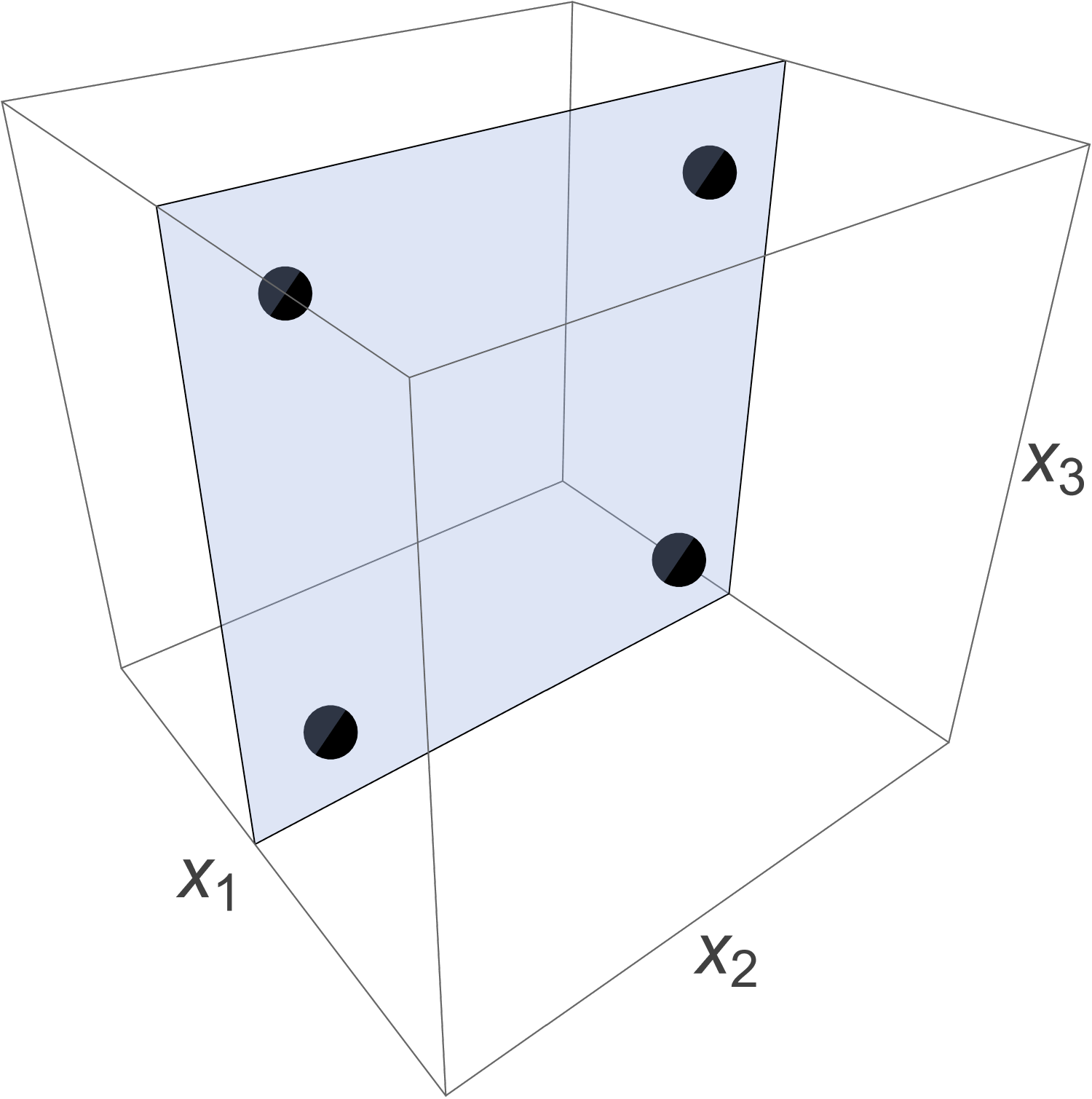}

\parbox{0.17\linewidth}{\centering\footnotesize (a)}
\hspace{0.02\linewidth}
\parbox{0.17\linewidth}{\centering\footnotesize (b)}
\hspace{0.02\linewidth}
\parbox{0.17\linewidth}{\centering\footnotesize (c)}
\hspace{0.02\linewidth}
\parbox{0.17\linewidth}{\centering\footnotesize (d)}
\caption{
\textbf{GPCA- vs.~PCA-subspaces.} \textbf{(a)} The hypercube $\mc{X}=\mb{H}^{3}=\{0,1\}^{3}$ as discrete data space. \textit{Any} $d=2$-dimensional affine subspace,
when used with linear projection and rounding, can represent at most  $2^{d}=4$ points exactly. \textbf{(b)} A $2$-dimensional \textit{geometric latent} subspace $\mc{U}$ represents $6>4$ data points \textit{exactly} using \textit{real} latent coordinates. \textbf{(c)} The \textit{nonlinear} manifold $\mc{M}$ induced by the linear GPCA-subspace $\mc{U}$ in the data space. \textbf{(d)} Since (a) is isotropic, its covariance is proportional to the identity, so PCA does not reveal a dominant direction. Dropping one coordinate and projection yields the 4 points $\{(\frac{1}{2},x_{2},x_{3}) : x_{2},x_{3}\in\{0,1\}\}$, which are non-integral and do not belong to $\mc{X}$.
\vspace{-5mm}
}
\label{fig:H3-GPCA}
\end{figure}
%

We demonstrate empirically on realistic high-dimensional datasets that GPCA achieves significant dimensionality reduction.
This highlights the basic manifold assumption of data science and machine learning 
\cite{Fefferman:2016aa,Goldt:2020aa,Meila:2024aa}, here specifically for \textit{discrete} data and generative models.

Our paper is organized as follows. Section \ref{sec:DD-GPCA} introduces the GPCA. The resulting geometric issues are addressed in Section \ref{sec:Riemann-Isometry}. Section \ref{sec:Flow-Matching} details how the generative model can be trained by flow matching on the GPCA. A more detailed discussion of related work and our contribution is provided in Section \ref{sec:Related-Work-Discussion}. Experimental results presented in Section \ref{sec:Experiments} underpin quantitatively our claims. \\[0.1cm]
Frequently used symbols and the basic notation are listed in Appendix \ref{appendix:notation}. Mathematical proofs and additional illustrating figures are relegated to Appendices \ref{appendix:proofs}, ~\ref{appendix:gpca-empirical-performance}, ~\ref{sec:generation-metrics} and~\ref{sec:generation-novelty}.

\section{Discrete distributions and geometric PCA}\label{sec:DD-GPCA}\label{gen_inst}
\subsection{Discrete distributions}\label{sec:Discrete-Distributions}

We consider data points $x=(x_{1},\dotsc,x_{n})^{\T}\in\mc{X}^{n}$, where each component takes one of $c$ categorical values (cf.~the notation in Appendix \ref{appendix:notation}). Points of a training set $\mc{X}_{N}^{n}$ of size $N$ are denoted by $x_{i}=(x_{i1},\dotsc,x_{in})^{\T},\; i\in [N]$, and we are interested in modeling the joint distribution $p \in \Delta_{c^{n}}$ which governs these data points. 
Since for typical sizes of $n$, $p$ is an intractable large nonnegative tensor, we work with \textit{fully factorized} product distributions $p\in\Delta_{c}^{n} = \Delta_{c}\times\dotsc\times\Delta_{c}$ akin to the classical variational mean-field approach \citep{Wainwright:2008aa}, which we, however, restrict to distributions with full support $s = (s_{1},\dotsc,s_{n})^{\T}\in\mc{S}_{c}^{n}$ and model them as points on a suitable Riemannian manifold (Section \ref{sec:Flow-Matching}). Each factor distribution $s_{j}\in\mc{S}_{c}$ can be uniquely represented in centered log-ratio coordinates \citep{Barndorff-Nielsen:1978aa, Aitchinson:1982vt, Brown:1986vy}
\begin{equation}\label{eq:def-s-exp}
s_{j} = (s_{jl})_{l\in[c]},\quad s_{jl}=\exp\big(\la\theta_{j}, e_{l}\ra-\psi(\theta_{j})\big), \quad \psi(\theta_j) = \log \Big(\sum_{l=1}^c e^{\theta_{jl}}\Big) ,
\end{equation}
where the convex log-exponential function $\psi$ is the log-partition function. Accordingly, $s\in\mc{S}_{c}^{n}$ is parametrized by $\theta=(\theta_{1},\dotsc,\theta_{n})^{\T}\in\R^{n\times c}$. We denote by $\psi^{\ast}$ the \textit{convex conjugate} of $\psi$, which in this case is the negative entropy and satisfies $\partial \psi^{*} = (\partial \psi)^{-1}$. We refer to Appendix~\ref{appendix:GPCA} for more details on this exponential family parametrization and other log-ratio parametrizations. 

\subsection{GPCA and induced data manifolds} \label{sec:GPCA}

We follow the \textit{generalized} PCA framework of \citet{Collins:2001aa} and further parametrize product distributions $s\in\mc{S}_{c}^{n}$ with factors \eqref{eq:def-s-exp} through a latent subspace 
\begin{equation}\label{eq:def-mcU}
\mc{U} := \Span\{V^1,\dots,V^d\} \subset T_0^n,\quad(\textbf{GPCA subspace})
\end{equation}
of dimension $d\ll n\,c$,  where $V^\ell\in T_0^n$ denotes the $\ell$-th basis element represented as the slice of a tensor $V \in\R^{n\times c\times d}$.  
Given latent coordinates $z\in \R^d$, we write the corresponding natural parameter as 
\begin{equation}\label{eq:theta-v}
\theta =  \sum_{\ell=1}^d z_\ell V^\ell \eqqcolon V \odot z \in \mc U \subset T_0^n \subset \R^{n\times c}.
\end{equation}

The basis vectors $V=(V^{1},\dots,V^{d})$ along with latent coefficient vectors $Z=(Z_1,\dotsc,Z_N)^\T$ associated with the training data $\mc X_N^n$ are obtained by minimizing the empirical objective
\begin{equation}\label{eq:V-training-loss}
\mc L (V,Z) = 
\frac{1}{Nn}\sum_{i=1}^N \sum_{j=1}^n
D_{\psi^{\ast}}\big(x_{ij},\partial\psi((V\odot Z_i)_{j})\big) 
,
\end{equation}
where $D_{\psi^{\ast}}$ is the Bregman divergence \citep{Zhang:2004aa,Basseville:2013aa} generated by the convex conjugate $\psi^\ast$.
This optimization can be carried out by alternating over convex minimization subproblems, or by gradient-based methods using automatic differentiation.

The GPCA subspace $\mc{U}$ induces a nonlinear submanifold 
\begin{equation}\label{eq:def-mcM}
\partial\psi(\mc{U}) =: \mc{M} \subset \mc S_c^n .  \quad(\textbf{data manifold})
\end{equation}
The gradient of the log-partition function $\partial \psi$ acts as a \textit{decoder}. Conversely, $\partial \psi^*$ maps mean parameters back to natural parameters. Once the GPCA parameters $(\wh V,\wh Z)$ are obtained by minimizing the objective \eqref{eq:V-training-loss}, new data can be projected onto $\mc M$ by the Bregman projection 
\begin{equation}\label{eq:bregman-projection}
\Pi_{\mc{M}}(x){=}  \partial\psi(\wh V \odot \wh z),\quad \wh z{=} \argmin_{z} \frac{1}{n} \sum_{j=1}^n D_{\psi^{\ast}}\big(x_{j},\partial\psi((\wh V\odot z)_{j})\big). \quad (\textbf{Bregman projection})
\end{equation} 
Figure~\ref{fig:GPCA-approach} sketches the GPCA procedure and our approach for learning generative models of discrete data. Figure~\ref{fig:H3-GPCA} shows a simple example with a fixed subspace dimension $d=2$, illustrating how GPCA approximates discrete data better than PCA in the ambient Euclidean space. For more details on the GPCA optimization and performance we refer to Section~\ref{sec:Experiments} and Appendix~\ref{appendix:gpca-empirical-performance}.


\textbf{Data representation.}
For geometric reasons that will become clear in the following sections, with slight abuse of notation, we represent data points as elements of the assignment manifold $\mc S_c^n$. One motivation for this choice is empirical: the GPCA objective exhibits a comparable rate of decrease under both representations, whereas the Riemannian distance introduced in the next section behaves more favorably on $\mc S_c^n$.

The two representations lead to different interpretations of the GPCA objective. When the data are treated as elements of $\mc X_N^n$, the objective is equivalent to minimizing the negative log-likelihood (NLL) under the GPCA model \cite{Collins:2001aa}. In contrast, when data are represented in $\mc S_c^n$, the same objective corresponds to minimizing the mean KL divergence between data points and their projections. Since the entropy term of the data is fixed, this is equivalent to minimizing a cross-entropy objective, which is what we do in practice. To clarify this distinction we compare in Section~\ref{sec:Experiments} and Appendix~\ref{appendix:gpca-empirical-performance} both objectives empirically on multiple data sets.

Due to this geometric set-up, and because we will introduce a regularized GPCA objective in Section~\ref{sec:Riemann-Isometry} to enforce geometric consistency, we call our method \textit{Geometric PCA (GPCA)}. 

\section{Riemannian isometries and geometric PCA} \label{sec:Riemann-Isometry}

\subsection{Riemannian structure induced on $\mc M$}

The subspace $\mc U \subset T_0^n \subset \R^{n\times c}$, introduced in the previous section, carries a natural vector space structure. We therefore equip $\mc U$ with the induced Euclidean manifold structure, together with its associated flat connection. This means that lengths are measured in terms of the Euclidean norm function of $\R^{n\times c}$, restricted to the subspace $\mc{U}$, and that the tangent space $T_u \mc{U}$ is identified with $\mc{U}$. We equip the simplex $\mc{S}_c^{n}$ with a Riemannian metric, defined in terms of the $\theta$-coordinate system on $\mc{S}_{c}^n$, where the tangent space $T_s \mc{S}_c^{n}$ is identified with $T_0^n$.

\begin{definition}[Riemannian Metric and Riemannian Distance]\label{def:e-metric}
	The \textit{e-metric} $g_{e}$ on $\mc{S}_{c}^{n}$ is defined in the $\theta$-coordinate system as the standard scalar product $\la \cdot,\cdot \ra$ on the corresponding tangent space. The metric $g_{e}$ induces a connection on $\mc{S}_{c}^{n}$, the Levi-Civita connection corresponding to $g_{e}$, which we denote by $\nabla^{e}$. The simplex $\mc{S}_{c}^{n}$ thus becomes a Riemannian manifold with respect to $g_e$ and the Riemannian distance function can be explicitly calculated as
	\begin{equation}\label{eq:e-RiemannDistance}
		\mathrm{d}_e(s,s') := \|\partial \psi^{*}(s) - \partial \psi^{*}(s') \|, \quad \forall s,s' \in \mc{S}_{c}^{n}, \quad \textbf{(Riemannian distance)}
	\end{equation}
	where $\|\cdot\|$ is the Euclidean norm in $\R^{n \times c}$.
\end{definition}
Note that the Christoffel symbols of the connection $\nabla^{e}$ all vanish in the $\theta$-coordinate system, by construction. We can evaluate the metric $g_e$ explicitly for two tangent vectors $\theta,\theta' \in T_s \mc{S}_{c}^{n} $
as the standard scalar product $\theta^{\top} \theta'$ on $\R^{n \times c}$, such that the associated norm function $\|\cdot~\|_{e,s}$ on the tangent space in $\theta$-coordinates is the standard norm. 
The metric can also be evaluated in the alternative representation of tangent vectors 
$Q \in T_0^n$, using the canonical identification 
$T_0^n \cong T_s \mc{S}_{c}^{n}$ induced by the $e$-coordinates, via the differential
\begin{align}
	d (\partial \psi^{*}) : \mc{S}_c^n \times T_0^n \to  T_0^n, \quad 
	d(\partial \psi^{*})_s(Q) = \Pi_0 \Big(\frac{Q_{j,l}}{s_{j,l}}  \Big)_{j \in [n], l \in [c]}.
\end{align}
We denote the induced norm function on $T_0$ by
\begin{equation}\label{eq:induced-tangent-e}
	\| Q \|_{e,s}^2
	\coloneqq
	\big \la d(\partial \psi^{*})_s(Q), \, d(\partial \psi^{*})_s (Q) \big \ra.
\end{equation}

We write $\exp_{s_0}^{(e)}(\cdot)$ and $\log_{s_0}^{(e)}(\cdot)$ for the exponential and logarithmic maps associated with the Levi-Civita connection $\nabla^e$ at $s_0$, respectively. The e-geodesic interpolant between $s_0$ and $s_1$ is then given by
\begin{align}\label{eq:geodesic-interpolant-e}
    \phi_t^{(e)}(s_0\mid s_1) = \exp_{s_0}^{(e)}(t \log_{s_0}^{(e)}(s_1)). \quad \textbf{(geodesic interpolant)}
\end{align}
In the following propositions we show that the Riemannian structure on $\mc{S}_{c}^{n}$ defined above captures the $e$-geometry. 
In particular, geodesic interpolants between reconstructed points in $\mc M$ (cf.~Figure \ref{fig:GPCA-approach}) remain close, in terms of the Riemannian distance \eqref{eq:e-RiemannDistance}, to the corresponding geodesic interpolants in $\mc S_c^n$ whenever the corresponding initial and terminal points are close.

\begin{proposition}[$\nabla^e = \nabla^1$]\label{prop:e-connection}
	The $1$-connection $\nabla^{1}$ on $\mc{S}_{c}^{n}$ agrees with the connection $\nabla^{e}$.
\end{proposition}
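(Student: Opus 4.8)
The plan is to reduce the identity $\nabla^e=\nabla^1$ to a comparison of Christoffel symbols in a single chart. An affine connection is determined uniquely by its Christoffel symbols in any chart covering the manifold, and the canonical parameter $\theta\in\R^{n(c-1)}$ is a global chart on $\mc{S}_c^n$. By the remark following Definition~\ref{def:e-metric}, $\nabla^e$ has identically vanishing Christoffel symbols in the $\theta$-coordinate system. Hence it suffices to show that the $1$-connection $\nabla^1$ is likewise flat in $\theta$-coordinates, i.e.\ that its (metric-lowered) Christoffel symbols vanish there, which for a nondegenerate metric is equivalent to the vanishing of the upper-index symbols; the two connections then agree on all of $\mc{S}_c^n$.

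To compute the Christoffel symbols of $\nabla^1$ I would use Amari's intrinsic description of the $\alpha$-connections. Fixing a single factor, write $\ell(\theta_j)=\log s_j(\,\cdot\,;\theta_j)$ for its log-likelihood and $\partial_i=\partial/\partial(\theta_j)_i$. The $\alpha$-connection coefficients, lowered by the Fisher metric, read
\[
\Gamma^{(\alpha)}_{ik,m}=\EE\Big[\Big(\partial_i\partial_k\ell+\tfrac{1-\alpha}{2}\,\partial_i\ell\,\partial_k\ell\Big)\partial_m\ell\Big].
\]
With the exponential-family form \eqref{eq:def-s-exp}, the log-likelihood of a draw $X_j$ is $\ell(\theta_j)=\la\theta_j,e_{X_j}\ra-\psi(\theta_j)$ (with $e_c:=0$), so $\partial_i\ell=(e_{X_j})_i-\partial_i\psi$ and $\partial_i\partial_k\ell=-\partial_i\partial_k\psi$, the latter being deterministic. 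The moment identity $\partial_i\psi=\EE[(e_{X_j})_i]$ gives $\EE[\partial_m\ell]=0$, so the Hessian contribution $-\partial_i\partial_k\psi\cdot\EE[\partial_m\ell]$ vanishes, leaving
\[
\Gamma^{(\alpha)}_{ik,m}=\tfrac{1-\alpha}{2}\,\EE[\partial_i\ell\,\partial_k\ell\,\partial_m\ell]=\tfrac{1-\alpha}{2}\,T_{ikm},\qquad T_{ikm}:=\partial_i\partial_k\partial_m\psi,
\]
the Amari--Chentsov tensor. Setting $\alpha=1$ yields $\Gamma^{(1)}_{ik,m}\equiv0$. Since $\mc{S}_c^n$ is the product of the factor families and $\psi$ is separable over factors, the mixed cross-factor symbols vanish automatically, and this per-factor computation extends blockwise to the whole manifold. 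Thus $\nabla^1$ is flat in the $\theta$-chart and equals $\nabla^e$.

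The step I expect to demand the most care is convention matching: one must verify that the ``$1$-connection'' used here is Amari's exponential connection $\nabla^{(e)}$, i.e.\ that the sign in the defining formula is oriented so that $\alpha=1$ (rather than $\alpha=-1$) produces flatness in the natural parameters $\theta$. This reduces to confirming the defining formula above together with the cumulant identities $\partial\psi=\EE[e_{X_j}]$, $\partial^2\psi=\Cov$, and $\partial^3\psi=T$ for the specific normalizer $\psi(\theta_j)=\log(1+\la\eins_{c-1},e^{\theta_j}\ra)$ of the categorical family. Once the convention is pinned down, the remaining work is the elementary differentiation of $\psi$ recorded above, with no further obstacle.
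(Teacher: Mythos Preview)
Your proposal is correct and follows essentially the same strategy as the paper's proof: both argue that two connections agreeing in Christoffel symbols in the global $\theta$-chart must coincide, and both establish that $\nabla^{1}$ has vanishing symbols there. The only difference is that the paper cites this vanishing as a known fact for exponential families (Proposition~1.9.1 in \cite{calin2014geometric}), whereas you carry out the computation of $\Gamma^{(\alpha)}_{ik,m}$ explicitly from Amari's formula; your self-contained derivation is sound and the convention check you flag is indeed the only place requiring care.
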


\begin{remark}[geodesics]
	The geodesics of the connection $\nabla^{e}$ are the familiar $e$-geodesics of information geometry. This shows the relevance of this Riemannian structure of the simplex for flow matching based on $e$-geodesics, as described in \cite{Boll:2024ab, Boll:2025aa, Cheng:2024}. The metric $g_e$ generates the connection $\nabla^{e}$ and thereby the $e$-geodesics, which are used as the interpolants for flow matching on the simplex.
\end{remark}

\begin{proposition}[isometry relations]\label{prop:isom}
	The mapping $\partial \psi : \mc{U} \to \mc{S}_c^{n}$ defines an isometric embedding, when $\mc{S}_{c}^{n}$ is equipped with the metric $g_e$. 
	In particular, $\partial \psi$ maps geodesics in $\mc U$, i.e. lines of the form  
	$\theta_t = (1-t)\theta_0 + t \theta_1$, to $e$-geodesics in $\mc{S}_c^{n}$.
\end{proposition}

\begin{proposition}[geodesic interpolants on $\mc{M}$ and $\mc{S}_{c}^{n}$]\label{prop:geod-approx}
Let $s_0,s_1 \in \mc S_c^n$ and $\wh s_0,\wh s_1 \in \mc M$ be latent subspace approximations (see Figure \ref{fig:GPCA-approach}), then it holds 
\begin{equation}
\mathrm{d}_e(s_i,\wh s_i) \le \varepsilon \quad \Rightarrow \quad \mathrm{d}_e (\phi^{(e)}_t(s_0\mid s_1),\phi^{(e)}_t(\wh s_0\mid \wh s_1)) \le \varepsilon,
 \quad  \qquad i=0,1.
\end{equation}
\end{proposition}
We refer to Appendix \ref{sec:app-proofs-riemann-isom} for the proofs.

\subsection{Riemannian distance and GPCA loss}
Proposition~\ref{prop:geod-approx} shows that geodesic interpolants on the GPCA-induced submanifold $\mc M = \partial \psi(V\mc Z)\subset \mc S_c^n$ remain close to the corresponding geodesics interpolants in the data space, provided that the reconstructions are close to the data in Riemannian distance. In this regime, Corollary~\ref{cor:cfm-approximation-ds} further provides a corresponding approximation bound for the flow matching objective on the data space. Favorably for our purposes, the Riemannian reconstruction error decreases together with the GPCA loss. The following proposition formalizes this behavior.

\begin{proposition}\label{prop:kl-edistance-compact}
Let $K \subset \mc S_c$ be a compact subset such that $\partial \psi^{\ast}(K)$ is convex. Then there exist constants $C_1,C_2 >0$ such that, for all $s,s'\in K$,
\begin{align}
C_1 \mathrm{d}_e^2(s,s')
\;\le\;
\KL(s,s')
\;\le\;
C_2 \mathrm{d}_e^2(s,s').
\end{align}
In particular, on $K$, the KL divergence and the squared $e$-distance are equivalent up to multiplicative constants.
\end{proposition}

\textbf{Remark.} In practice, minimizing the GPCA loss in \eqref{eq:V-training-loss} does not necessarily ensure that the reconstructions remain within a fixed compact subset of $\mc S_c^n$. Consequently the cross-entropy loss may decrease substantially while the Riemannian reconstruction error does not. Consequently, we address this issue by regularizing the GPCA objective to discourage reconstructed points from approaching the boundary.

\textbf{Geometric regularization.}
We introduce an explicit $e$-distance penalty and minimize
\begin{equation}\label{eq:loss-gpca-e_reg}
\mc L_{\lambda}(V, Z){=}  \frac{1}{Nn} \sum_{i=1}^{N} \sum_{j=1}^n D_{\psi^\ast}(x_{ij} , \partial \psi((V \odot Z_i)_j)) {+} \lambda \, \mathrm{d}_e^2(x_{ij}, \partial \psi((V \odot Z_i)_j)). \; \textbf{(GPCA Loss)}
\end{equation}
The parameter $\lambda \ge 0$ controls the trade-off between KL reconstruction error and Riemannian reconstruction error. This regularization provides improved control of the $e$-geodesic reconstruction on $\mc M$ relative to that on $\mc S_c^n$. Empirically, we find that minimizing the objective in \eqref{eq:loss-gpca-e_reg} significantly reduces the Riemannian reconstruction error while preserving small mean Hamming reconstruction error. We demonstrate this, together with further empirical comparisons among the previously introduced objectives, in Sections~\ref{sec:Experiments} and~\ref{sec:generation-metrics}.

\section{Flow matching on $\mc{U}$}\label{sec:Flow-Matching}

The goal of \textit{Flow Matching (FM)} \cite{Lipman:2023,liu2022rectified},  is to learn a time-dependent vector field which generates a probability path $p_t$ interpolating 
between a base distribution $p_0$ and the data distribution $p$. 
In the specific case when the data distribution is discrete, several works proposed
to work within the simplex geometry, following the Riemannian FM framework \cite{Chen:2023}. We adopt this approach but restrict flow matching to the GPCA subspace $\mc{U}$ and the induced data manifold $\mc{M}$, respectively (cf.~Figure \ref{fig:GPCA-approach}).

Specifically, we aim to learn a time-dependent vector field 
\begin{equation}\label{eq:fm-vf}
    u_t: \mc S_c^n \; \to \; T_0^n,
\end{equation}
which induces a flow via 
\begin{equation}
\frac{d}{dt} \phi_t(s) = u_t(\phi_t(s)), \quad \phi_0(s) = s,
\end{equation}
and generates the probability path through measure transport $p_t = (\phi_t)_\sharp p_0$ with constraints $p_{t=0} =p_0$ and $p_{t=1} = p$. 
We note that the base distribution is typically chosen to be a uniform distribution or a transformed Gaussian distribution on the simplex \cite{Stark:2024,Davis:2024,Boll:2024ab,Cheng:2024}.
Additionally, the time horizon can also be extended to $\R_{+}$.

Since neither $p_t$ nor $u_t$ are known in closed form, the FM framework proposes to use interpolants $\phi_t(\cdot| s_1) : \mc S_c^n \to \mc S_c^n$ \textit{conditioned} on the data samples and to minimize 
a conditional FM objective, to obtain an approximation of the vector field \eqref{eq:fm-vf}. Working on the simplex endowed with the $e$-geometry, we set as our primary goal to minimize  
\begin{equation}\label{eq:cfm-obj}
    \mc L^{(e)}_{\CFM}(w) = \EE_{t, s_0, s_1} \Big[ \| v_t(s_t; w) - \partial_t \phi^{(e)}_t(s_0| s_1) \|_{e,s_t}^2 \Big],
\end{equation}
with $t \sim \mrm{unif}(0,1)$, $s_0 \sim p_0$, $s_1 \sim p_1$, $\| \cdot \|_{e,s_t}$ being the norm \eqref{eq:induced-tangent-e} on the tangent space $T_{0}$, and $\phi_t^{(e)}$ being the geodesic interpolants in \eqref{eq:geodesic-interpolant-e}.
Our choice of conditional interpolants and the Riemannian structure allows
the following formulation in \textit{natural} parameters.

\begin{proposition}[CFM in natural parameters]\label{prop:cfm-objective-equivalence}
The CFM objective \eqref{eq:cfm-obj} is equivalent to
\begin{equation}\label{eq:cfm-objective-U}
    \mc L^{(e)}_{\CFM}(w) = \EE_{t, s_0, s_1} \Big[ \|  v^{\psi}_t(\theta_t; w) - (\theta_1 - \theta_0) \|^2 \Big],
\end{equation}
where $\theta_0$ and $\theta_1$ are the natural parameters corresponding to $s_0$ and $s_1$, $\theta_t = t\theta_1 + (1-t)\theta_0$, and
\begin{equation}\label{eq:alternative}
v_t^{\psi}(\theta_t; w) \coloneqq d( \partial \psi^*)_{\partial \psi(\theta_t)}\big(v_t(\partial \psi(\theta_t); w)\big).
\end{equation}
\end{proposition}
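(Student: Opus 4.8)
The plan is to pull the integrand of \eqref{eq:cfm-obj} back into the $\theta$-coordinate system via the differential $d(\partial\psi^*)_{s_t}$, exploiting that, by the very definition \eqref{eq:induced-tangent-e} of the $e$-norm, this differential is a \emph{linear isometry} from $(T_0,\|\cdot\|_{e,s_t})$ onto $(\R^{n\times(c-1)},\|\cdot\|)$: indeed \eqref{eq:induced-tangent-e} reads exactly $\|Q\|_{e,s_t}=\|d(\partial\psi^*)_{s_t}(Q)\|$. First I would invoke Proposition~\ref{prop:isom} to record that the conditional $e$-geodesic interpolant satisfies $s_t=\phi_t^{(e)}(s_0\mid s_1)=\partial\psi(\theta_t)$ with $\theta_t=(1-t)\theta_0+t\theta_1$, so that the base point at which the norm is evaluated is precisely $\partial\psi(\theta_t)$, matching the base point appearing in the definition \eqref{eq:alternative} of $v_t^{\psi}$.

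Next I would apply the linear map $d(\partial\psi^*)_{s_t}$ to the difference inside the norm. Using its linearity together with the isometry property above, the integrand becomes
\[
\|v_t(s_t;w)-\partial_t\phi_t^{(e)}\|_{e,s_t}^2=\|d(\partial\psi^*)_{s_t}(v_t(s_t;w))-d(\partial\psi^*)_{s_t}(\partial_t\phi_t^{(e)})\|^2 .
\]
The first image term is, by definition \eqref{eq:alternative}, exactly $v_t^{\psi}(\theta_t;w)$. For the second image term I would use the chain rule: since $\partial\psi^{*}=(\partial\psi)^{-1}$, one has $\partial\psi^{*}(\phi_t^{(e)})=\theta_t$ identically in $t$, and differentiating in $t$ yields $d(\partial\psi^*)_{s_t}(\partial_t\phi_t^{(e)})=\partial_t\theta_t=\theta_1-\theta_0$. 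Substituting both identifications and taking the expectation over $t,s_0,s_1$ produces \eqref{eq:cfm-objective-U}.

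The part that needs care is the bookkeeping of the two tangent-space representations: the learned field $v_t$ and the interpolant velocity $\partial_t\phi_t^{(e)}$ live in the $T_0$-representation, whereas $\theta_1-\theta_0$ and $v_t^{\psi}$ live in the $\theta$-coordinate representation $\R^{n\times(c-1)}$, and the whole argument hinges on $d(\partial\psi^*)_{s_t}$ being the correct norm-preserving change between them. I expect the only genuine subtlety to be justifying that $\partial_t\phi_t^{(e)}$ is indeed the $T_0$-velocity to which the chain-rule identity applies, i.e.\ that $t\mapsto\phi_t^{(e)}$ is differentiable with velocity in $T_0$ and that post-composition with the smooth chart $\partial\psi^{*}$ is legitimate; this is guaranteed by the explicit closed form \eqref{eq:e-geodesic} and the smoothness of $\psi$ on the interior of the simplex. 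Everything else reduces to linear algebra and the defining relation $\partial\psi^{*}=(\partial\psi)^{-1}$.
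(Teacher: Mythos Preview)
Your proposal is correct and follows essentially the same route as the paper's proof: both arguments use the defining isometry $\|Q\|_{e,s_t}=\|d(\partial\psi^*)_{s_t}(Q)\|$, the linearity of $d(\partial\psi^*)_{s_t}$, and the chain-rule identity $d(\partial\psi^*)_{s_t}(\partial_t\phi_t^{(e)})=\partial_t\theta_t=\theta_1-\theta_0$ to transform the integrand of \eqref{eq:cfm-obj} into the Euclidean form \eqref{eq:cfm-objective-U}. Your discussion of the bookkeeping between the $T_0$- and $\theta$-representations is, if anything, slightly more explicit than the paper's.
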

The objective function \eqref{eq:cfm-objective-U} shows that flow matching with respect to the $e$-geometry of the simplex is performed via \textit{straight line interpolation} in the space of natural parameters, similarly to the \textit{rectified flow} framework from \cite{liu2022rectified}.

\begin{remark}[computational savings]
We note that the mapping $ \mc{Z} \xrightarrow{V} \mc{U} \xrightarrow{\partial \psi} \mc{S}_c^{n}$ allows us to pull back the metric $g_e$ from $\mc{S}_{c}^{n}$ to $\mc{Z}$. The resulting scalar product on $\mc{Z}$ is given by 
$\la Vz, Vz' \ra$, for all $z,z'$ in the tangent bundle $T\mc{Z}$. We denote the corresponding norm function by $\|\cdot \|_{V}$.
The CFM objective in $\mc Z$ then reads 
\begin{equation}\label{eq:cfm-objective-Z}
    \mc L_{\CFM}^{\mc Z}(w_z) = \EE_{t, z_0, z_1} \Big[ \| v_t(z_t; w) - (z_1- z_0)  \|_{V}^2 \Big]
= \EE_{t, z_0, z_1} \Big[ \|  V v_t(z_t; w) - ( \wh \theta_1 - \wh \theta_0 )  \|^2 \Big].
\end{equation}
If we choose $p_z = \mc N(0, I_d)$, then we obtain the same objective as in \eqref{eq:cfm-objective-U}, because the transformed distribution is a normal distribution in $\mc{U} \subset T_0^n$. 
This allows for significant computational savings through lower-dimensional representations and smaller network architectures, since the optimization can be entirely performed 
in $\mc Z$.
\end{remark}

\begin{remark}[$e$-metric versus Fisher-Rao metric]
    To the best of our knowledge, \textit{this is the first work} to propose conditional flow matching on the simplex while minimizing a norm defined in the vector space of natural parameters. This choice distinguishes our approach from existing geometry-based flow matching methods on the simplex \cite{Davis:2024,Boll:2025aa}, which rely on the Fisher-Rao metric.
    Our objective instead uses the $e$-metric and its associated norm $\|\cdot\|_{e}$, which are naturally compatible with the $e$-connection. 
\end{remark}

\begin{corollary}[approximate flow matching]\label{cor:cfm-approximation-ds}
    Let $p_0,p_1 \in \mc{P}(\mc{S}_{c}^{n})$ denote two probability distributions on $\mc{S}_{c}^{n}$. If for the reconstruction error in the Riemannian distance hold
    \begin{align} \label{eq:assumption-approximation}
        \mathrm{d}_{e}(s_i, \wh s_i)\leq \sqrt{\varepsilon}, \quad s_i \sim p_i,  \; \wh s_i = \Pi_{\mc{M}}(s_i), \; i=0,1, 
    \end{align}
    then it also holds
    \begin{multline}
    \EE_{t, s_0, s_1} \Big[ \| v_t(s_t; w) - \partial_t \phi^{(e)}_t(s_0| s_1) \|_{e,s_t}^2 \Big] \\
    \le (1+ \sqrt{\veps})\EE_{t, \wh s_0, \wh s_1} \Big[ \| v_t(\wh s_t; w) - \partial_t \phi^{(e)}_t(\wh s_0| \wh s_1) \|_{e,\wh s_t}^2 \Big]
    + 4 \varepsilon + 4 \sqrt{\varepsilon},
    \end{multline}
    where $t \sim \mrm{unif}(0,1),s_0 \sim p_0,s_1 \sim p_1$ and 
    ${\wh s_0 \sim (\Pi_{\mc{M}})_{\sharp} p_0},\wh s_1 \sim (\Pi_{\mc{M}})_{\sharp} p_1$.
\end{corollary}

Corollary~\ref{cor:cfm-approximation-ds} and Proposition~\ref{prop:cfm-objective-equivalence} motivate performing flow matching directly on the latent spaces $\mc{M}$ and $\mc U$, respectively. 
Accordingly, we choose as target distribution $\wh p_1 = (\Pi_{\mc{M}})_\sharp p_1$, i.e.~the Bregman projection of the data distribution $p_1$ onto the GPCA subspace $\mc{M}$. This ensures that the natural parameter approximations $\wh \theta$ lie in the subspace $\mc{M}$. 
In practice, we address assumption~\eqref{eq:assumption-approximation} for $i=1$ when minimizing the GPCA objective \eqref{eq:loss-gpca-e_reg}. We refer to Section~\ref{sec:Experiments} and Appendix~\ref{appendix:gpca-empirical-performance} for empirical examples. We address the assumption for $i=0$ by choosing the reference distribution to be supported on $\mc M \subset \mc S_c^n$. 

We refer to Appendix \ref{sec:app-proofs-fm} for the proofs.

\section{Related work and discussion}
\label{sec:Related-Work-Discussion}


We distinguish between \textit{prior} and \textit{recent and current} related work, briefly reporting both in order to position our contribution and to point out limitations and perspectives.

\textbf{Prior related work.} 
A major component of our approach, viz.~employing \textit{geometric} PCA in the canonical (exponential) parameter space of discrete distributions, has been motivated by a precursor,  namely
\textit{generalized} PCA, introduced by \cite{Collins:2001aa}. 
The basic idea is to regard the squared Euclidean norm, which is used as the objective function for Euclidean data compression in classical principal component analysis (PCA) \cite{Jolliffe:2002aa}, as a special divergence function \cite{Basseville:2013aa} and to replace it with a general Bregman divergence induced by the log-partition function of a regular, minimally represented exponential family of distributions 
\cite{Brown:1986vy}, \cite{Barndorff-Nielsen:1978aa}. 
From a \textit{geometric} viewpoint, Bregman divergences play a prominent role in the general theory of contrast functions \cite{Matumoto:1993aa,Zhang:2004aa}, in particular in connection with statistical manifolds and information geometry \cite{Lauritzen:1987aa,Amari:2000aa,Ay:2017aa}.
\\[0.1cm]
Historically, the set-up of GPCA relates through the notion of a `link function' associated with the generalized linear models of classical statistics for regression \cite{McCullagh:1989aa,Critchley:1994aa,Warmuth:2001aa} and applications to online learning \cite{Kivinen:1997aa,Mahony:2001aa}. The gradients of the aforementioned log-partition functions of exponential families of distributions provide a set of canonical link functions.
\textbf{Our approach} elaborates this idea for joint distributions of \textit{discrete} high-dimensional data and combines it with a \textit{geometric} latent space suited for generative modeling (see Sections \ref{sec:DD-GPCA} and \ref{sec:Riemann-Isometry}).


\textbf{Recent and current related work.} 
 Our paper has been motivated by the \textit{information-geometric} approach of \cite{Boll:2024ab,Boll:2025aa} for training generative models of discrete data by flow matching on statistical product manifolds of categorical distributions with full support, equipped with the canonical Fisher-Rao metric and the e-connection of information geometry. This set-up provides a specific novel instance of the general geometric approach \cite{Chen:2023} to flow matching \cite{Lipman:2023,liu2022rectified,Albergo:2025aa}. The recent works \cite{Cheng:2024,Cheng:2025} also fall into this category. In addition, the latter paper studies various $\alpha$-connections besides the e-connection corresponding to $\alpha=1$. 
A common limitation of these approaches is that they do not explicitly employ latent subspaces. In particular, \citep[pp. 10]{Cheng:2024} point out the fixed input size and the independence assumption between classes as limitations. \textbf{Our GPCA subspace approach} addresses these issues in three respects: it enables efficient encoding and compression of discrete data, represents \textit{non}-factorizing discrete joint distributions and hence statistical dependencies, and carries a natural Riemannian geometry that isometrically relates latent and data spaces. See Sections \ref{sec:Experiments} and \ref{appendix:gpca-empirical-performance} for further analysis and empirical results. 

Regarding \textit{latent space models}, we point out that our encoding map (cf.~Figure \ref{fig:GPCA-approach}) is \textit{directly} given by information geometry and the geometric representation of discrete joint distributions as statistical manifolds. 
This stands in contrast to encoder maps that are parametrized and \textit{learned} by a network in more general \textit{non}-discrete scenarios like, e.g., in auto-encoder architectures \cite{Dao:2023,samaddar2025efficient}. However, despite the \textit{absence} of a \textit{pre-trained} encoding map and hence using \textit{purely geometric} and \textit{unsupervised} latent space embeddings in our approach (in particular, \underline{no} encoder training is used), the resulting data representations 
appear to be natural and semantically meaningful; c.f. Figures~\ref{fig:MNIST-GPCA-embdding-2D} and \ref{fig:MNIST-GPCA-embdding-2D}

Research on \textit{generative models} for \textit{discrete} data has become very active recently, e.g.~\cite{Boll:2024ab,Boll:2025aa,Davis:2024,Cheng:2024,Cheng:2025,Williams:2025}. We cannot provide a comprehensive review here. The papers by \cite{Boll:2024ab,Boll:2025aa,Cheng:2024,Cheng:2025} have been addressed above. 
An interesting \textit{alternative geometric} approach has been proposed recently in the paper \cite{Williams:2025}. The authors employ Aitchison's geometry known from the subarea of statistical research concerned with compositional data \cite{Aitchinson:1982vt,Filzmoser:2018vq} and point out as major ingredient of their approach a bijection between the probability simplex and a Euclidean domain, which enables to model and work in the latter space. In this context, we note that information geometry provides a \textit{dually flat structure} for \textit{any} statistical manifold \cite{Amari:2000aa}, including the statistical manifold of discrete distributions utilized in our work. 
A detailed comparison with compositional data analysis is given by \cite{Erb:2021aa}, who emphasize the invariance properties of the Fisher information metric and the role of Bregman divergences and the Pythagorean theorem of information geometry.



\textbf{Perspective.} From a \textit{broader} perspective on generative models which also includes deep diffusion models, there seems to emerge currently a confluence of concepts related to such \textit{consistency models} \cite{Albergo:2025aa,Boffi:2025aa}, which may provide a unifying viewpoint on accelerated generative modeling. A closer inspection of this development and working out the consequences for our \textit{geometric discrete} generative model defines an attractive avenue for future research.

\section{Experiments and limitations} 
\label{sec:Experiments}

We compare our approach which minimizes the GPCA loss in \eqref{eq:loss-gpca-e_reg}, with the negative log-likelihood (NLL)
loss in \eqref{eq:V-training-loss}, and the cross-entropy loss, for data represented in $\mc X^n$ and $\mc S_c^n$ respectively. 
We evaluate these methods on the following datasets: \textbf{(1)} synthetic data with large number of classes $c=92$ and small number of nodes $n=2$ \textbf{(2)} MNIST~\cite{Lecun:2010} and FashionMNIST~\cite{Xiao:2017} dataset, each with $N=6 \cdot 10^4$ training samples, $n=32 \cdot 32$ pixels (nodes), and $c=2$ classes
\textbf{(3)} Cityscapes~\cite{Cordts:2016} with $N=2975$ training samples, reshaped to $n=256\times 256$ nodes and $c=8$ classes,
\textbf{(4)} a \textsc{promoter} DNA sequence dataset with $N = 100000$, $n=1024$ and $c=4$, and \textbf{(5)} two \textsc{enhancer} DNA sequence datasets, with $N = 104665$ (\textsc{FlyBrain}) / $N = 88870$ (\textsc{Melanoma}), $n=500$ and $c=4$. For the DNA datasets we followed the methodology from \cite{avdeyev2023dirichlet,Stark:2024,Davis:2024}, where we refer to for more context.

\subsection{Reconstruction error}

We evaluate the reconstruction error in terms of the mean Hamming distance (MHD) between data points and their reconstructions $\wh s = \partial \psi(\Pi_{\mc M}(x))$, cf. Eq. \eqref{eq:theta-v} and \eqref{eq:hamming-distance}.
The MHD measures the fraction of mislabeled nodes. In addition, we compare this
quantity with the Riemannian distance, or $e$-distance, defined in
\eqref{eq:e-RiemannDistance}.

Across all experiments, we observe that, for a suitable regularization parameter $\lambda>0$ in \eqref{eq:loss-gpca-e_reg}, the proposed GPCA objective decreases consistently with the mean Hamming distance. In particular, it achieves \textbf{zero reconstruction error} on the full dataset for relatively small latent dimensions $d$. Due to space limitations, we defer visualizations, optimization details, and extended empirical comparisons across objectives and datasets to Appendices~\ref{sec:time-complexity} and \ref{appendix:gpca-empirical-performance}.

\paragraph{Trade-off between reconstruction error and Riemannian distance.}
Although the GPCA-based approaches empirically achieve very low reconstruction error in terms of mislabeled nodes, the corresponding Riemannian distances do not necessarily decrease at the same rate. Importantly, we observe the following trends. First, the gradients of the GPCA objectives decrease smoothly across all three experimental settings. Second, the Riemannian distance between the reconstructions and the data is substantially reduced when the data are represented in $\mc S_c^n$. Third, our formulation enables explicit control of the trade-off between small Hamming distances and small Riemannian distances.

This trade-off is relevant because small Riemannian distances are required for
the guarantees on geodesic interpolants in
Proposition~\ref{prop:geod-approx} and for the corresponding flow-matching
objective in Corollary~\ref{cor:cfm-approximation-ds}. We provide further
details and extended comparisons on the aforementioned datasets in
Appendix~\ref{appendix:gpca-empirical-performance}. As a representative example,
Table~\ref{tab:cityscapes-combined} reports results on Cityscapes when varying
the latent dimension $d$ and the regularization parameter $\lambda$.


Overall, we observe that, for a fixed $\lambda$, increasing the latent dimension $d$ reduces the cross-entropy loss, the mean Hamming distance, and the Riemannian distance, as expected from the increased model capacity. In contrast, decreasing the regularization parameter $\lambda$ reduces the mean Hamming distance and the GPCA loss, but increases the Riemannian distance. Due to space limitations, we refer to Appendix~\ref{appendix:gpca-empirical-performance} for extended experiments. 

\vspace{-3mm}\begin{table}[H]
    \centering
    \caption{Cityscapes reconstruction metrics for varying latent dimension and regularization parameter.}
    \label{tab:cityscapes-combined}
    \small
    \setlength{\tabcolsep}{3pt}
    \begin{tabular}{lcccccccc}
        \toprule
        & \multicolumn{4}{c}{Varying $\boldsymbol{d}$ with fixed $\boldsymbol{\lambda=10^{-3}}$}
        & \multicolumn{4}{c}{Varying $\boldsymbol{\lambda}$ with fixed $\boldsymbol{d=256}$} \\
        \cmidrule(r){2-5} \cmidrule(l){6-9}
        & $\boldsymbol{d=256}$ & $\boldsymbol{d=512}$ & $\boldsymbol{d=1024}$ & $\boldsymbol{d=2048}$
        & $\boldsymbol{\lambda=1}$ & $\boldsymbol{\lambda=0.01}$ & $\boldsymbol{\lambda=0.001}$ & $\boldsymbol{\lambda=0}$ \\
        \midrule
        \textbf{Cross-Entropy} & $0.119$ & $0.087$ & $0.074$ & $0.0689$
                      & $5.133$ & $0.207$ & $0.119$ & $0.099$ \\
        \textbf{MHD}       & $0.0004$ & $0.00$ & $0.00$ & $0.00$
                      & $0.144$ & $0.012$ & $0.0004$ & $0$ \\
        \textbf{e-Distance}  & $18.204$ & $8.418$ & $3.286$ & $0.868$
                      & $4.91$ & $7.24$ & $18.20$ & $58.23$ \\
        \bottomrule
    \end{tabular}
\end{table}


\subsection{Generation quality}
We visualize samples generated by the FM model from Section~\ref{sec:Flow-Matching} in Figure~\ref{fig:fm-toy-datasets}. For MNIST, FashionMNIST, and Cityscapes, we minimize the objective in \eqref{eq:cfm-objective-U}. For the synthetic data we minimize \eqref{eq:cfm-objective-Z}. Further GPCA optimization details are provided in Appendix~\ref{appendix:gpca-empirical-performance}.\\
In Section~\ref{sec:generation-metrics}, we further provide a qualitative evaluation of the generated samples for different latent dimensions. Figures~\ref{fig:mnist-generation-novelty}--\ref{fig:cityscapes-generation-novelty} show that the generated samples are novel.
Table~\ref{tab:dna-fn} in Appendix~\ref{appendix:DNA} reports conditional flow matching generation quality of our approach on the \textsc{enhancer} datasets, measured in terms   of the \textit{Fr\'{e}chet biological distance (FBD)} from \cite{Stark:2024}. Notably, our method is rather competitive in the FBD metric with the baseline method \textsc{fisher-flow} \cite{Davis:2024}, showing that the GPCA does not break down immediately if its assumptions are violated, as discussed in Subsection~\ref{sec:limitations}.

\begin{figure}[H]
    \centering
    \captionsetup[subfigure]{labelformat=parens,labelfont=small,textfont=small,skip=1pt}
    \begin{subfigure}[t]{0.22\textwidth}
        \centering
        {\setlength{\tabcolsep}{0pt}
        \begin{tabular}{@{}cccccc@{}}
            \includegraphics[width=0.166\linewidth]{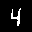} &
            \includegraphics[width=0.166\linewidth]{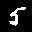} &
            \includegraphics[width=0.166\linewidth]{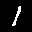} &
            \includegraphics[width=0.166\linewidth]{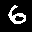} &
            \includegraphics[width=0.166\linewidth]{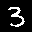} &
            \includegraphics[width=0.166\linewidth]{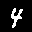} \\[-1.0ex]
            \includegraphics[width=0.166\linewidth]{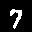} &
            \includegraphics[width=0.166\linewidth]{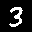} &
            \includegraphics[width=0.166\linewidth]{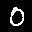} &
            \includegraphics[width=0.166\linewidth]{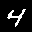} &
            \includegraphics[width=0.166\linewidth]{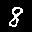} &
            \includegraphics[width=0.166\linewidth]{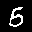} \\[-1.0ex]
            \includegraphics[width=0.166\linewidth]{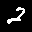} &
            \includegraphics[width=0.166\linewidth]{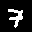} &
            \includegraphics[width=0.166\linewidth]{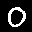} &
            \includegraphics[width=0.166\linewidth]{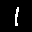} &
            \includegraphics[width=0.166\linewidth]{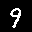} &
            \includegraphics[width=0.166\linewidth]{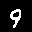} \\[-1.0ex]
            \includegraphics[width=0.166\linewidth]{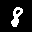} &
            \includegraphics[width=0.166\linewidth]{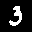} &
            \includegraphics[width=0.166\linewidth]{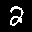} &
            \includegraphics[width=0.166\linewidth]{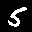} &
            \includegraphics[width=0.166\linewidth]{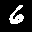} &
            \includegraphics[width=0.166\linewidth]{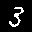} \\[-1.0ex]
            \includegraphics[width=0.166\linewidth]{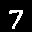} &
            \includegraphics[width=0.166\linewidth]{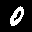} &
            \includegraphics[width=0.166\linewidth]{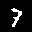} &
            \includegraphics[width=0.166\linewidth]{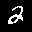} &
            \includegraphics[width=0.166\linewidth]{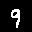} &
            \includegraphics[width=0.166\linewidth]{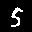} \\[-1.0ex]
            \includegraphics[width=0.166\linewidth]{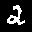} &
            \includegraphics[width=0.166\linewidth]{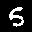} &
            \includegraphics[width=0.166\linewidth]{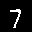} &
            \includegraphics[width=0.166\linewidth]{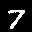} &
            \includegraphics[width=0.166\linewidth]{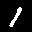} &
            \includegraphics[width=0.166\linewidth]{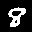}
        \end{tabular}}
        \vspace{-0.25em}
        \subcaption{}
        \label{fig:fm-four-gaussians}
    \end{subfigure}\hfill
    \begin{subfigure}[t]{0.22\textwidth}
        \centering
        {\setlength{\tabcolsep}{0pt}
        \begin{tabular}{@{}cccccc@{}}
            \includegraphics[width=0.166\linewidth]{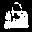} &
            \includegraphics[width=0.166\linewidth]{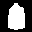} &
            \includegraphics[width=0.166\linewidth]{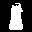} &
            \includegraphics[width=0.166\linewidth]{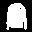} &
            \includegraphics[width=0.166\linewidth]{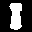} &
            \includegraphics[width=0.166\linewidth]{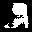} \\[-1.0ex]
            \includegraphics[width=0.166\linewidth]{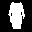} &
            \includegraphics[width=0.166\linewidth]{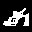} &
            \includegraphics[width=0.166\linewidth]{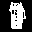} &
            \includegraphics[width=0.166\linewidth]{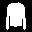} &
            \includegraphics[width=0.166\linewidth]{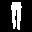} &
            \includegraphics[width=0.166\linewidth]{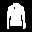} \\[-1.0ex]
            \includegraphics[width=0.166\linewidth]{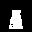} &
            \includegraphics[width=0.166\linewidth]{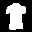} &
            \includegraphics[width=0.166\linewidth]{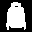} &
            \includegraphics[width=0.166\linewidth]{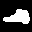} &
            \includegraphics[width=0.166\linewidth]{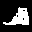} &
            \includegraphics[width=0.166\linewidth]{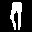} \\[-1.0ex]
            \includegraphics[width=0.166\linewidth]{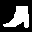} &
            \includegraphics[width=0.166\linewidth]{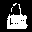} &
            \includegraphics[width=0.166\linewidth]{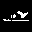} &
            \includegraphics[width=0.166\linewidth]{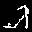} &
            \includegraphics[width=0.166\linewidth]{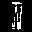} &
            \includegraphics[width=0.166\linewidth]{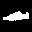} \\[-1.0ex]
            \includegraphics[width=0.166\linewidth]{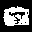} &
            \includegraphics[width=0.166\linewidth]{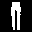} &
            \includegraphics[width=0.166\linewidth]{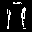} &
            \includegraphics[width=0.166\linewidth]{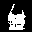} &
            \includegraphics[width=0.166\linewidth]{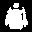} &
            \includegraphics[width=0.166\linewidth]{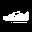} \\[-1.0ex]
            \includegraphics[width=0.166\linewidth]{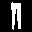} &
            \includegraphics[width=0.166\linewidth]{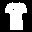} &
            \includegraphics[width=0.166\linewidth]{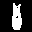} &
            \includegraphics[width=0.166\linewidth]{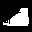} &
            \includegraphics[width=0.166\linewidth]{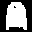} &
            \includegraphics[width=0.166\linewidth]{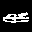}
        \end{tabular}}
        \vspace{-0.25em}
        \subcaption{}
        \label{fig:fm-pinwheel}
    \end{subfigure}\hfill\hspace{0.01\textwidth}
    \begin{subfigure}[t]{0.24\textwidth}
        \centering
        {\setlength{\tabcolsep}{1pt}
        \begin{tabular}{@{}cc@{}}
            \includegraphics[width=0.48\linewidth]{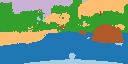} &
            \includegraphics[width=0.48\linewidth]{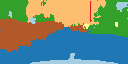} \\
            \includegraphics[width=0.48\linewidth]{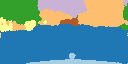} &
            \includegraphics[width=0.48\linewidth]{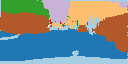} \\
            \includegraphics[width=0.48\linewidth]{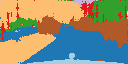} &
            \includegraphics[width=0.48\linewidth]{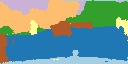}
        \end{tabular}}
        \vspace{0.1em}
        \subcaption{}
        \label{fig:fm-two-moons}
    \end{subfigure}\hfill
    \begin{subfigure}[t]{0.22\textwidth}
        \centering
        \begin{tabular}{@{}cc@{}}
            \includegraphics[width=0.47\linewidth]{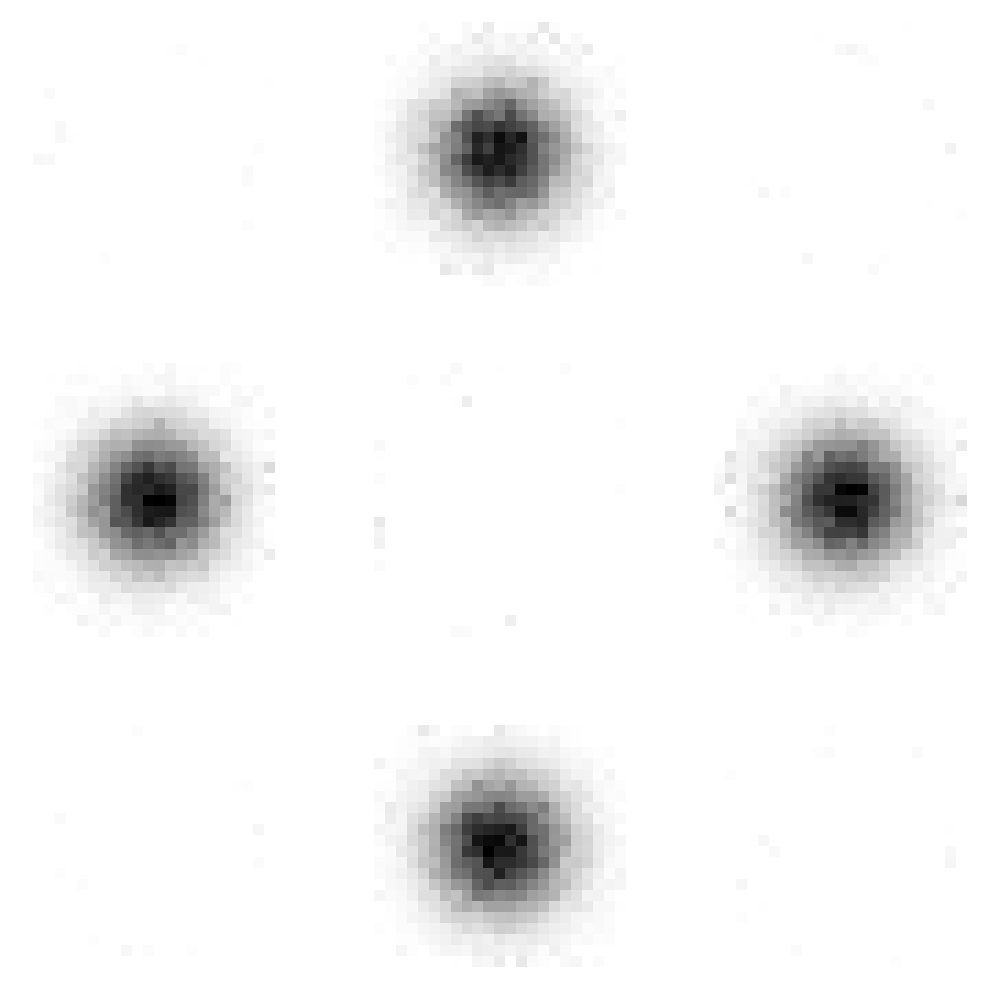} &
            \includegraphics[width=0.47\linewidth]{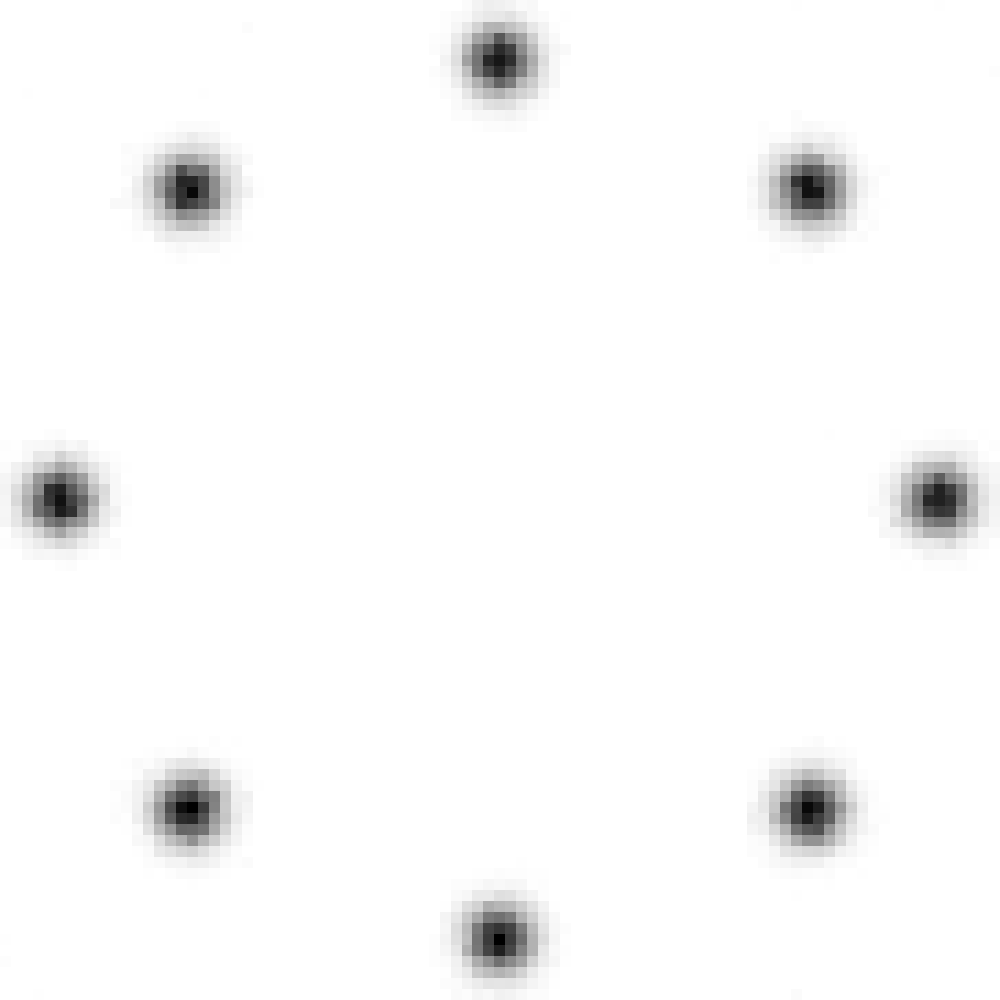} \\
            \includegraphics[width=0.47\linewidth]{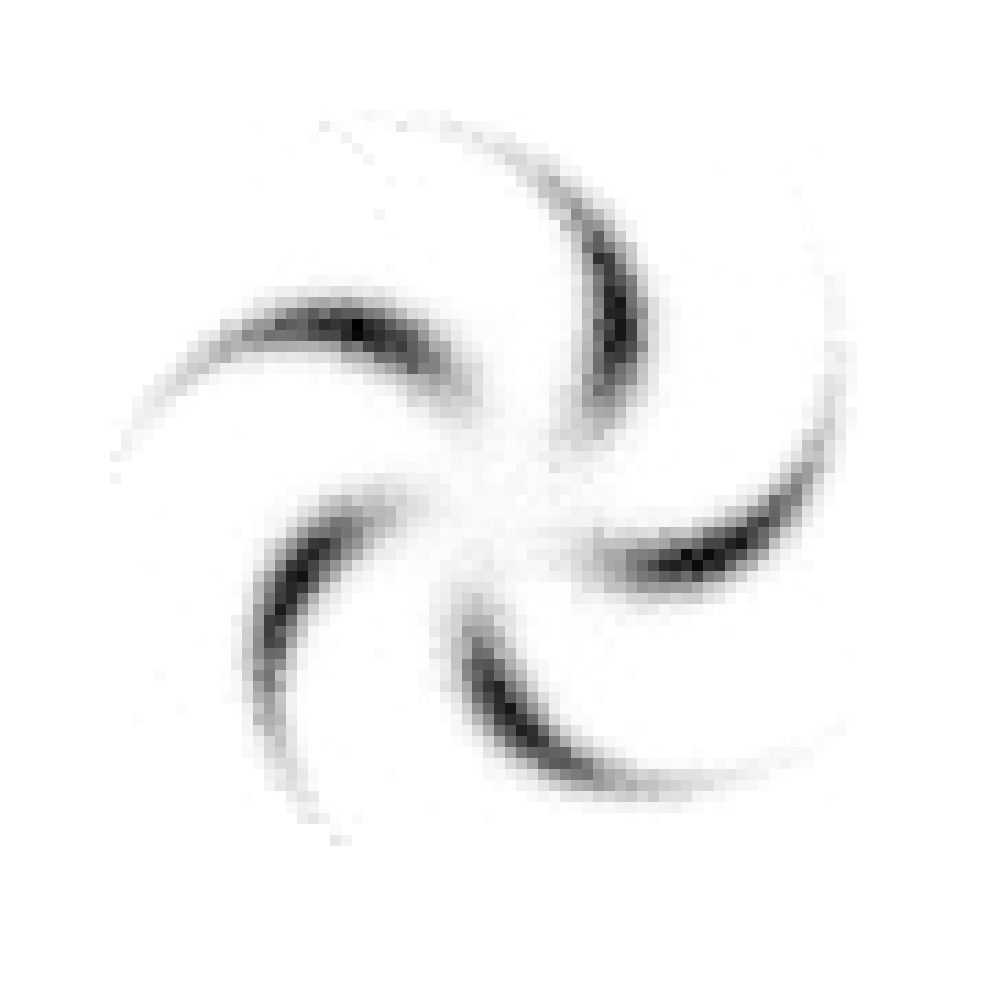} &
            \includegraphics[width=0.47\linewidth]{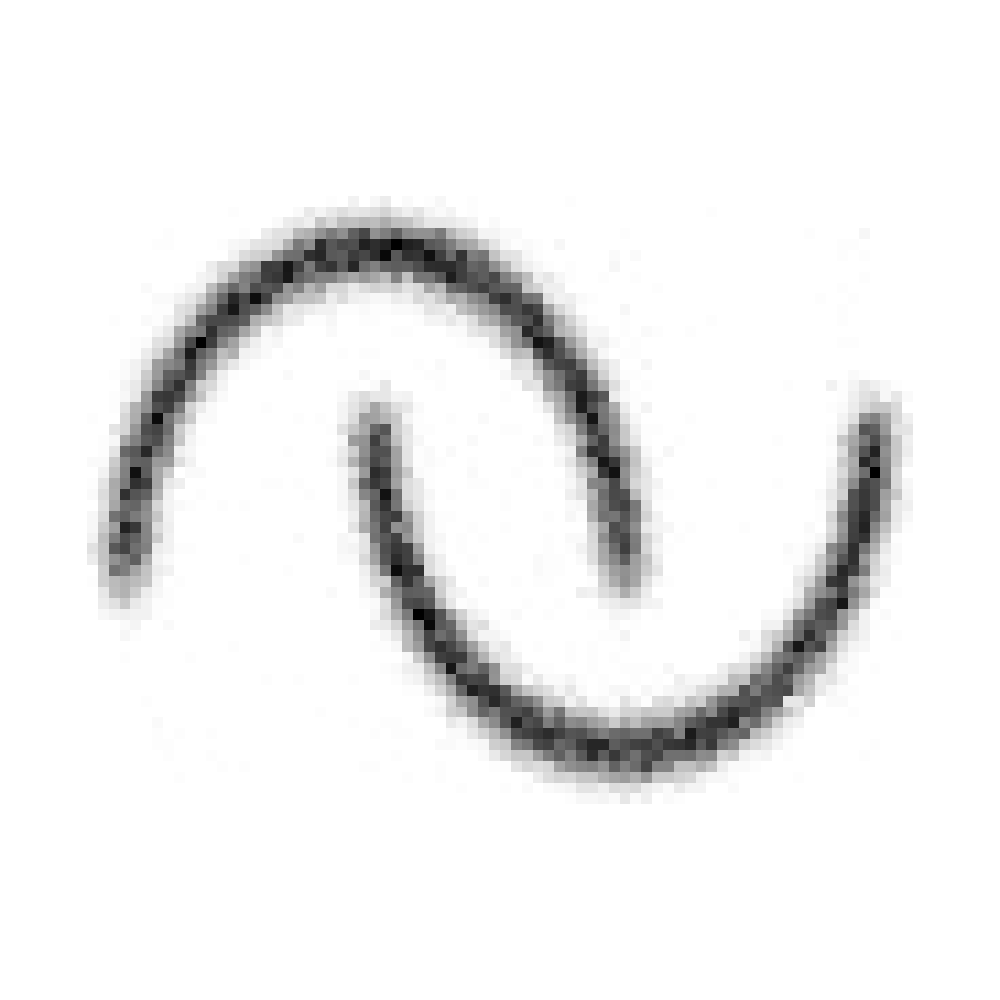}
        \end{tabular}
        \vspace{-0.5em}
        \subcaption{}
        \label{fig:fm-gaussian-mixture}
    \end{subfigure}
    \caption{\textbf{Samples of FM model described in Section~\ref{sec:Flow-Matching}}. Generated samples for (a) MNIST with $d=32$, $\lambda=10^{-2}$, (b) Binarized FashionMNIST with $d=64$, $\lambda=10^{-2}$, (c) Cityscapes with $d=512$, $\lambda=10^{-3}$, and (d) the empirical joint distribution with $d=16$, $\lambda=10^{-3}$.}
    \label{fig:fm-toy-datasets}
\end{figure}
\vspace{-4mm}

\subsection{Flexible applicability and computational efficiency}
A key computational and practical advantage of our GPCA-based approach is that, once the GPCA representation is learned, FM training can be carried out in a unified framework across datasets using the same network architecture and training configurations. While the architecture is a design choice, the FM formulation itself remains dataset-independent, which promotes implementation simplicity and reusability. Moreover, the FM objective in \eqref{eq:cfm-objective-Z} provides numerical advantages, since the network is trained in the low-dimensional coordinate space $\mc Z$ rather than the high-dimensional data space $\mc S_c^n$. We report GPCA compute and memory requirements in Appendix~\ref{sec:time-complexity}.

\subsection{Limitations}\label{sec:limitations}

A current limitation of our approach is its dependence on the compressibility of the training data. While the reconstruction errors are moderate on the datasets considered above, we observed substantially weaker compression on other datasets, such as \textsc{promoter} and \textsc{enhancer} DNA datasets
, cf. Table~\ref{tab:kl-dna-edistance}. We observe for all three datasets that low $e$-metric and low Hamming distance are not reconcilable, even at  relatively small compression factors of 8 and 4 respectively. \\
Another limitation is the choice of the regularization parameter $\lambda$ in \eqref{eq:loss-gpca-e_reg}, which controls the trade-off between reconstruction accuracy and small Riemannian distances. In our experiments, $\lambda$ is chosen empirically. Developing a systematic selection criterion, possibly adapted to the dataset and target reconstruction error, is left for future work.

\vspace{-4mm}
\begin{table}[H]
    \centering
    \caption{\textbf{GPCA Results for DNA datasets.} Comparison of cross-entropy, mean Hamming distance and the squared $e$-Distance for the \textsc{promoter} and \textsc{enhancer} DNA datasets for different values of $\boldsymbol{\lambda}$, for fixed latent dimension $\boldsymbol{d=500}$. These results show that the DNA datasets are not consistently compressible with the GPCA method.}
    \label{tab:kl-dna-edistance}
    \footnotesize 
	  \makebox[\linewidth][c]{%
    \begin{tabular}{lccccccccc}
        \toprule
         &  & \textbf{\textsc{promoter}} &
        &  &  \textbf{\textsc{FlyBrain}}  & & & \textbf{\textsc{melanoma}} & \\
         & $\boldsymbol{\lambda=0.1}$ & $\boldsymbol{\lambda=0.01}$ & $\boldsymbol{\lambda=0.001}$
        & $\boldsymbol{\lambda=0.1}$ & $\boldsymbol{\lambda=0.01}$ & $\boldsymbol{\lambda=0.001}$ & $\boldsymbol{\lambda=0.1}$ & $\boldsymbol{\lambda=0.01}$ & $\boldsymbol{\lambda=0.001}$ \\
        \midrule
        \textbf{Cross-Entropy}
        & \r{$0.9228$} & \r{$0.8602$} & \o{$0.7501$}
        & \o{$0.5643$} & \o{$0.4312$} & \g{$0.2507$} & \o{$0.5121$} & \o{$0.2927$} & \o{$0.2305$} \\
        \textbf{MHD}
        & \r{$0.3763$} & \r{$0.3560$} & \r{$0.2850$}
        & \o{$0.1951$} & \g{$0.0100$} & \g{$0.0100$} &  \o{$0.1610$} & \g{$0.0342$} & \g{$0.0100$} \\
        \textbf{$\boldsymbol{e}$-Distance}
        & \o{$19.6395$} & \o{$22.8195$} & \r{$65.6022$}
        & \o{$15.2039$} & \o{$21.2162$} & \r{$90.4859$} & \o{$14.8774$} & \r{$42.6788$} & \r{$86.3596$}\\
        \bottomrule
     \end{tabular}}
\end{table}

\section{Conclusion}
\label{sec:Conclusion}

We introduced a novel generative model based on a geometric latent subspace that induces a nonlinear manifold for representing discrete data.  The model enables discrete data compression and efficient model training by flow matching in the lower-dimensional subspace. Our future work will explore consistency models for noise/data samples in the GPCA subspace and the analysis of discrete data via the isometry between the latent and data domains.

\section*{Acknowledgements}
This work is funded by the Deutsche Forschungsgemeinschaft (DFG) under Germany’s Excellence Strategy EXC-2181/1 - 390900948 (the Heidelberg STRUCTURES Excellence Cluster). This work was funded by the
Deutsche Forschungsgemeinschaft (DFG), grant SCHN 457/17-2, within the priority programme SPP 2298: Theoretical Foundations of Deep Learning.

\clearpage

\medskip


\appendix
\section{Basic notation}\label{appendix:notation}

\noindent
\textbf{Spaces}
\begin{align*}
\mc{L} :=\; 
&\{1,2,\dotsc,c\} =: [c],\quad c\in\N
&
&\text{label set}
\\
\mc{L}^{n} :=\; 
&[c]^{n} 
&
&\text{space of discrete data}
\\
\mc{X} :=\; 
&\{e_{1},\dotsc, e_{c}\}\subset\{0,1\}^{c} 
&
&\text{one-hot encoding of $\mc{L}$}
\\
\mc{X}^{n} :=\; 
&\mc{X}\times\dotsb\times\mc{X} 
&
&\text{one-hot encoding of $\mc{L}^{n}$}
\\
\mc{X}^{n}_{N}:=\; 
&\{x_{1},\dotsc,x_{N}\}\subset (\text{either} \;\mc{X}^{n} \text{ or } \mc{S}_{c}^{n})
&
&\text{given data, training set consisting of $N$ data points}
\\
\Delta_{c} :=\; 
&\{p\in\R^{c}_{+}\colon\la\eins_{c},p\ra=1\} 
&
&\text{probability simplex, space of distributions on $\mc{L}$}
\\
\mc{S}_{c} :=\; 
&\mrm{relint}(\Delta_{c})
= \{p\in\Delta_{c}\colon p_{i}>0,\;\forall i\in[c]\}
&
&\text{discrete distributions with full support}
\\
T_{0} :=\;
&T_0\mc{S}_{c} = \{v\in\R^{c}\colon\la\eins_{c},v\ra=0\}
&
&\text{tangent space of $\mc{S}_{c}$ in ambient coordinates}
\\
\eins_{\mc S_c} :=\; 
&\frac{1}{c}(1,1,\dotsc,1)^{\T}\in\mc S_c,
&
&\text{barycenter of $\mc{S}_c$}
\\
\mc{S}_{c}^{n} :=\; 
&\mc{S}_{c}\times\dotsb\times\mc{S}_{c}
&
&\text{product space of factorizing distributions on $\mc{L}^{n}$}
\\
T_{0}^{n} :=\;
& T_{0}\times\dotsb\times T_{0}
&
&\text{tangent space to $\mc{S}_{c}^{n}$ in ambient coordinates}
\\
\mc{P}(X)
&
&
&\text{space of probability distributions supported on a set $X$}
\end{align*}
\textbf{Parametrizations}
\begin{align*}
\theta_{s}\in\R^{c} \colon\; 
&\text{(see Eq. \eqref{eq:def-s-exp})}
&
&\text{exponential parameters of some distribution $s\in\mc{S}_{c}$}
\\
\psi(\theta_{s}) \colon\;
&\text{(see Eq. \eqref{eq:def-s-exp})}
&
&\text{log-partition function of $s\in\mc{S}_{c}$ corresponding to $\theta$}
\\
\psi^{\ast}(s) \colon\;
& 
&
&\text{convex conjugate of the log-partition function}
\\
\partial\psi \colon\;
&\theta_{s}\mapsto s=\partial\psi(\theta_{s})\in\mc{S}_{c}
&
&\text{decoder map}
\\
\partial\psi \colon\;
&\theta_{i}\mapsto s_{i}=(s_{i,1},\dotsc,s_{i,c})^{\T}\in\mc{S}_{c}^{n}
&
&\text{decoder map, defined by factorwise extension}
\\
\partial\psi^{\ast} :=\;
&\partial\psi^{-1}\colon\mc{S}_{c}\to\R^{c}
&
&\text{encoder map}
\\
\partial\psi^{\ast} \colon\;
&\mc{S}_{c}^{n}\to\R^{n\times c}
&
&\text{encoder map, defined by factorwise extension}
\end{align*}
\textbf{Latent subspace and approximation}
\begin{align*}
d \colon\;
& 
&
&\text{latent subspace dimension}
\\
\mc{U} \subset T_0^n: 
&
\text{(see Eq. \eqref{eq:def-mcU})}
&
&\text{latent subspace spanned by components $V^{\ell}$, $\ell\in[d]$}
\\
\mc{Z} :=\; 
&\R^{d}
&
&\text{latent coefficient space}
\\
\mc{M} :=\;
&\partial\psi(\mc{U}) \subset \mc{S}_c^n
\;(\text{see Eq.~\eqref{eq:def-mcM}})
&
&\text{nonlinear data manifold induced by the latent subspace $\mc{U}$}
\\
\Pi_\mc{M} :
& \text{(see Eq.~\ref{eq:bregman-projection})}
&
&\text{Bregman projection on } \mc M
\\
\wh{\theta}_{i} :=\;
& V \odot z_{i} \approx \theta_{i},\; z_{i}\in\mc{Z}
\;(\text{see Eq.~\eqref{eq:theta-v}})
&
&\text{latent subspace approximation for a data point $x_{i}$}
\\
\wh{s}_{i} :=\;
&\partial\psi(\wh{\theta}_{i})
\in\mc{S}_{c}^{n}
&
&\text{latent subspace approximation of $s_{i}$}
\end{align*}

\textbf{Flow matching and $e$-geometry}
\begin{align*}
& 
g_{e} 
\;(\text{see Def.~\ref{def:e-metric}})
&
&\text{$e$-metric on the simplex}
\\
& 
\mathrm{d}_{e} 
\;(\text{see Eq.~\ref{eq:e-RiemannDistance}})
&
&\text{Riemannian distance function associated $e$-metric}
\\
& 
\| \cdot \|_{e,\cdot} 
\;(\text{see Eq.~\eqref{eq:induced-tangent-e}})
&
&\text{$e$-norm on the tangent space $T_0$}
\\
& 
\phi^{(e)} 
\;(\text{see Eq.~\eqref{eq:geodesic-interpolant-e}})
&
&\text{$e$-geodesic interpolant on the simplex}
\\
& 
u_t: \mc S_c^n \; \to \; T_0\mc S_c^n
\;(\text{see Eq.~\eqref{eq:fm-vf}})
&
&\text{vector field for flow matching}
\\
& 
v_t(\cdot,w) 
&
&\text{vector field parametrized by the weights $w$, used to approximation in flow matching}
\\
& 
v_t^{\psi}(\cdot,w) 
\;(\text{see Eq.~\eqref{eq:alternative}})
&
&\text{pushforward of the vector field $v_t$ to the space of natural parameters via $\psi$}
\\
\end{align*}
\clearpage

\section{Proofs and additional details}\label{appendix:proofs}

\subsection{Proofs of Section~\ref{sec:Riemann-Isometry}} \label{sec:app-proofs-riemann-isom}
\begin{proof}[Proof of Proposition \ref{prop:e-connection}]
	See Proposition 1.9.1 in \cite{calin2014geometric}, where it is shown that the $1$-connection vanishes in the $\theta$-coordinate system, for any exponential family. This entails $\nabla^{e} = \nabla^{1}$, since the two connections are uniquely characterized by the property of vanishing in the $\theta$-coordinate system.
\end{proof}

\begin{proof}[Proof of Proposition \ref{prop:isom}]
	Note that $\partial \psi : \mc{U} \to \mc{S}_c^{n}$ is the restriction of the coordinate chart $\partial \psi : \R^{n \times (c-1)} \to \mc{S}_{c}^{n}$ to the subspace $\mc{U}$. We defined the metric $g_e$ by defining the chart $\partial \psi$ as an isometry. Thus, also its restriction is an isometry with the respective induced structures.
\end{proof}

\begin{proof}[Proof of Proposition \ref{prop:geod-approx}]
    We recall some explicit formulas and objects from Information Geometry \cite{Amari:2000aa,Ay:2017aa}.
    For any $s_0 \in \mc S_c^n$, the exponential map with respect to the e-connection $\nabla^{e}$ at $s_0$ is a diffeomorphism from $ T_0^n$ to $\mc S_c^n$ given by
    \begin{align}
        {\exp}_{s_0}^{(e)}(v) = \frac{ s_0 e^{\frac{v}{s_0}} } {\langle s_0, e^{\frac{v}{s_0}}  \rangle} , \quad v \in T_0^n, \quad \textbf{(exponential map)}
    \end{align}
    where the products and quotients are taken componentwise, i.e. for every $i\in [n]$. The logarithmic map is given by
    \begin{align}
    \log_{s_0}^{(e)}(s_1) = R_{s_0} \log\Big( \frac{s_1}{s_0} \Big), \quad s_1 \in \mc S_c^n, \quad \textbf{(logarithmic map)}
    \end{align}
    Where the replicator operator acts componentwise as well, i.e. for every $i\in [n]$,
    \begin{align}\label{eq:replicator-operator}
    \quad R_{s_{0;i}}= \Diag(s_{0;i}) - s_{0;i} s_{0;i}^\T . \quad \textbf{(replicator operator)}
    \end{align}
    We compute explicitly the relation 
    \begin{align}
    \exp^{(e)}_{s_0}(t\log^{(e)}_{s_0}(s_1)) = \exp_{s_0} (t \log_{s_0}(s_1)) ,
    \end{align}
    where the 
    \begin{align}\label{eq:lifting-map}
        \exp_{s_0}(v) = \frac{ s_0 e^{\frac{v}{s_0}} } {\langle s_0, e^{\frac{v}{s_0}}  \rangle} , \quad v \in T_0^n \quad \textbf{(lifting map)}
    \end{align}
    is lifting map which is a diffeomorphism at $s_0$ from $T_0^n$ onto $\mc S_c^n$, with inverse 
    \begin{align}\label{eq:lifting-inv-map}
    \log_{s_0}(s_1) = \Pi_0 \log\Big( \frac{s_1}{s_0} \Big), \quad s_1 \in \mc S_c^n.
    \end{align}
    Hence, using the relation $\exp_{s_0}(\Pi_0 v) = \exp_{s_0}(v)$, we compute
    \begin{align}
    \phi^{(e)}_t(s_0 \mid s_1) = \exp_{s_0}(t \log_{s_0}(s_1))  = \exp_{s_0}(\Pi_0 t \log\big( \frac{s_1}{s_0} \big)) 
    = \exp_{s_0}(t \log\big( \frac{s_1}{s_0} \big))
    \overset{\eqref{eq:lifting-map}}{=} \frac{s_0^{1-t}s_1^t}{\langle s_0, s_1^t \rangle}.
    \end{align}
    Hence we compute 
    \begin{subequations}\label{eq:partial-psi-conj-geodesic}
    \begin{align}
    \partial \psi^{*}(\phi^{(e)}_t(s_0 \mid s_1)) = \Pi_0 \log(\phi_t(s_0 \mid s_1)) &= \Pi_0 \log\Big( \frac{s_{0;i}^{1-t}s_{1;i}^t}{\langle s_{0;i}, s_{1;i}^t \rangle} \Big)_{i\in [n]} = \Pi_0 \log(s_0^{1-t}s_1^t) \\ &= \Pi_0 \log(s_0^{1-t}) + \Pi_0 \log(s_1^t)\\ &= (1-t)\Pi_0 \log(s_0)+ t \Pi_0 \log(s_1) \\ &= (1-t)\partial \psi^{*}(s_0) + t \partial \psi^{*}(s_1).
   \end{align}
    \end{subequations}

    Writing 
    \begin{align}
    \theta_i = \partial \psi^{*}(s_i), \quad \wh \theta_i = \partial \psi^{*}(\wh s_i), \quad i=0,1,
    \end{align}
    the assumption reads
    \begin{align}\label{eq:ass-geod-approx}
        \| \theta_i - \wh \theta_i \| \le \varepsilon, \quad i=0,1.
    \end{align}
    Hence
    \begin{subequations}
    \begin{align}
        \mathrm{d}_e (\phi^{(e)}_t(s_0\mid s_1),\phi^{(e)}_t(\wh s_0\mid \wh s_1)) 
        & \overset{\eqref{eq:e-RiemannDistance}}{=} \big\| \partial \psi^{*}(\phi^{(e)}_t(s_0 \mid s_1)) - \partial \psi^{*}(\phi^{(e)}_t(\wh s_0 \mid \wh s_1)) \big\| \\   
        &= \big\| (1-t)(\theta_0 - \wh \theta_0) + t(\theta_1 - \wh \theta_1) \big\| \\
        &\overset{\eqref{eq:ass-geod-approx}}{\le} (1-t) \| \theta_0 - \wh \theta_0 \| + t \| \theta_1 - \wh \theta_1 \| \le (1-t)\varepsilon + t \varepsilon = \varepsilon.
    \end{align}
    \end{subequations}
\end{proof}


\begin{proof}[Proof of Proposition~\ref{prop:kl-edistance-compact}]
Since $K \subset \mc S_c^n$ is compact, each coordinate projection $\wt K_i := \pi_i(K) \subset \mc S_c,  i \in [n]$ is also compact. Since the exponential-family parameterization \eqref{eq:def-s-exp} is also defined componentwise, it suffices to show the claim for an arbitrary component. We fix one component for simplicity and write $\wt K = \wt K_i$. We also define
\begin{align}
\tau_K := \min_{x\in K}\min_{i\in [n]}\min_{j\in[c]} x_{i,j} >0.
\end{align}
We will show for all $s,s'\in \wt K$,
\begin{align}
\frac{\tau_K}{2}\, d_e^2(s,s') \le \KL(s,s') \le
\frac{1-\tau_K}{2}\, d_e^2(s,s').
\end{align}

Note that from the exponential-family parameterization \eqref{eq:def-s-exp} the mean parameters are given by 
\begin{align}
s = \partial \psi(\theta) = \exp_{\eins_{\mc{S}_c}}(\theta), \quad s' = \partial \psi (\theta') = \exp_{\eins_{\mc{S}_c}}(\theta'),
\end{align}
where $\eins_{\mc{S}_c} = \frac{1}{c} \eins_c$ is the barycenter of $\mc S_c$ and $\exp$ is the lifting map in \eqref{eq:lifting-map}. The natural parameters are given by
\begin{align}
\theta = \log_{\eins_{\mc {S}_c}}(s), \quad 
\theta' = \log_{\eins_{\mc{S}_c}}(s'),
\end{align}
where $\log$ is the inverse of the lifting map \eqref{eq:lifting-inv-map}. Note that since $ D_{\psi^*}(s,  s') = \KL(s, s')$ by the duality relation for Bregman divergences we have 
\begin{align}
\KL(s,s')  = D_\psi(\theta',\theta) .
\end{align}
Hence it suffices to show the bounds for $D_\psi(\theta',\theta)$.

We prove the claim by showing that on $ \wt U := \partial \psi^{\ast} (\wt K) \subset T_0 $ 
\begin{enumerate}
    \item $\psi$ is $\tau_K$-strongly convex. Since then we compute
    \begin{align}
        & \qquad \psi(\theta') \ge \psi(\theta) + \la \partial\psi(\theta),\theta'-\theta\ra + \frac{\tau_K}{2}\|\theta'-\theta\|_2^2, \\ 
       &  \Rightarrow \quad \psi(\theta') -\psi(\theta) - \la \partial\psi(\theta),\theta'-\theta\ra \ge \frac{\tau_K}{2}\|\theta'-\theta\|_2^2 \\
       & \Rightarrow \quad D_\psi(\theta',\theta) \ge  \frac{\tau_K}{2} d_e(s,s')^2.
    \end{align}
    \item $\psi$ is $(1-\tau_K)$-smooth, since then we compute
    \begin{align}
    & \qquad\psi(\theta') \le \psi(\theta) + \la \partial\psi(\theta),\theta'-\theta\ra + \frac{1-\tau_K}{2}\|\theta'-\theta\|_2^2 \\
    & \Rightarrow \quad D_\psi(\theta',\theta) \le  \frac{1-\tau_K}{2} d_e(s,s')^2.
    \end{align}
\end{enumerate}

Note that for any $\vartheta \in \wt U$ it holds
\begin{align}
\partial^2 \psi(\vartheta) = \diag(\partial \psi(\vartheta)) - \partial \psi(\vartheta) \partial \psi(\vartheta)^\T \overset{\eqref{eq:replicator-operator}}   {=} R_{\partial \psi(\vartheta)}. 
\end{align}
Hence, writing $s = \partial \psi (\vartheta)$, for any $v \in T_0 $ we compute
\begin{align}\label{eq:quadratic-form-hessian}
v^\T \partial^2 \psi(\vartheta) v =  \sum_{j=1}^c v_j^2 s_j - \Big( \sum_{j=1}^{c}  v_j s_j\Big)^2 .
\end{align}
Since $v \in T_0 $ we have $\sum_{j=1}^c v_j s_j = \sum_{j=1}^c (s_j -\tau_{K} ) v_j $, hence using Cauchy-Schwarz inequality with $a_j = \sqrt{s_j - \tau_K}$ and $b_j = \sqrt{s_j -\tau_K} v_j$ we compute
\begin{align}
\Big( \sum_{j=1}^c v_j s_j\Big)^2 = \Big( \sum_{j=1}^{c} a_j b_j \Big)^2 &\le \Big( \sum_{j=1}^c (s_j-  \tau_K) \Big) \Big( \sum_{j=1}^c (s_j -\tau_K) v_j^2\Big) \\ &\overset{s\in \mc S_c}{=} (1-c\tau_K) \sum_{j=1}^c (s_j -\tau_K) v_j^2.
\end{align}
Together with \eqref{eq:quadratic-form-hessian} we get
\begin{align}
v^\T \partial^2 \psi(\vartheta) v \ge \sum_{j=1}^c v_j^2 s_j - (1-c\tau_K) \sum_{j=1}^c (s_j -\tau_K) v_j^2 
&= \sum_{j=1}^c (s_j - (1-c\tau_K)(s_j -\tau_K))v_j^2 \\
&= \sum_{j=1}^c (\tau_K + c \tau_K (\underbrace{s_j}_{\ge \tau_K} - \tau_K)) v_j^2 \\
&\ge \tau_K \sum_{j=1}^c v_j^2 = \tau_K \|v\|_2^2.
\end{align} 

For the upper bound we compute directly from \eqref{eq:quadratic-form-hessian}
\begin{align}
v^\T \partial^2 \psi(\vartheta) v \le \sum_{j=1}^c v_j^2 s_j
\end{align}
and since $s_j \le 1- \tau_K$ for all $j\in [c]$
we obtain 
\begin{align}
v^\T \partial^2 \psi(\vartheta) v \le (1-\tau_K) \sum_{j=1}^c v_j^2 = (1-\tau_K)\|v\|_2^2.
\end{align}
Hence, for every $\vartheta \in \wt U$, and every $v\in T_0 $, the Hessian of $\psi$ satisfies the bounds
\begin{align}
\tau_K  \|v\|^2 \le v^\T \partial^2\psi(\vartheta) v \le (1-\tau_K) \|v\|^2,
\end{align}
and hence 
\begin{align}
\tau_K I \preceq \partial^2\psi(\vartheta)\mid_{T_0 \mc S_c}\preceq (1-\tau_K)I.
\end{align}
Since $\wt U \subset T_0 $ is convex by assumption, the standard Hessian criterion from convex analysis implies that $\psi$ is $\tau_K$-strongly convex and $(1-\tau_K)$-smooth on $\wt U$.
\end{proof}

\subsection{Proofs of Section~\ref{sec:Flow-Matching}}\label{sec:app-proofs-fm}

\begin{proof}[Proof of Proposition \ref{prop:cfm-objective-equivalence}]
    For any $i \in [n]$ the natural parameters of the exponential family in Eq. \eqref{eq:def-s-exp} are given by
    \begin{align}
        \theta_{0,i} &= \partial \psi^*(s_{0,i}) = \log_{\eins_{\mc S_c}}(s_{0,i}) = \Pi_0 \log(s_{0,i}), \\ \theta_{1,i} &= \partial \psi^*(s_{1,i}) = \log_{\eins_{\mc S_c}}(s_{1,i}) = \Pi_0 \log(s_{1,i}),
    \end{align}
    and the $e$-geodesic in $\theta$-coordinates is given by the linear interpolation
    \begin{align}
        \theta_t = (1-t)\theta_0 + t \theta_1.
    \end{align}
    In particular
    \begin{align}\label{eq:partial-t-theta_t}
    \partial_t \theta_t = \theta_1 - \theta_0.
    \end{align}
    
    Furthermore, writing $s_t = \phi^{(e)}_t(s_0 \mid s_1)$, Eq. \eqref{eq:partial-psi-conj-geodesic} reads
    \begin{align}
    \partial\psi^{\ast}(s_t) = (1-t) \partial \psi^{*}(s_0) + t \partial \psi^{*}(s_1) = (1-t)\theta_0 + t \theta_1 = \theta_t,
    \end{align}
    and thus, by the chain rule it holds
    \begin{align}\label{eq:chaing-rule-theta_t}
    \partial_t \theta_t = d(\partial \psi^*)_{s_t}\big( \partial_t s_t \big).
    \end{align}

    Inserting $Q= v_t(s_t;w) - \partial s_t$ in the the induced norm on $ T_0^n$ defined in Eq. \eqref{eq:induced-tangent-e}, we compute
    \begin{subequations}\label{eq:fm-prop-computation}
        \begin{align}
        \|v_t(s_t;w)-\partial_t s_t\|_{e,s_t}^2 &= \big\| d(\partial\psi^*)_{s_t}\big(v_t(s_t;w)-\partial_t s_t\big) \big\|^2 \\
        &= \big\|  d(\partial\psi^*)_{s_t}\big(v_t(s_t;w)\big) -  d(\partial\psi^*)_{s_t}\big(\partial_t s_t\big) \big\|^2 \\
        &\overset{\eqref{eq:chaing-rule-theta_t}}{=} \big\|  d(\partial\psi^*)_{s_t}\big(v_t(s_t;w)\big) - \partial \theta_t \big\|^2 \\
        &\overset{\eqref{eq:partial-t-theta_t}}{=} \big\|  d(\partial\psi^*)_{s_t}\big(v_t(s_t;w)\big) - (\theta_1-\theta_0) \big\|^2.
    \end{align}
    \end{subequations}

    Taking expectations on both sides concludes the proof.

\end{proof}

\begin{proof}[Proof of Corollary \ref{cor:cfm-approximation-ds}]    
We write $s_t = \phi^{(e)}_t(s_0 \mid s_1)$ and recall Eq. \eqref{eq:fm-prop-computation} and Proposition~\ref{prop:cfm-objective-equivalence}, which state that
\begin{align}
\EE_{t, s_0, s_1}[\| v_t - \partial_t s_t \|_{e,s_t}^2] = \EE_{t, \theta_0, \theta_1}[\|  v^\psi_t(s_t, w) - (\theta_1 - \theta_0) \|^2],
\end{align} 
with $v^\psi_t= d(\partial \psi^*)_{s_t}( v_t(s_t; w))$.
We denote the natural subspace approximations as
\begin{align}
\wh \theta_0 = \partial \psi^*(\wh s_0), \quad \wh \theta_1 = \partial \psi^*(\wh s_1),
\end{align}
for $\wh s_0 = \Pi_{\mc M}(s_0)$ and $\wh s_1 = \Pi_{\mc M}(s_1)$, and compute
\begin{subequations}\label{eq:cor-cfm-approx-steps}
\begin{align}
\| v^\psi_t(s_t, w) -(\theta_1 - \theta_0) \| &= \| v^\psi_t(s_t, w)+ (\wh \theta_1 - \wh \theta_0 ) - (\wh \theta_1 - \wh \theta_0 ) -(\theta_1 - \theta_0) \| \\  
&\le \| v^\psi_t(s_t, w) - (\wh \theta_1 - \wh \theta_0 ) \| + \| (\wh \theta_1 - \wh \theta_0 ) -(\theta_1 - \theta_0) \| \\
&\le \| v^\psi_t(s_t, w) - (\wh \theta_1 - \wh \theta_0 ) \| + \| \wh \theta_1 - \theta_1 \| + \| \wh \theta_0 - \theta_0 \|
\end{align}
\end{subequations}
Since for any $\eta>0$ holds
\begin{align}\label{eq:app-std-ineq}
(a+b)^2 \le (1+\eta) a^2 + \big(1+\frac{1}{\eta}\big) b^2,
\end{align}
setting $a= \| v^\psi_t(s_t, w) - (\wh \theta_1 - \wh \theta_0 ) \| $, $b=\| \wh \theta_1 - \theta_1 \| + \| \wh \theta_0 - \theta_0 \| $, and $\eta = \sqrt{\varepsilon}$ we obtain
\begin{multline}
\Big( \| v^\psi_t(s_t, w) -(\wh \theta_1 - \wh \theta_0) \| +\| \wh \theta_1 - \theta_1 \| + \| \wh \theta_0 - \theta_0 \|    \Big)^2  \\
\le (1+\sqrt{\veps}) \| v^\psi_t(s_t, w) - (\wh \theta_1 - \wh \theta_0 ) \|^2 + \big(1+\frac{1}{\sqrt{\veps}} \big) \big( \| \wh \theta_1 - \theta_1 \| + \| \wh \theta_0 - \theta_0 \| \big)^2 .
\end{multline}
Using again \eqref{eq:app-std-ineq} but with $a= \| \wh \theta_1 - \theta_1 \|$, and $b=\| \wh \theta_0 - \theta_0 \|$, but with $\eta=1$, to bound the last term on the r.h.s. yields
\begin{multline}
\Big( \| v^\psi_t(s_t, w) -(\wh \theta_1 - \wh \theta_0) \| +  \| \wh \theta_1 - \theta_1 \| + \| \wh \theta_0 - \theta_0 \|\Big)^2 \\ 
\le (1+\sqrt{\veps}) \| v^\psi_t(s_t, w) - (\wh \theta_1 - \wh \theta_0 ) \|^2 + \big(1+\frac{1}{\sqrt{\veps}} \big) \Big( 2\| \wh \theta_1 - \theta_1 \|^2 + 2 \| \wh \theta_0 - \theta_0 \|^2 \Big).  
\end{multline}

Hence, by squaring and taking expectations on both sides of \eqref{eq:cor-cfm-approx-steps}, and using our assumption \eqref{eq:assumption-approximation} we get 
\begin{align}
\EE_{t, \theta_0, \theta_1}[\|  v^\psi_t(s_t, w) - (\theta_1 - \theta_0) \|^2] &\le  
\EE_{t, \theta_0, \theta_1} \Big[ (1+\sqrt{\veps}) \| v^\psi_t(s_t, w) - (\wh \theta_1 - \wh \theta_0 ) \|^2 \Big] + \big(1+\frac{1}{\sqrt{\veps}} \big) \Big( 2\varepsilon + 2 \varepsilon \Big) \\
&= (1+\sqrt{\veps}) \EE_{t, \theta_0, \theta_1} [\| v^\psi_t(s_t, w) - (\wh \theta_1 - \wh \theta_0 ) \|^2] + 4 \varepsilon + 4 \sqrt{\varepsilon}.
\end{align}

\end{proof}

\clearpage


\section{GPCA loss, $e$-distance \& data representation}\label{sec:GPCA-Loss}
Let $i\in [N]$ index data points, $j \in [n]$ index nodes, and $l \in [c]$ index classes. We represent factorizing distributions $s\in \mc S_c^n$ in the clr parameterization as
\begin{align}
    s_{jl} = \exp\big( \langle \theta_j, e_{l}\rangle - \psi(\theta_j) \big), \quad 
     \psi(\theta_j) = \log\Big(\sum_{l=1}^{c} e^{\theta_{jl}}\Big).
\end{align}
where $\theta_j$ denote the natural parameters, and $\psi$ is the log-partition function. The mean parameters are given by the gradient of the log-partition function
\begin{align}
   \partial \psi(\theta_j) =  \Big(\frac{e^{\theta_{jl}}}{\sum_{l=1}^{c} e^{\theta_{jl}}} \Big)_{l\in [c]} = \mu_j,
\end{align}
and the conjugate of the log-partition function is given by the negative entropy
\begin{align}
    \psi^*(\mu_j) = \sup_{\theta_j \in T_0 } \langle \theta_j, \mu_j \rangle - \psi(\theta_j) 
    = \sum_{l=1}^{c} \mu_{jl} \log(\mu_{jl})
\end{align}

For hard-label data $x \in \mc X_N^n$, \citet{Collins:2001aa} proposed to learn $V\in \R^{n \times c\times d }$, $z \in \R^{N \times d}$ such that the natural parameters are represented as $\wh \theta_{k} = \wh V \odot \wh Z_k$, cf. Eq.~\eqref{eq:theta-v}. The representation is estimated by minimizing the negative log likelihood (NLL)
\begin{subequations}\label{eq:gpca-nll}
\begin{align}
\NLL(V,Z) &=  \frac{1}{Nn} \sum_{i=1}^{N} \sum_{j=1}^n D_{\psi^*}(x_{ij}, \partial \psi(  (V \odot Z_i)_j ) \\ &=
-\frac{1}{Nn} \sum_{i=1}^{N} \sum_{j=1}^n \sum_{l=1}^c \log( \partial \psi(V \odot Z_i)_{jl}  )  x_{ijl},
\end{align}
\end{subequations}
and is used to aproximate the data as $\wh s = \partial \psi (\wh \theta)$.
The node-wise reconstruction error can be estimated with the Hamming distance
\begin{align}\label{eq:hamming-distance}
\MHD = \frac{1}{Nn}\sum_{i=1}^N \sum_{j=1}^n d_H(x_{ij}, \round\big(\partial \psi(\wh \theta_{ij})) \big), \qquad d_H(p,q) = \sum_{l=1}^c \eins_{p_l \neq q_l} , \; p,q \in \mc S_c,
\end{align}
where $\wh \theta = \wh V \odot \wh Z$, and $(\wh V,\wh Z)$ are the parameters estimated by minimizing the NLL in \eqref{eq:gpca-nll}.

The optimization of the NLL is numerically stable and can be carried out efficiently using reverse-mode automatic differentiation. 
For small latent dimensions, the Hamming distance decreases together with the NLL, but generally does not reach zero. For larger latent dimensions, which still provide a substantial dimensionality reduction, zero Hamming distance is reached much faster in practice. 
Section~\ref{appendix:gpca-empirical-performance} provides a more detailed empirical analysis of the relation between the NLL, Hamming distance, and $e$-distance for different latent dimensions on benchmark datasets.

\textbf{Data representation.}
As discussed in the main text, hard-label data points $x \in \mc X_N^n$ are embedded into the interior of the product simplex $\mc S_c^n$ by a smoothing transformation. In practice, we use
\begin{align}
    \widetilde{x}_{k} = \eta \eins_{\mc S} + (1-\eta)x_k, 
    \qquad \eta > 0,
\end{align}
where $\eins_{\mc S}$ denotes the barycenter of $\mc S_c$. This transformation ensures that all entries of $\widetilde{x}_k$ are strictly positive, so that KL divergences and logarithms are well-defined.

For points in $\mc S_c$, the Bregman divergence generated by the convex conjugate $\psi^*$ reduces to the Kullback--Leibler divergence. Indeed,
\begin{subequations}
\begin{align}
    D_{\psi^*}(\mu,\widetilde{\mu})
    &= \psi^*(\mu)-\psi^*(\widetilde{\mu})
    - \left\langle \partial \psi^*(\widetilde{\mu}), \mu-\widetilde{\mu} \right\rangle \\
    &= \sum_{l=1}^{c} \mu_l \log \mu_l
    - \sum_{l=1}^{c} \widetilde{\mu}_l \log \widetilde{\mu}_l
    - \sum_{l=1}^{c} \bigl(1+\log \widetilde{\mu}_l\bigr)
      \bigl(\mu_l-\widetilde{\mu}_l\bigr) \\
    &= \sum_{l=1}^{c} \mu_l \log \frac{\mu_l}{\widetilde{\mu}_l}
    = \KL(\mu,\widetilde{\mu}).
\end{align}
\end{subequations}

Consequently, denoting the smoothed data $\widetilde{x}$ again by $x$ for simplicity, the GPCA loss reads
\begin{align}\label{eq:gpca-kl}
    \KL({x},\widehat{s})
    =
    \frac{1}{Nn}
    \sum_{i=1}^{N}
    \sum_{j=1}^{n}
    D_{\psi^*}\bigl({x}_{ij}, \widehat{s}_{ij}\bigr)
    =
    \frac{1}{Nn}
    \sum_{i=1}^{N}
    \sum_{j=1}^{n}
    \KL\bigl({x}_{ij}, \widehat{s}_{ij}\bigr),
\end{align}
where $\widehat{s}_{ij}=\partial \psi(\widehat{\theta}_{ij})$
denotes the GPCA reconstruction.

Since
\begin{align}
    \KL\bigl({x}_{ki}, \widehat{s}_{ki}\bigr) = \sum_{j=1}^{c}
    {x}_{kij}  \log \frac{{x}_{kij}}{\widehat{s}_{kij}}
    = - \sum_{j=1}^{c} {x}_{kij} \log \widehat{s}_{kij}
    +  \sum_{j=1}^{c} {x}_{kij} \log {x}_{kij},
\end{align}
the KL divergence decomposes into a cross-entropy term and an entropy term that depends only on the data. Therefore, the entropy term is constant with respect to the optimization variables $V$ and $Z$. Minimizing the KL loss is thus equivalent to minimizing the cross-entropy loss
\begin{align}\label{eq:gpca-cross-entropy}
    \Cross(V,Z)    =  -\frac{1}{Nn} \sum_{i=1}^{N}\sum_{j=1}^{n} \sum_{l=1}^{c}  {x}_{ijl}\log (V \odot Z_i)_{jl},
\end{align}
which is the objective that we minimize in practice.

In practice, for the Flow matching framework, we want the reconstruction error between data points $ x$ and its reconstruction $\wh s\in \mc M$ to be small, with respect to underlying Riemannian distance in \eqref{eq:e-RiemannDistance}. In our setting, this means that the mean squared $e$-distance
\begin{align}\label{eq:gpca-e-distance}
e\text{-Distance} = \frac{1}{Nn} \sum_{k=1}^N \sum_{i=1}^n d_e^2(x_{ki}, \wh s_{ki}) 
\end{align}
should be small. If that is the case, Proposition \ref{prop:geod-approx} guarantees that the $e$-geodesic interpolants \eqref{eq:geodesic-interpolant-e} on the submanifold $\mc M = \partial \psi(V\mc Z)$ remain close to $e$-geodesic interpolants on the data space. Moreover, Corollary \ref{cor:cfm-approximation-ds} provides a corresponding approximation bound for the Flow Matching objective on the data space. 

Empirically, we find that the Hamming distance decreases along with the GPCA loss. The $e$-distance, however, does not decrease as consistently. In particular, minimizing the Cross Entropy instead of the NLL leads to larger Hamming distances. We refer to Section~\ref{appendix:gpca-empirical-performance} for a more detailed empirical analysis of the relation between the NLL, Cross-Entropy, Hamming distance, and $e$-distance for different latent dimensions on benchmark datasets.



\clearpage

 \label{appendix:GPCA}

\section{Experiments: time complexity} \label{sec:time-complexity}

We briefly discuss the computational cost of our method. We report wall-clock runtimes separately for the GPCA optimization and the subsequent flow-matching procedure. All experiments were conducted on a single NVIDIA GeForce RTX 4090 GPU with 24 GB of VRAM. For our experiments, we had peak usage of 15 GB of VRAM.

For the optimization of the objective in \eqref{eq:loss-gpca-e_reg}, we employ full-batch Adam and compute gradients via PyTorch automatic differentiation. The resulting runtimes for the datasets used in our experiments are summarized in Table~\ref{tab:gpca_runtime}. Empirically, we find that only a moderate number of iterations, approximately $5000$ in most cases, is required to attain values close to the optima reported in Tables~\ref{tab:nll-mnist-edistance}--\ref{tab:cityscapes-256-loss-lambda}. The optimization dynamics are stable, with gradient norms decreasing quickly and smoothly, as illustrated in Figures~\ref{fig:gpca-cityscapes-gradients-loss-nll} and~\ref{fig:gpca-cityscapes-gradients-loss}. Since this optimization is not computationally demanding, we run it for $30000$ iterations in our experiments to ensure convergence.

We refer to Section~\ref{sec:gpca-configuration} for details on the datasets used, and to~\cite{avdeyev2023dirichlet,Davis:2024,Stark:2024} for a description of the configurations for the \textsc{promoter} and \textsc{enhancer} DNA datasets.

\begin{table}[H]
\centering
\caption{Estimated GPCA optimization speed and wall-clock time for 30000 iterations.}
\label{tab:gpca_runtime}
\begin{tabular}{lcc}
\toprule
\textbf{Dataset} & \textbf{iter/s} & \textbf{Time for 30000 iterations} \\
\midrule
\textbf{MNIST} & 39.88 & 12m 32s \\
\textbf{Fashion-MNIST} & 39.85 & 12m 33s \\
\textbf{Cityscapes} & 27.52 & 18m 10s \\
\textbf{Synthetic Large-Class datasets}  & 420.98 & 1m 11s \\
\textbf{\textsc{Melanoma} dataset} & 46.02 & 10m 86s \\
\textbf{\textsc{FlyBrain} dataset} & 44.60 & 11m 12s \\
\textbf{\textsc{Promoter} dataset} & 10.54 & 47m 26s \\
\bottomrule
\end{tabular}
\end{table}

Regarding the flow-matching procedure, we emphasize that the computational cost depends on the chosen architecture. In particular, there is an important distinction between minimizing the objective in \eqref{eq:cfm-obj} and minimizing the reduced objective in \eqref{eq:cfm-objective-Z}. While learning directly in the data space may lead to faster convergence in terms of the number of epochs, it incurs a substantially higher computational cost per iteration. By contrast, when working in $Z$-space, the computational cost is significantly reduced.

As an illustrative example, we trained on MNIST with latent dimension $d=64$ using an approximately five times larger MLP architecture in $Z$-space, with about 8M parameters. Even with this larger architecture, the total training time remained modest at approximately $5$ minutes. The main source of these computational savings can be seen from Eq.~\eqref{eq:cfm-objective-Z}, where the training objective is expressed entirely in the reduced coordinates $z$, and the vector field $v_t(\cdot;w)$ acts on tensors of dimension $d$ rather than on the full ambient representation. Consequently, each forward and backward pass is performed in a substantially lower-dimensional space, reducing both the per-iteration computational cost and memory usage. Thus, the learned transport dynamics are modeled in the reduced $d$-dimensional coordinate system instead of the original high-dimensional data space.
Quantitatively on this example, training in the data space with a U-Net architecture of approximately 1.2M parameters required 30 minutes, corresponding to an average of 23.77 iter/s. In comparison, training in $Z$-space required only 7 minutes, corresponding to an average of 112.1 iter/s.

\clearpage

\section{Experiments: empirical performance of geometric PCA} \label{appendix:gpca-empirical-performance}

\subsection{Datasets and GPCA configuration}\label{sec:gpca-configuration}

We compare our approach which minimizes the GPCA loss in \eqref{eq:loss-gpca-e_reg}, with the negative log-likelihood (NLL) loss in \eqref{eq:V-training-loss}, and the cross-entropy loss, for data represented in $\mc X^n$ and $\mc S_c^n$ respectively. The experiments are conducted on the following datasets:
\begin{enumerate}
    \item \textbf{Synthetic data with many classes and few nodes.}
    We consider four synthetic datasets, each represented by a histogram  of a discrete distribution on $n=2$ variables with $c=92$ categories. For each dataset, we draw between $N=3 \cdot 10^4$ samples and
    then apply GPCA, as described in Section~\ref{sec:GPCA}.

    \item \textbf{MNIST and Fashion-MNIST.}
    We use MNIST~\cite{Lecun:2010} and Fashion-MNIST~\cite{Xiao:2017},    each with $N=6 \cdot 10^4$ training samples. The images are reshaped to $n=32 \cdot 32$ pixels, and we use $c=2$ classes. Since the pixel values lie in $[0,1]$, we apply a rounding transformation to obtain binary data. For MNIST, this transformation has only a minor visual effect, and the digits remain clearly recognizable. For Fashion-MNIST, however, it provides another valuable benchmark for generating \textit{discrete} data, cf. in Figure~\ref{fig:fashion-mnist-original-transformed}.

    \item \textbf{Cityscapes.}
    We use Cityscapes~\cite{Cordts:2016}, consisting of $N=2975$ training samples. The images are reshaped to $n=256 \cdot 256$ nodes, and we use $c=8$ semantic classes.
\end{enumerate}

\begin{figure}[H]
    \centering
    \begin{subfigure}[t]{0.481\textwidth}
        \centering
        \includegraphics[width=\linewidth]{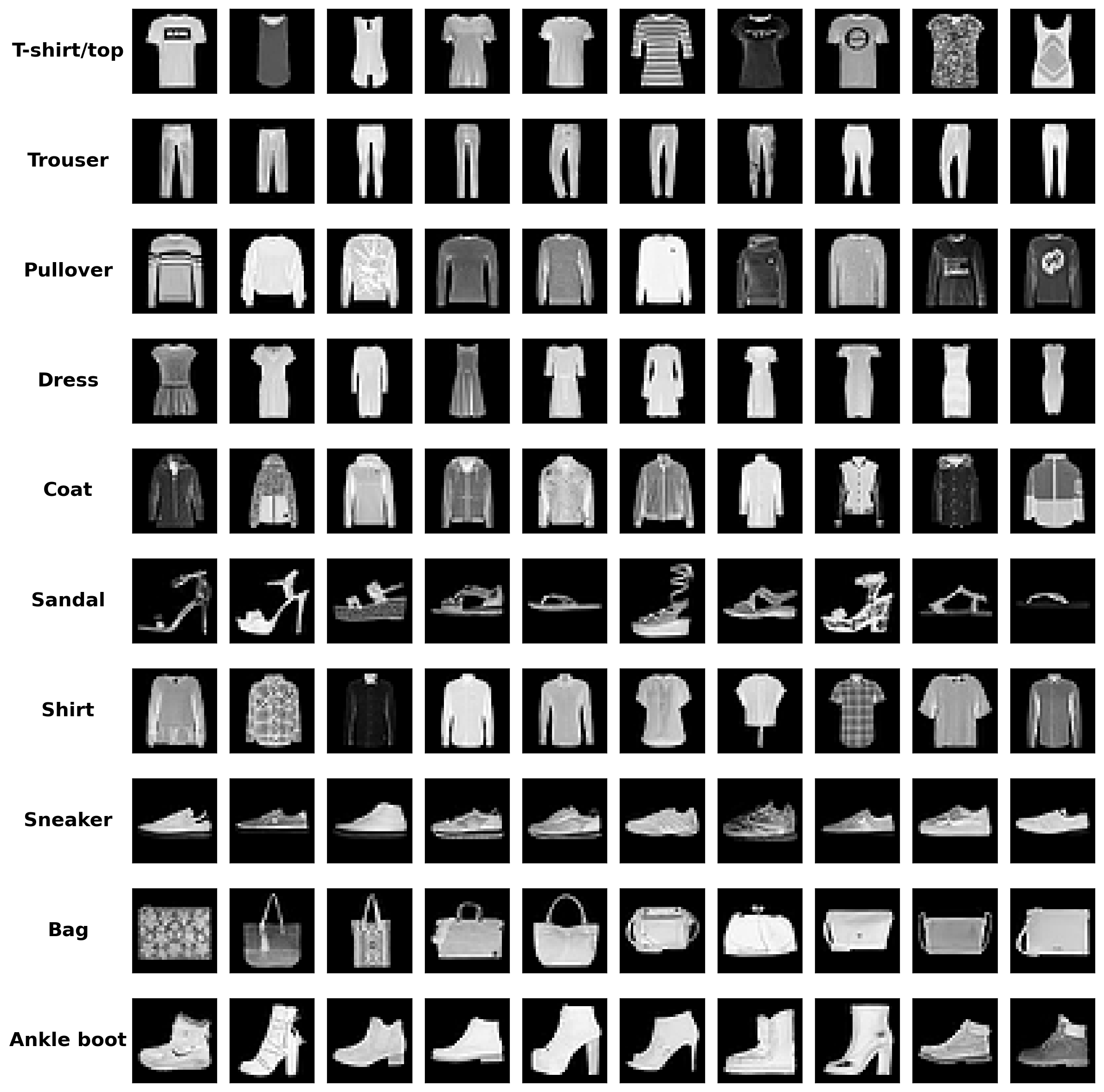}
        \subcaption{Original Fashion-MNIST samples.}
        \end{subfigure} \hspace{0.005\textwidth}
    \begin{subfigure}[t]{0.475\textwidth}
        \centering
        \includegraphics[width=\linewidth]{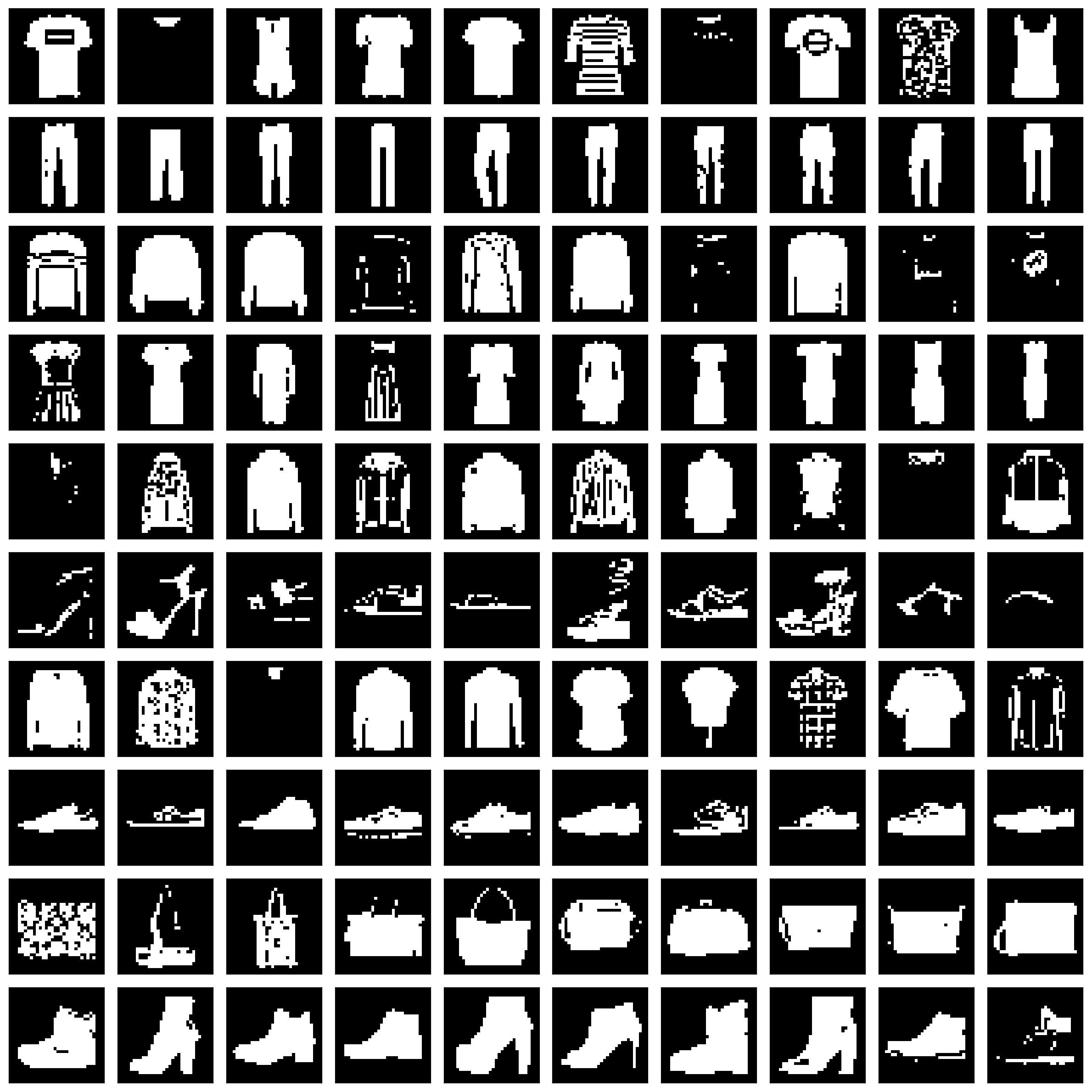}
        \subcaption{Samples of binarized Fashion-MNIST.}
    \end{subfigure}
    \caption{Visual comparison of Fashion-MNIST samples before and after rounding.
    We included the binarized Fashion-MNIST data into our set of benchmark experiments.
    }
    \label{fig:fashion-mnist-original-transformed}
\end{figure}

For the GPCA optimization, the latent codes $Z \in \R^{N \times d}$ and basis parameters $V \in \mathbb{R}^{n \times c \times d}$ are learned jointly using full-batch Adam with learning rate $10^{-2}$ for between $10^4$ and $3 \cdot 10^4$ iterations, depending on the experiment. For the smoothing transformation of the data representation described in Section~\ref{sec:GPCA}, we use
\begin{align}
    \label{eq:eta-smooth-transformation}
    f_{\eta}(x_{ij})
    =
    \eta \frac{1}{c}\eins_{c}
    +
    (1-\eta)x_{ij},
    \qquad \eta > 0,
\end{align}
with $\eta = 10^{-2}$. Gradients with respect to both $Z$ and $V$ are computed by automatic differentiation using PyTorch.

\subsection{Negative log-likelihood \& KL minimization}
We first illustrate the behavior of the GPCA loss of \cite{Collins:2001aa} for data represented in $\mc X^n$ and, after the smoothing transformation in \eqref{eq:eta-smooth-transformation}, in $\mc S_c^n$. As discussed in Appendix~\ref{sec:GPCA-Loss}, minimizing the GPCA loss in $\mc X^n$ is equivalent to minimizing the negative log-likelihood (NLL) of the data under the GPCA model, whereas minimizing it in $\mc S_c^n$ corresponds to minimizing the KL divergence between the data and their reconstructions. We observe that minimizing the NLL leads to stable gradients and a consistent decrease of the Hamming distance between the data and their reconstructions. For our surprise, it is only necessary to have a small latent space dimension in order to achieve zero Hamming distance. 

Figure~\ref{fig:gpca-cityscapes-gradients-loss-nll} shows an optimization example on Cityscapes with latent dimension $d=256$ over the first $450$ iterations. We compare the NLL, the Hamming distance, and the Riemannian distance, also referred to as the $e$-distance. We observe that both the NLL and the Hamming distance decrease consistently and quickly reach very small values, whereas the $e$-distance remains comparatively high.
\begin{figure}[H]
    \centering
    \includegraphics[width=0.48\textwidth]{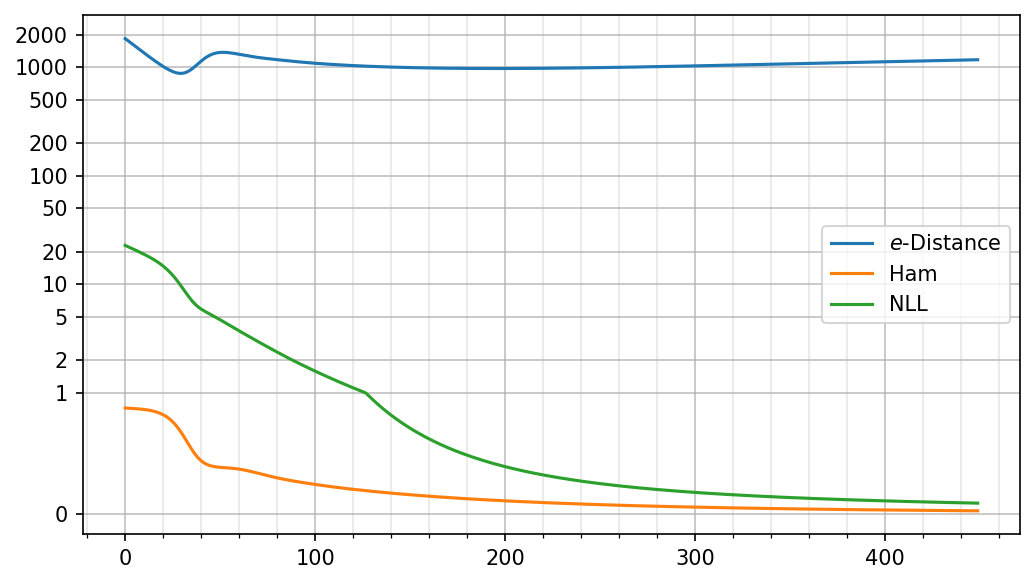}\hspace{0.02\textwidth}
    \includegraphics[width=0.48\textwidth]{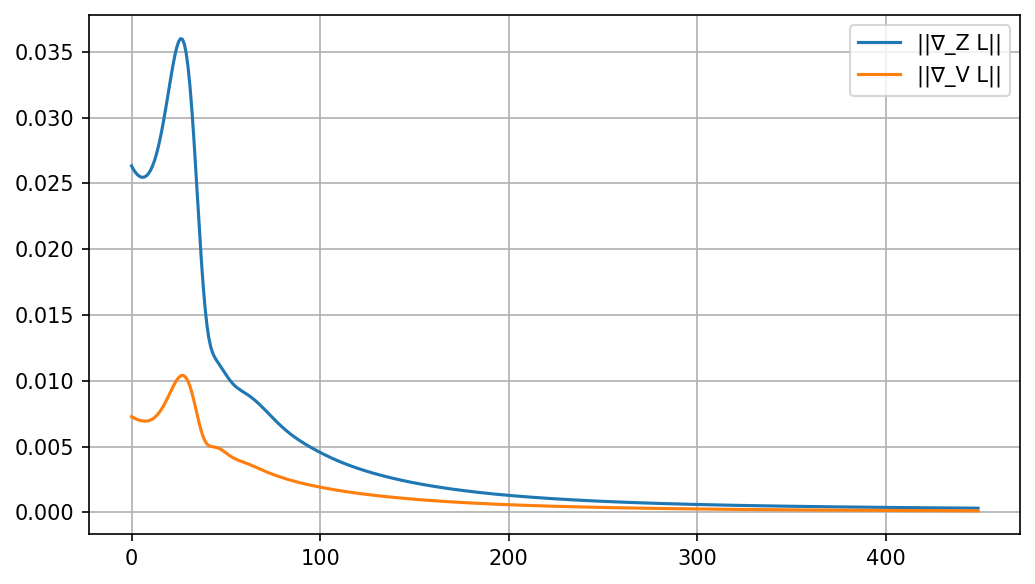}
    \caption{GPCA optimization on resized Cityscapes dataset with latent dimension $d=256$. The left plot shows the evolution of the NLL, Hamming distance, and $e$-distance. The right plot shows the gradient norms during optimization.}
    \label{fig:gpca-cityscapes-gradients-loss-nll}
\end{figure}

Figure~\ref{fig:rec-error} shows the final Hamming distance as a function of the latent dimension after GPCA training for $10^4$ iterations. The two curves correspond to Cityscapes and MNIST. Interestingly, zero Hamming distance on the entire training set is achieved with only $d=44$ latent dimensions for MNIST and $d=133$ latent dimensions for Cityscapes.

\begin{figure}[H]
    \centering
    \includegraphics[width=0.62\textwidth]{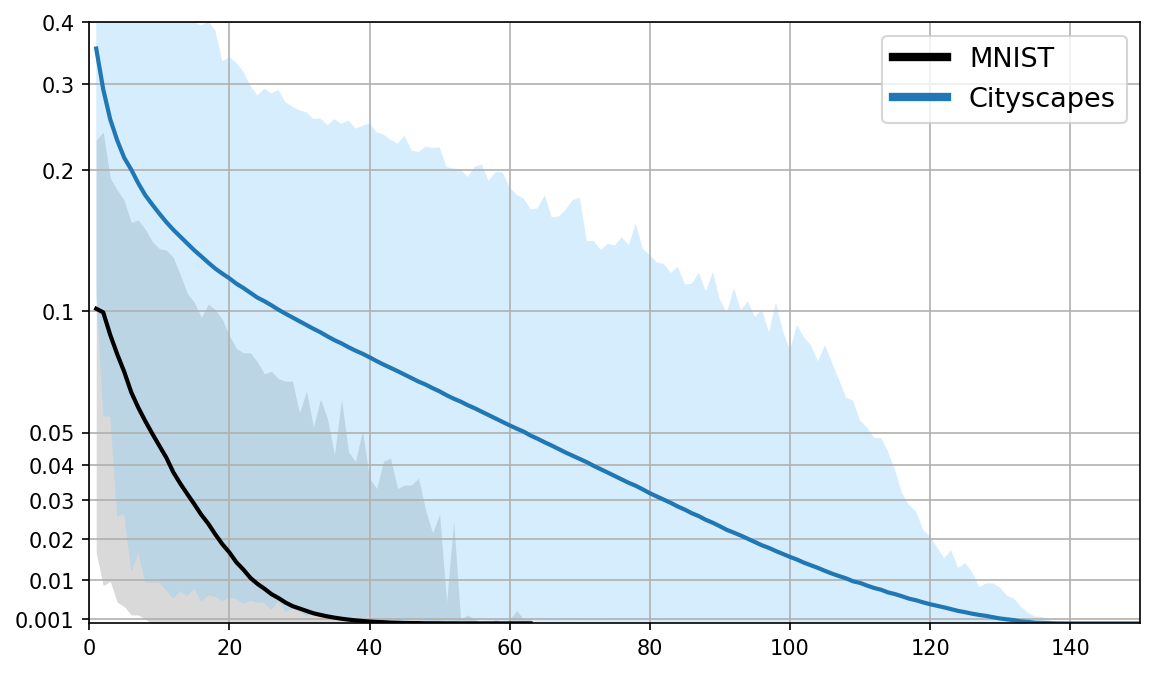}
    \caption{Hamming reconstruction error across latent dimension. The $y$-axis uses inverse-sinh scaling $y'=\sinh^{-1}(y/s)$ to emphasize small errors, with $s=0.05$.}
    \label{fig:rec-error}
\end{figure}

Figures~\ref{fig:MNIST-GPCA-samples}--\ref{fig:GPCA-embedding-3D} provide additional visualizations of the GPCA representation obtained on MNIST. Figure~\ref{fig:MNIST-GPCA-samples} compares randomly selected MNIST digits with their GPCA-based approximations, illustrating that the obtained latent subspace accurately reconstructs most samples. Figure~\ref{fig:MNIST-GPCA-embdding-2D} shows a two-dimensional embedding of the latent points obtained from a $d=30$ dimensional GPCA subspace. The embedding is computed purely from the geometry of the latent representation and reveals a meaningful organization of the data without using label information. Finally, Figure~\ref{fig:GPCA-embedding-3D} shows a three-dimensional version of the latent embedding, providing a more detailed view of the same organization.

\begin{figure}[H]
\centerline{
\includegraphics[width=0.5\textwidth]{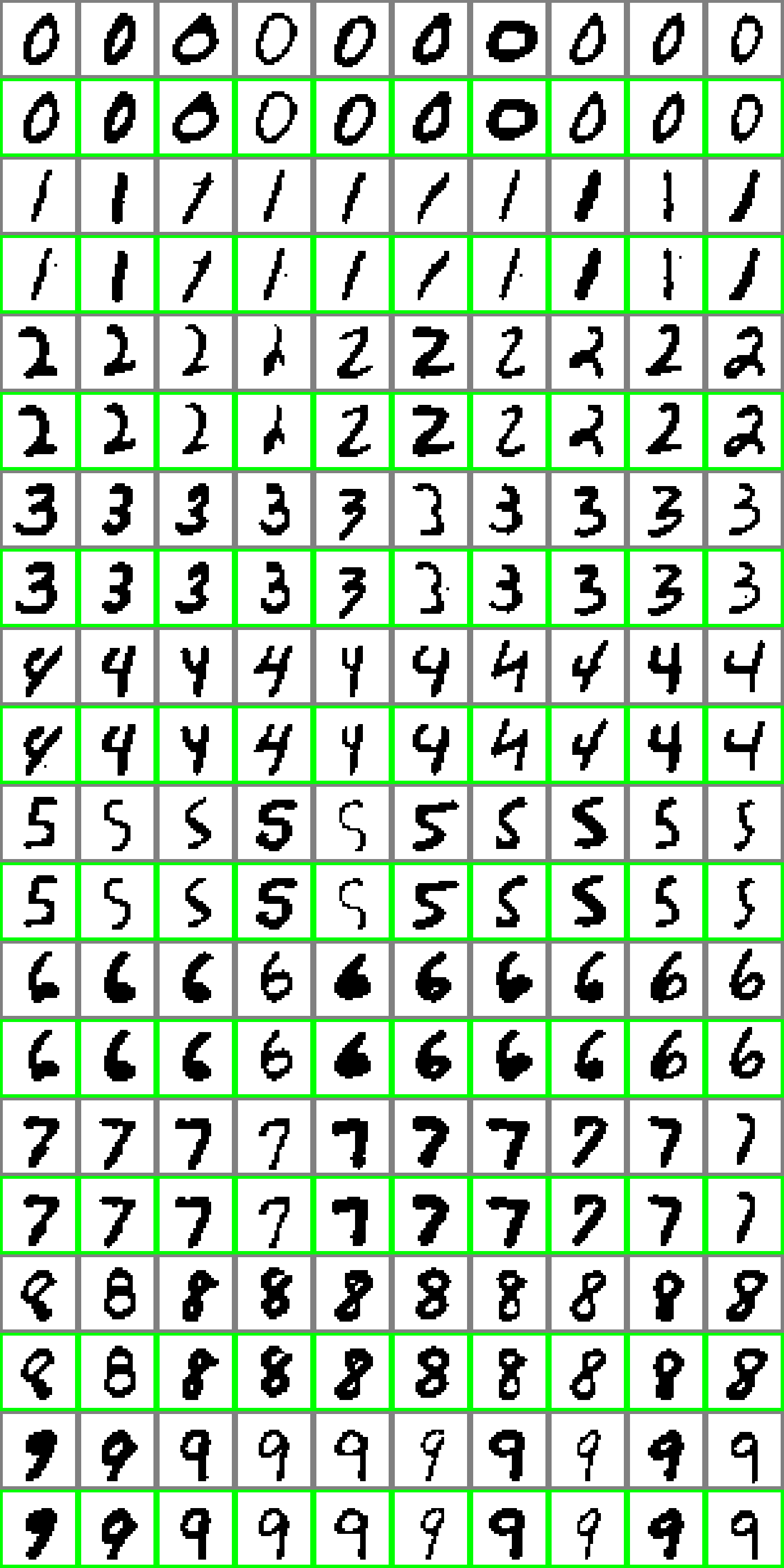}
}
\caption{
\textbf{Latent space approximations.} A random sample of MNIST data points $x_{i}$ and their GPCA-based approximations $\wh{s}_{i} = \partial\psi(\theta_{i})$. The vast majority of MNIST digits $x_{i} \in \mc X^n$ can be accurately approximated using a fixed $30$-dimensional latent GPCA-subspace $\mc{U}$. The alternatingly colored rows   display randomly selected MNIST digits $x_{i}$(framed with \textbf{gray}) and their approximations $\wh{s}_{i}$ (framed with \textbf{green}). 
}
\label{fig:MNIST-GPCA-samples}
\end{figure}

\begin{figure}[H]
\centerline{
\includegraphics[width=0.55\textwidth]{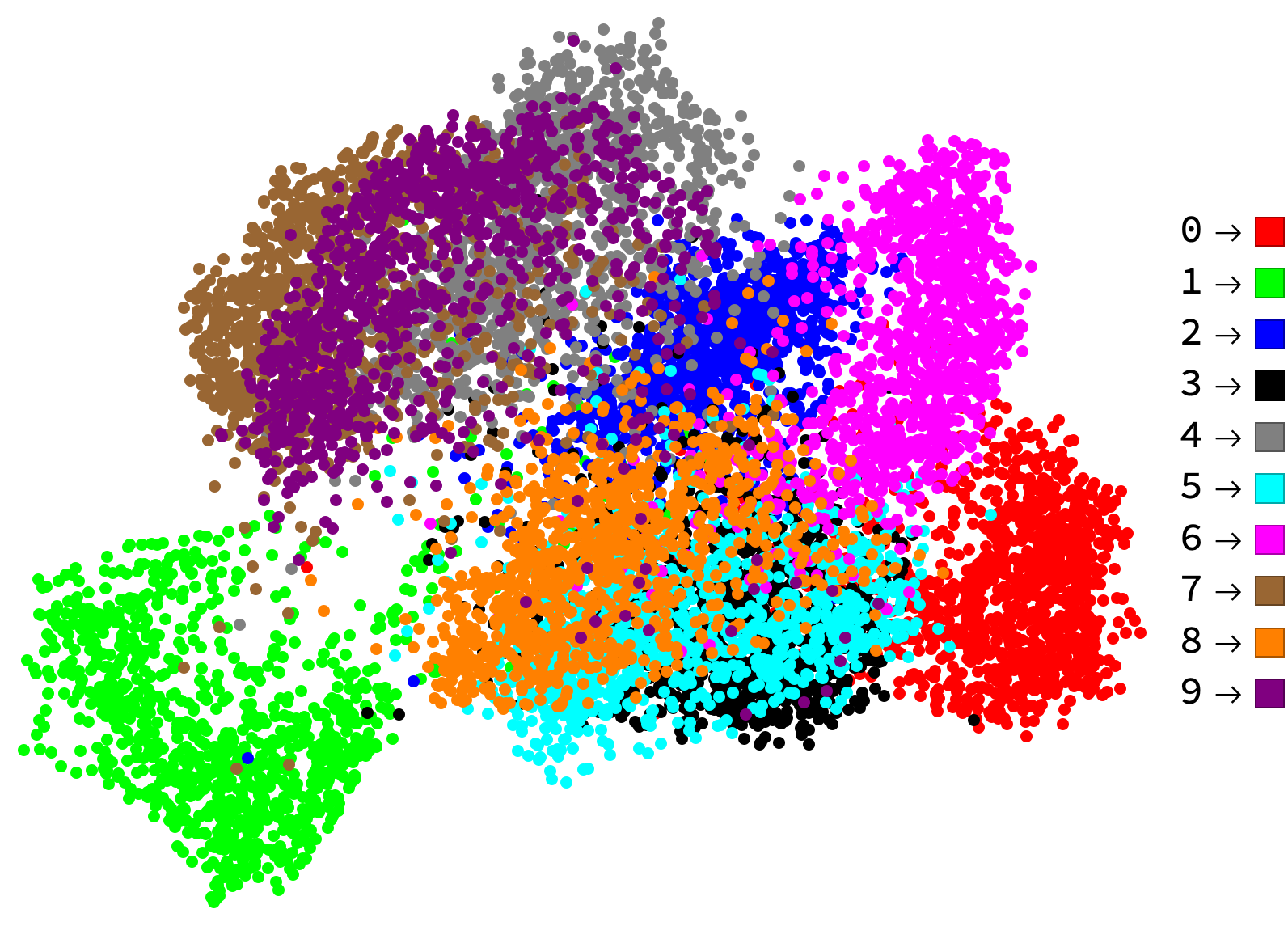}}
\caption{
A 2D embedding of the $d=30$ dimensional latent points $\wh \theta_{i} = V z_{i}\in\mc{U},\, i\in[N]$ corresponding to $10.000$ samples $x_{i}\approx \partial\psi(\wh \theta_{i})$ of the MNIST data base.  This semantically meaningful and \textit{fully unsupervised} embedding of the $4$-nearest neighbor graph with nodes $\theta_{i}$ is \textit{purely geometric}: no learning is involved besides determining the GPCA subspace $\mc{U}$. This result underlines the suitability of latent GPCA representations of discrete data, as illustrated by Figure \ref{fig:H3-GPCA}. }
\label{fig:MNIST-GPCA-embdding-2D}
\end{figure}

\begin{figure}[H]
\centerline{
\includegraphics[width=\textwidth]{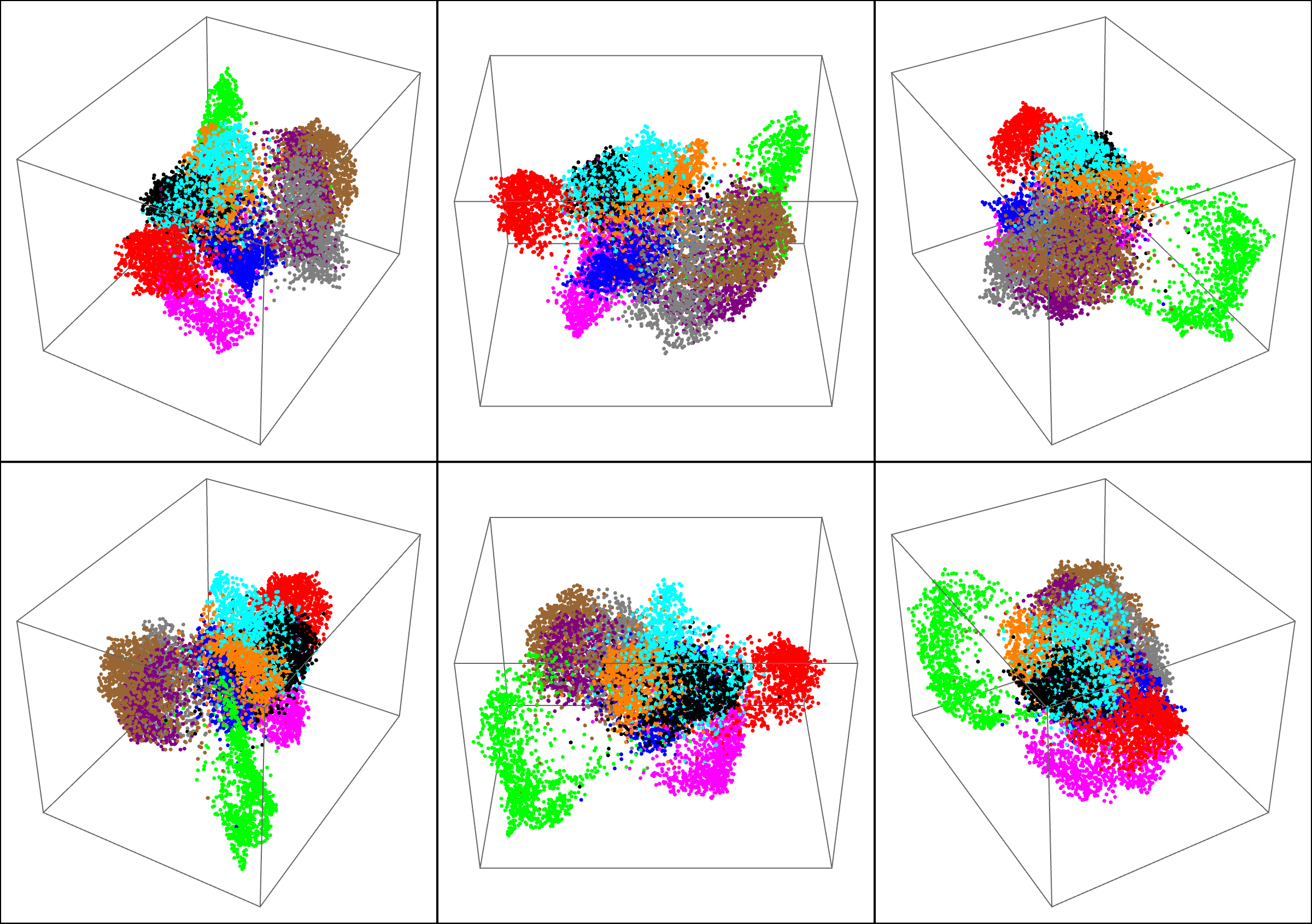}
}
\caption{
A more detailed 3D embedding of the $30$-dimensional latent data points representing a corresponding sample from the MNIST data base.}.
\label{fig:GPCA-embedding-3D}
\end{figure}

Tables~\ref{tab:nll-mnist-edistance} and~\ref{tab:nll-cityscapes-edistance} show how the Hamming distance in \eqref{eq:hamming-distance} decreases together with the NLL in \eqref{eq:gpca-nll} for different latent dimensions on MNIST and Cityscapes. For example, on Cityscapes, even a strongly reduced latent dimension $d=256$, corresponding to a reduction factor of $512$, is sufficient for the NLL to become small and for the Hamming distance to drop to zero within seconds, after only a few iterations. The $e$-distance in \eqref{eq:gpca-e-distance}, however, does not decrease as consistently and remains on the order of $10^3$ when optimizing the NLL.

By contrast, Tables~\ref{tab:kl-mnist-edistance} and \ref{tab:kl-cityscapes-edistance} show that optimizing the KL divergence substantially reduces the $e$-distance, although the resulting values may still remain relatively large. Figure~\ref{fig:gpca-cityscapes-gradients-loss} illustrates this behavior for GPCA optimization on Cityscapes with latent dimension $d=256$. As in Figure~\ref{fig:gpca-cityscapes-gradients-loss-nll}, the loss and the Hamming distance decrease at a similar rate. However, when optimizing the KL divergence, the $e$-distance is considerably smaller, while the final Hamming distance is larger than in the NLL case. The right panel shows that the gradient norms decay stably throughout optimization.

\begin{table}[H]
    \centering
    \caption{\textbf{NLL minimization.} Comparison of the Hamming distance, NLL, and the squared $e$-Distance for the \textbf{MNIST} dataset across different latent dimensions.}
    \label{tab:nll-mnist-edistance}
    \begin{tabular}{lccccc}
        \toprule
         & $\boldsymbol{d=8}$ & $\boldsymbol{d=16}$ & $\boldsymbol{d=32}$ & $\boldsymbol{d=64}$ & $\boldsymbol{d=128}$  \\
        \midrule
        \textbf{NLL} \, $\boldsymbol{(\times 10^{-6})}$ & $110123$ & $57950$ & $3803$ & $0.0$ & $0.0$  \\
        \textbf{MHD} \, $\boldsymbol{(\times 10^{-6})}$ & $54501$  & $26023$ & $1352$ & $0.0$ & $0.0$ \\
        \textbf{e-Distance} \, $\boldsymbol{(\times 10^{3})}$  & $73.333$  & $171.252$ & $168.111$ & $3.817$ & $1.713$ \\
        \bottomrule
    \end{tabular}
\end{table}

\begin{table}[H]
    \centering
    \caption{\textbf{NLL minimization.} Comparison of the Hamming distance, NLL, and the squared $e$-Distance for the \textbf{Cityscapes} dataset across different latent dimensions.}
    \label{tab:nll-cityscapes-edistance}
    \begin{tabular}{lccccc}
        \toprule
         & $\boldsymbol{d=32}$ & $\boldsymbol{d=64}$ & $\boldsymbol{d=128}$ & $\boldsymbol{d=256}$ & $\boldsymbol{d=512}$  \\
        \midrule
        \textbf{NLL} \, $\boldsymbol{(\times 10^{-6})}$ &  $254804$ & $123109$ & $4$ & $0.0$ & $0.0$ \\
        \textbf{MHD} \, $\boldsymbol{(\times 10^{-6})}$  & $88305$ & $42746$ & $0.0$ & $0.0$ & $0.0$   \\
        \textbf{e-Distance} \, $\boldsymbol{(\times 10^{3})}$  & $493.297$ & $1192.090$ & $212.277$ & $8.332$ & $6.593$   \\
        \bottomrule
    \end{tabular}
\end{table}



\begin{table}[H]
    \centering
    \caption{\textbf{Cross-Entropy minimization.} Comparison of the Hamming distance, cross-entropy, and the squared $e$-Distance for the \textbf{MNIST} dataset across different latent dimensions.}
    \label{tab:kl-mnist-edistance}
    \begin{tabular}{lccccc}
        \toprule
         & $\boldsymbol{d=8}$ & $\boldsymbol{d=16}$ & $\boldsymbol{d=32}$ & $\boldsymbol{d=64}$ & $\boldsymbol{d=128}$  \\
        \midrule
        \textbf{Cross-Entropy} \, $\boldsymbol{(\times 10^{-6})}$ & $143713$ & $102126$ & $63448$ & $44578$ & $36420$  \\
        \textbf{MHD} \, $\boldsymbol{(\times 10^{-6})}$ & $55357$ & $29450$ & $7705$ & $746$ & $37$  \\
        \textbf{e-Distance} & $4.670$ & $8.117$ & $14.454$ & $7.647$ & $2.550$ \\
        \bottomrule
    \end{tabular}
\end{table}
\begin{table}[H]
    \centering
    \caption{\textbf{Cross-Entropy minimization.} Comparison of the Hamming distance, cross-entropy, and the squared $e$-Distance for the \textbf{Cityscapes} dataset across different latent dimensions.}
    \label{tab:kl-cityscapes-edistance}
    \begin{tabular}{lccccc}
        \toprule
         & $\boldsymbol{d=32}$ & $\boldsymbol{d=64}$ & $\boldsymbol{d=128}$ & $\boldsymbol{d=256}$ & $\boldsymbol{d=512}$ \\
        \midrule
        \textbf{Cross-Entropy} \, $\boldsymbol{(\times 10^{-6})}$ & $353188$ & $251113$ & $150246$ & $92236$ & $76277$  \\
        \textbf{MHD} \, $\boldsymbol{(\times 10^{-6})}$ & $96587$ & $57206$ & $12970$ & $0.0$ & $0.0$  \\
        \textbf{e-Distance} & $91.55$ & $118.65$ & $157.32$ & $58.23$ & $17.33$ \\
        \bottomrule
    \end{tabular}
\end{table}

\begin{figure}[H]
    \centering
    \begin{subfigure}[t]{0.48\textwidth}
        \centering
        \includegraphics[width=\linewidth]{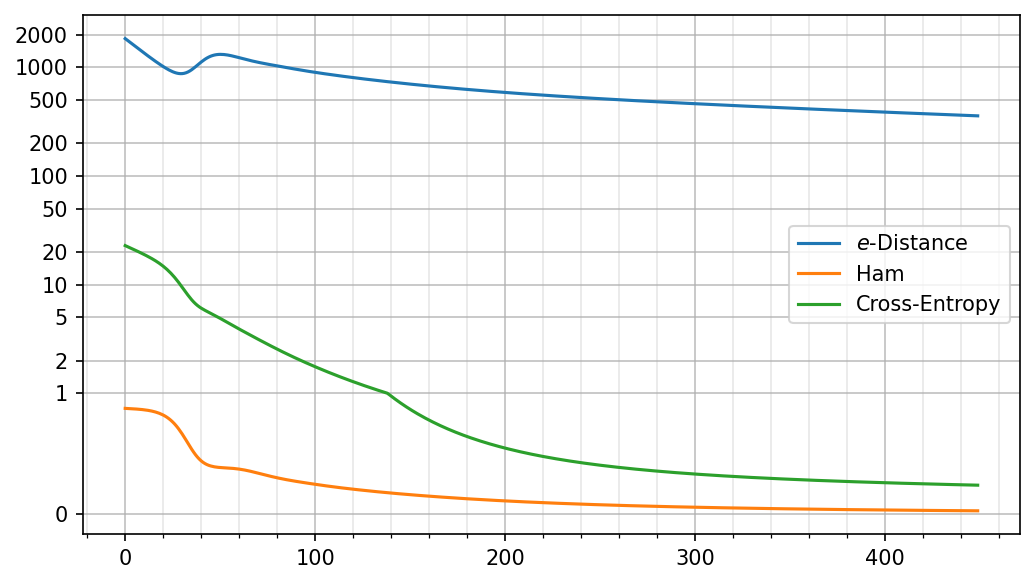}
    \end{subfigure}\hfill
    \begin{subfigure}[t]{0.48\textwidth}
        \centering
        \includegraphics[width=\linewidth]{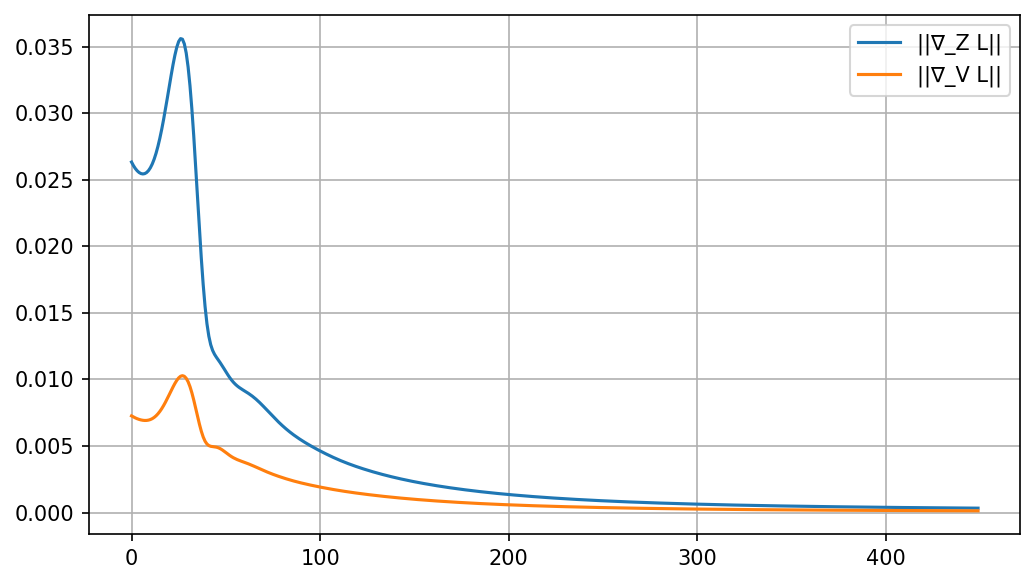}
    \end{subfigure}
    \caption{GPCA optimization on resized Cityscapes dataset with latent dimension $d=256$. The left panel shows the evolution of the KL divergence, Hamming distance, and $e$-distance. The right panel shows the gradient norms during optimization.}
    \label{fig:gpca-cityscapes-gradients-loss}
\end{figure}

\subsection{GPCA objective with $e$-distance regularization} \label{sec:gpca-edistance-reg}

Proposition~\ref{prop:kl-edistance-compact} states that, on compact subsets of the assignment manifold $\mc S_c^n$, the KL divergence controls the squared $e$-distance between data points and their reconstructions. Consequently, along the GPCA optimization, decreasing the KL divergence also decreases the $e$-distance, up to constants depending on the chosen compact set, provided the reconstructions remain uniformly bounded away from the boundary of the simplex.
To prevent the reconstructions from approaching the corners of the simplex too closely, we propose to regularize the GPCA objective by adding an explicit $e$-distance penalty and minimize the objective in \eqref{eq:loss-gpca-e_reg} which reads
\begin{equation*}
\mc L_{\lambda}(V, Z){=}  \frac{1}{Nn} \sum_{i=1}^{N} \sum_{j=1}^n D_{\psi^\ast}(x_{ij} , \partial \psi((V \odot Z_i)_j)) {+} \lambda \, \mathrm{d}_e^2(x_{ij}, \partial \psi((V \odot Z_i)_j)).
\end{equation*}
Here, the second term in \eqref{eq:loss-gpca-e_reg} penalizes reconstructions that are far from the data in $e$-distance while simultaneously promoting sharper label assignments. The parameter $\lambda>0$ balances the contribution of the regularization term relative to the KL term.

Empirically, we observe that smaller $\lambda$ tend to yield lower Hamming distances, whereas larger values enforce stronger control of the $e$-distance. Hence, the main trade-off is between achieving highly accurate discrete reconstructions, as measured by the Hamming distance, and maintaining sufficiently small $e$-distance. This behavior is illustrated in Tables ~\ref{tab:mnist-128-loss-lambda}, and ~\ref{tab:cityscapes-256-loss-lambda}.
For example, choosing $\lambda=10^{-3}$ appears to provide a reasonable compromise for the MNIST dataset with latent dimension $d=128$. It yields a Hamming distance of $\MHD = 9.6\cdot 10^{-5}$ together with a small squared $e$-distance. Interpreted per image, this corresponds on average to about $0.1$ pixels misclassified pixels.
We also observe that, for fixed $\lambda$, increasing the latent dimension decreases the GPCA loss, the Hamming distance, and the $e$-distance. This is consistent with the fact that a higher-dimensional representation allows for sharper and more accurate reconstructions, cf. Tables ~\ref{tab:mnist-dims-lambda-1e-2}--~\ref{tab:cityscapes-dims-lambda-3}.

We also considered an alternative regularized objective in which the $e$-distance penalty is replaced by an entropy penalty on the reconstructions. Empirically, this leads to qualitatively similar behavior, i.e. the entropy penalty prevents reconstructions from becoming too close to the corners of the simplex. In addition, smaller regularization weights tend to improve the Hamming distance, whereas larger weights provide stronger control of the $e$-distance. Overall, the results are comparable to those obtained with the $e$-distance penalty, so this modification does not lead to substantial changes in performance.

\begin{table}[H]
    \centering
    \caption{GPCA loss, Hamming distance and squared $e$-distance for different latent dimensions $\boldsymbol{d}$ on the MNIST dataset with regularization parameter $\boldsymbol{\lambda=10^{-2}}$}
    \label{tab:mnist-dims-lambda-1e-2}
	  \makebox[\linewidth][c]{%
    \begin{tabular}{lccccccc}
        \toprule
         &$\boldsymbol{d=8}$ & $\boldsymbol{d=16}$ & $\boldsymbol{d=32}$ & $\boldsymbol{d=64}$ & $\boldsymbol{d=128}$ & $\boldsymbol{d=256}$ & $\boldsymbol{d=512}$ \\
        \midrule
        \textbf{Cross-Entropy} \, $\boldsymbol{(\times 10^{-6})}$ 
        & $173149$ & $129212$ & $89401$ & $62162$ & $45868$ & $36243$ & $31624$ \\
        \textbf{MHD} \, $\boldsymbol{(\times 10^{-6})}$
        & $56267$ & $33396$ & $14260$ & $3604$ & $495$ & $97$ & $3$ \\
        \textbf{e-Distance} 
        & $2.724$ & $2.206$ & $1.699$ & $1.147$ & $0.689$ &$0.268589$ & $0.008$ \\
        \bottomrule
    \end{tabular}
    }
\end{table}

\begin{table}[H]
    \centering
    \caption{GPCA loss, Hamming distance and squared $e$-distance for different latent dimensions $\boldsymbol{d}$ on the MNIST dataset with regularization parameter $\boldsymbol{\lambda=10^{-3}}$}
    \label{tab:mnist-dims-lambda-1e-3}
    \makebox[\linewidth][c]{
    \begin{tabular}{lccccccc}
        \toprule
         &$\boldsymbol{d=8}$ & $\boldsymbol{d=16}$ & $\boldsymbol{d=32}$ & $\boldsymbol{d=64}$ & $\boldsymbol{d=128}$ & $\boldsymbol{d=256}$ &  $\boldsymbol{d=512}$ \\
        \midrule
        \textbf{Cross-Entropy} \, $\boldsymbol{(\times 10^{-6})}$ & $173149$ & $129212$ & $89401$ & $62162$ & $45868$ & $36243$ & $31624$  \\
        \textbf{MHD} \, $\boldsymbol{(\times 10^{-6})}$& $56267$ & $33396$& $14260$ & $3602$ & $495$ & $97$ & $4$  \\
        \textbf{e-Distance} & $2.72$ & $2.21$  & $1.70$ & $1.48$ & $0.69$ & $0.27$ & $0.008$ \\
        \bottomrule
    \end{tabular}
    }
\end{table}

\begin{table}[H]
    \centering
    \caption{GPCA loss, Hamming distance and squared $e$-distance for different latent dimensions $\boldsymbol{d}$ on the Fashion-MNIST dataset with regularization parameter $\boldsymbol{\lambda=10^{-2}}$}
    \label{tab:fashion-mnist-dims-lambda-1e-2}
    \begin{tabular}{lcccc}
        \toprule
         &$\boldsymbol{d=16}$ & $\boldsymbol{d=32}$ & $\boldsymbol{d=64}$ & $\boldsymbol{d=128}$ \\
        \midrule
        \textbf{Cross-Entropy} \, $\boldsymbol{(\times 10^{-6})}$ 
        & $159478$ & $128362$ & $99688$ & $71593$  \\
        \textbf{MHD} \, $\boldsymbol{(\times 10^{-6})}$
        & $44509$ & $31548$ & $19176$ & $6812$  \\
        \textbf{e-Distance} 
        & $2.771$ & $2.292$ & $1.855$ & $1.417$ \\
        \bottomrule
    \end{tabular}
\end{table}

\begin{table}[H]
    \centering
    \caption{GPCA loss, Hamming distance and squared $e$-distance for different latent dimensions $\boldsymbol{d}$ on the Fashion-MNIST dataset with regularization parameter $\boldsymbol{\lambda=10^{-3}}$}
    \label{tab:fashion-mnist-dims-lambda-1e-3}
    \begin{tabular}{lcccc}
        \toprule
         &$\boldsymbol{d=16}$ & $\boldsymbol{d=32}$ & $\boldsymbol{d=64}$ & $\boldsymbol{d=128}$ \\
        \midrule
        \textbf{Cross-Entropy} \, $\boldsymbol{(\times 10^{-6})}$ 
        & $132078$ & $104965$ & $79654$ & $54938$  \\
        \textbf{MHD} \, $\boldsymbol{(\times 10^{-6})}$
        & $41995$ & $28759$ & $15590$ & $3128$  \\
        \textbf{e-Distance} 
        & $3.964$ & $3.647$ & $3.543$ & $3.424$ \\
        \bottomrule
    \end{tabular}
\end{table}

\begin{table}[H]
    \centering
    \caption{GPCA loss, Hamming distance and squared $e$-distance for different latent dimensions $\boldsymbol{d}$ on the \textbf{Cityscapes} dataset with regularization parameter $\boldsymbol{\lambda=10^{-1}}$}
    \label{tab:cityscapes-dims-lambda-1e-1}
    \makebox[\linewidth][c]{
    \begin{tabular}{lccccccc}
        \toprule
          & $\boldsymbol{d=32}$ & $\boldsymbol{d=64}$ &$\boldsymbol{d=128}$ & $\boldsymbol{d=256}$ & $\boldsymbol{d=512}$ & $\boldsymbol{d=1024}$ & $\boldsymbol{d=2048}$ \\
        \midrule
        \textbf{Cross-Entropy} \, $\boldsymbol{(\times 10^{-6})}$  & $1405488$ & $1162139$ & $928690$ & $698497$ & $477215$ & $283546$ & $153680$  \\
        \textbf{MHD} \, $\boldsymbol{(\times 10^{-6})}$ &  $125910$ & $98440$ & $71155$ & $41466$ & $13118$ & $752$ & $3$   \\
        \textbf{e-Distance} &  $9.556$ & $7.990$ & $6.530$ & $5.067$ & $3.548$ & $2.007$ & $0.835$ \\
        \bottomrule
    \end{tabular}
    }
\end{table}

\begin{table}[H]
    \centering
    \caption{GPCA loss, Hamming distance and squared $e$-distance for different latent dimensions $\boldsymbol{d}$ on the \textbf{Cityscapes} dataset with regularization parameter $\boldsymbol{\lambda=10^{-2}}$}
    \label{tab:cityscapes-dims-lambda-2}
    \makebox[\linewidth][c]{
    \begin{tabular}{lccccccc}
        \toprule
          & $\boldsymbol{d=32}$ & $\boldsymbol{d=64}$ & $\boldsymbol{d=128}$ & $\boldsymbol{d=256}$ & $\boldsymbol{d=512}$ & $\boldsymbol{d=1024}$ &  $\boldsymbol{d=2048}$ \\
        \midrule
        \textbf{Cross-Entropy} \, $\boldsymbol{(\times 10^{-6})}$ & $510419$ & $404977$ & $301960$ & $207198$ & $137594$ & $96735$ & $73983$  \\
        \textbf{MHD} \, $\boldsymbol{(\times 10^{-6})}$  & $112242$ & $80375$ & $46083$ & $11937$ & $128$ & $0.00$ & $0.00$  \\
        \textbf{e-Distance} &  $11.619$ & $10.311$ & $8.97$ & $7.24$ & $4.65$ & $2.25$ & $0.57$ \\
        \bottomrule
    \end{tabular}
    }
\end{table}

\begin{table}[H]
    \centering
    \caption{GPCA loss, Hamming distance and squared $e$-distance for different latent dimensions $\boldsymbol{d}$ on the \textbf{Cityscapes} dataset with regularization parameter $\boldsymbol{\lambda=10^{-3}}$}
    \label{tab:cityscapes-dims-lambda-3}
    \makebox[\linewidth][c]{
    \begin{tabular}{lccccccc}
        \toprule
        & $\boldsymbol{d=32}$ & $\boldsymbol{d=64}$ &$\boldsymbol{d=128}$ & $\boldsymbol{d=256}$ & $\boldsymbol{d=512}$ & $\boldsymbol{d=1024}$ & $\boldsymbol{d=2048}$ \\
        \midrule
        \textbf{Cross-Entropy} \, $\boldsymbol{(\times 10^{-6})}$  & $385214$ & $287447$ & $192681$ & $119147$ & $86937$ & $74136$ & $68852$  \\
        \textbf{MHD} \, $\boldsymbol{(\times 10^{-6})}$ &  $102815$ & $66227$ & $26065$ & $412$ & $0.00$ & $0.00$ & $0.00$   \\
        \textbf{e-Distance} &  $21.705$ & $22.684$ & $23.837$ & $18.204$ & $8.418$ & $3.286$ & $0.868$ \\
        \bottomrule
    \end{tabular}
    }
\end{table}

\begin{table}[H]
    \centering
    \caption{GPCA loss, Hamming distance and squared $e$-distance for different values of the regularization parameter $\boldsymbol{\lambda}$ on the MNIST dataset with latent dimension $\boldsymbol{d=128}$}
    \label{tab:mnist-128-loss-lambda}
    \begin{tabular}{lccccc}
        \toprule
         & $\boldsymbol{\lambda=1}$ & $\boldsymbol{\lambda=0.1}$ & $\boldsymbol{\lambda=0.01}$ & $\boldsymbol{\lambda=0.001}$ &  $\boldsymbol{\lambda=0}$ \\
        \midrule
        \textbf{Cross-Entropy} \, $\boldsymbol{(\times 10^{-6})}$ & $586701$ & $97695$ & $45868$ & $38158$ & $36420$  \\
        \textbf{MHD} \, $\boldsymbol{(\times 10^{-6})}$ & $3122$ & $1841$ & $495$ & $96$ & $37$  \\
        \textbf{e-Distance} & $0.54$ & $0.55$ & $0.69$ & $1.38$ & $2.55$ \\
        \bottomrule
    \end{tabular}
\end{table}

\begin{table}[H]
    \centering
    \caption{GPCA loss, Hamming distance and squared $e$-distance for different values of the regularization parameter $\boldsymbol{\lambda}$ on the \textbf{Cityscapes} dataset with latent dimension $\boldsymbol{d=256}$}
    \label{tab:cityscapes-256-loss-lambda}
    \begin{tabular}{lccccc}
        \toprule
         & $\boldsymbol{\lambda=1}$ & $\boldsymbol{\lambda=0.1}$ & $\boldsymbol{\lambda=0.01}$ & $\boldsymbol{\lambda=0.001}$ &  $\boldsymbol{\lambda=0}$ \\
        \midrule
        \textbf{Cross-Entropy} \, $\boldsymbol{(\times 10^{-6})}$ & $5133136$ & $698159$ & $207198$ & $119147$ & $99236$  \\
        \textbf{MHD} \, $\boldsymbol{(\times 10^{-6})}$ & $143927$ & $41458$ & $11937$ & $412$ & $00$  \\
        \textbf{e-Distance} & $4.91$ & $5.06$ & $7.24$ & $18.20$ & $58.23$ \\
        \bottomrule
    \end{tabular}
\end{table}
\clearpage

\subsection{Visual comparison of GPCA reconstructions}
We further illustrate the effect of the regularization parameter $\lambda$ on the GPCA reconstructions by visualizing the approximations $\widehat{s}_{i}$ for different values of $\lambda$ and different latent dimensions $d$.

\paragraph{Synthetic large-class datasets.}
We consider four synthetic datasets, each represented by a histogram of a discrete distribution on $n=2$ variables with $c=92$ categories. From each dataset, we sample $N=3 \cdot 10^4$ points and then apply GPCA, see Section~\ref{sec:GPCA}. This yields a low-dimensional representation, which is subsequently used to train our model via Flow Matching, as described in Section~\ref{sec:Flow-Matching}.

\vspace{-1.5em}

\begin{figure}[H]
	\centering
	\newcommand{\simpledatasetlambdaimg}[1]{\includegraphics[width=0.175\textwidth]{#1}}
    \newcommand{\simpledatasetlambdaimgboxed}[1]{\begingroup\setlength{\fboxsep}{0pt}\fbox{\includegraphics[width=\dimexpr0.175\textwidth-2\fboxrule\relax]{#1}}\endgroup}
	{\small
	\makebox[0.175\textwidth][c]{\boldmath$\lambda=10^{0}$}\hspace{0.03\textwidth}%
	\makebox[0.175\textwidth][c]{\boldmath$\lambda=10^{-1}$}\hspace{0.03\textwidth}%
	\makebox[0.175\textwidth][c]{\boldmath$\lambda=10^{-2}$}\hspace{0.03\textwidth}%
	\makebox[0.175\textwidth][c]{\boldmath$\lambda=10^{-3}$}\hspace{0.03\textwidth}%
	\makebox[0.175\textwidth][c]{\textbf{Ground truth}}
	\par\medskip
	\captionsetup[subfigure]{justification=centering}
	\begin{subfigure}[t]{\textwidth}
		\centering
        \simpledatasetlambdaimgboxed{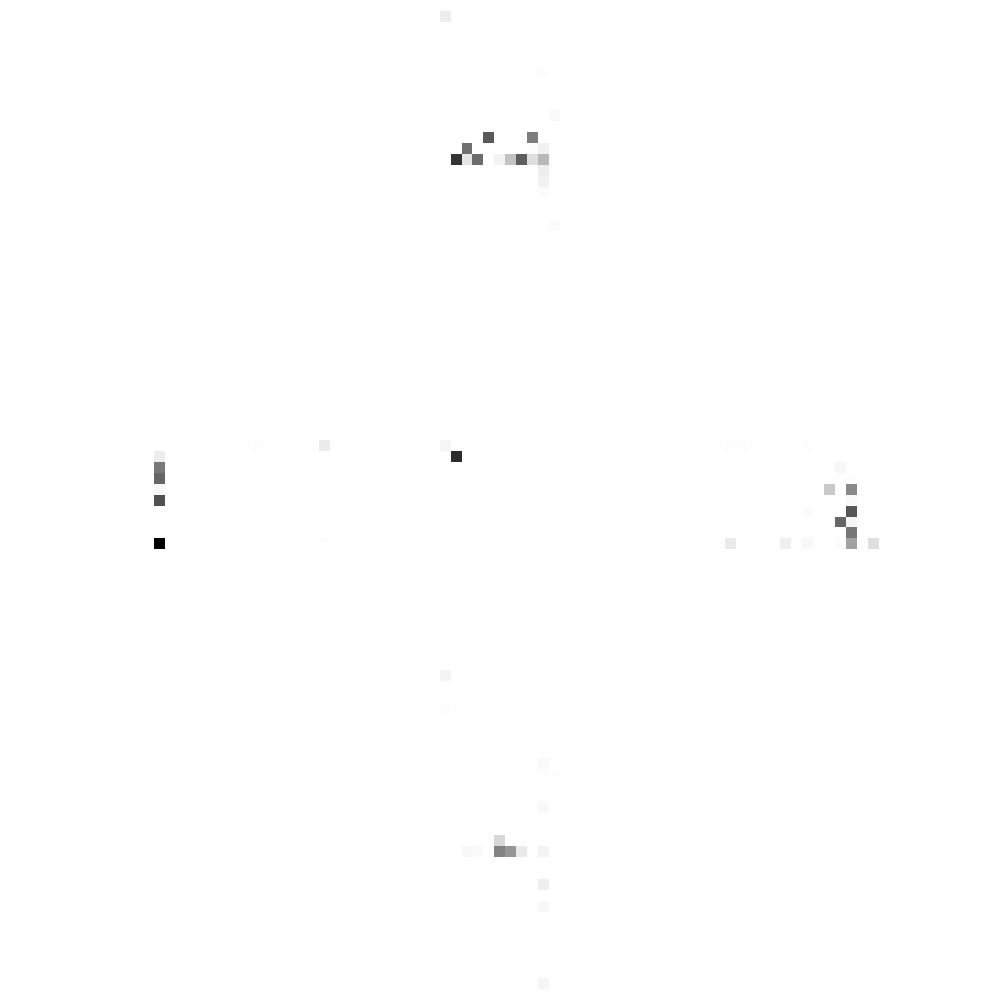}\hspace{0.03\textwidth}%
        \simpledatasetlambdaimgboxed{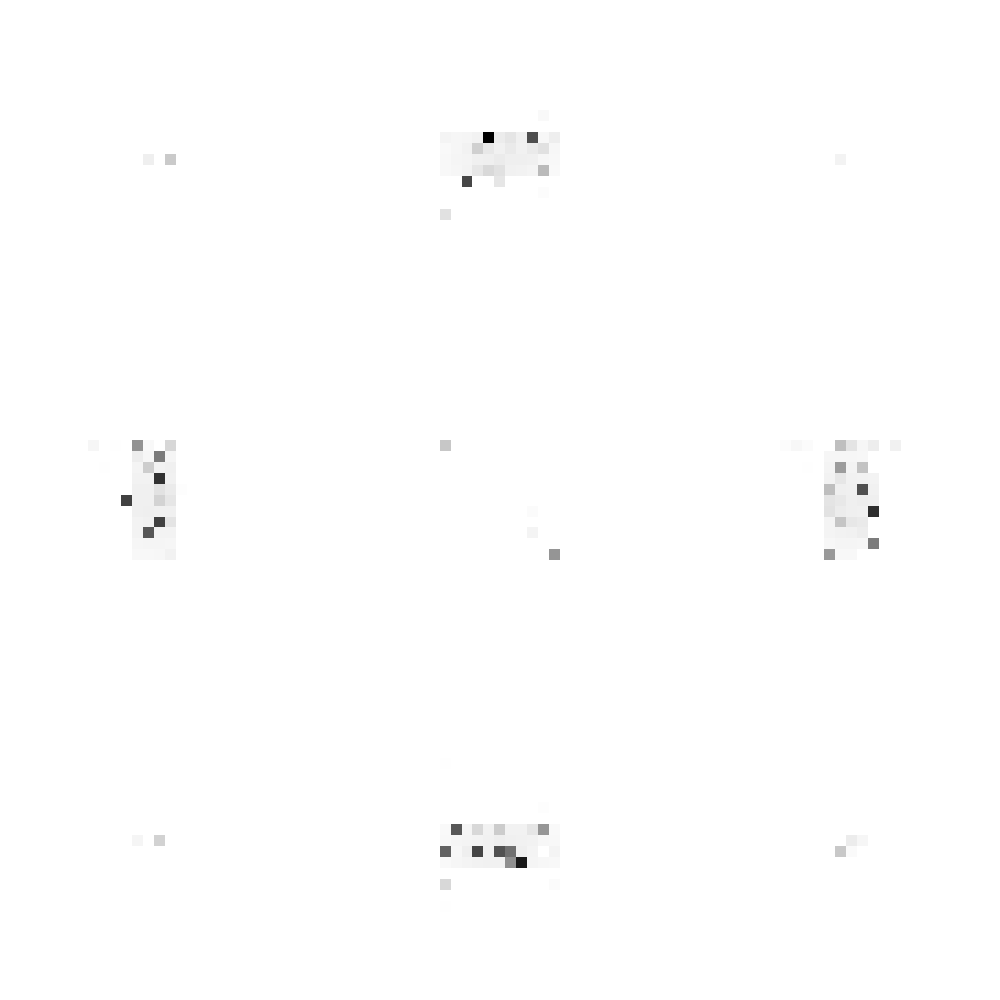}\hspace{0.03\textwidth}%
        \simpledatasetlambdaimgboxed{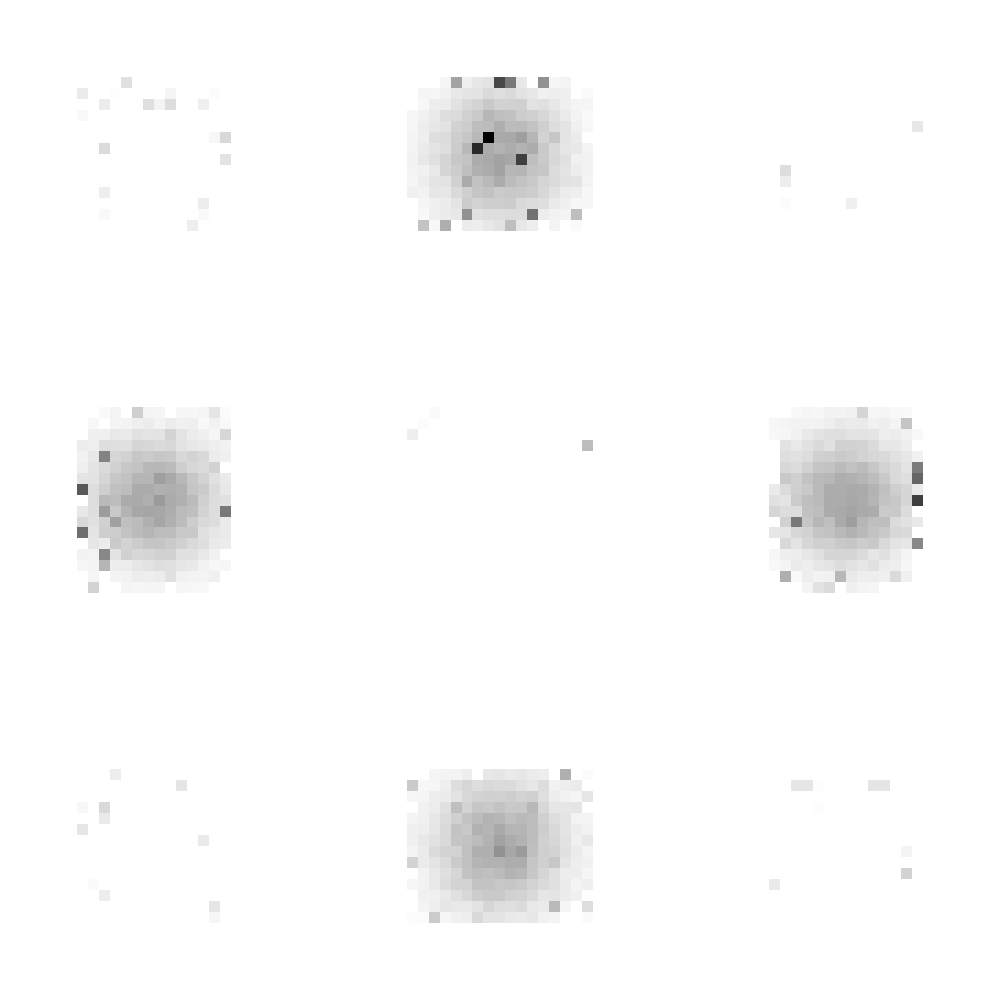}\hspace{0.03\textwidth}%
        \simpledatasetlambdaimgboxed{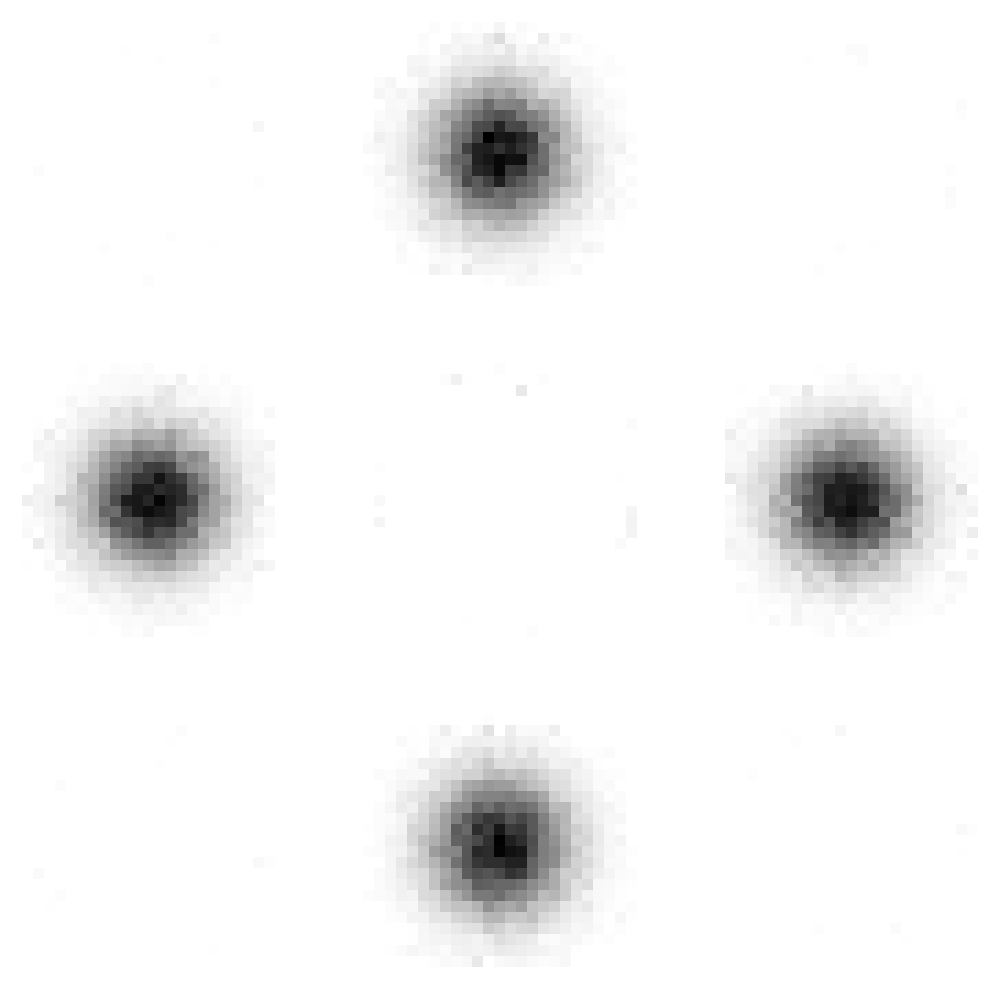}\hspace{0.03\textwidth}%
		\simpledatasetlambdaimg{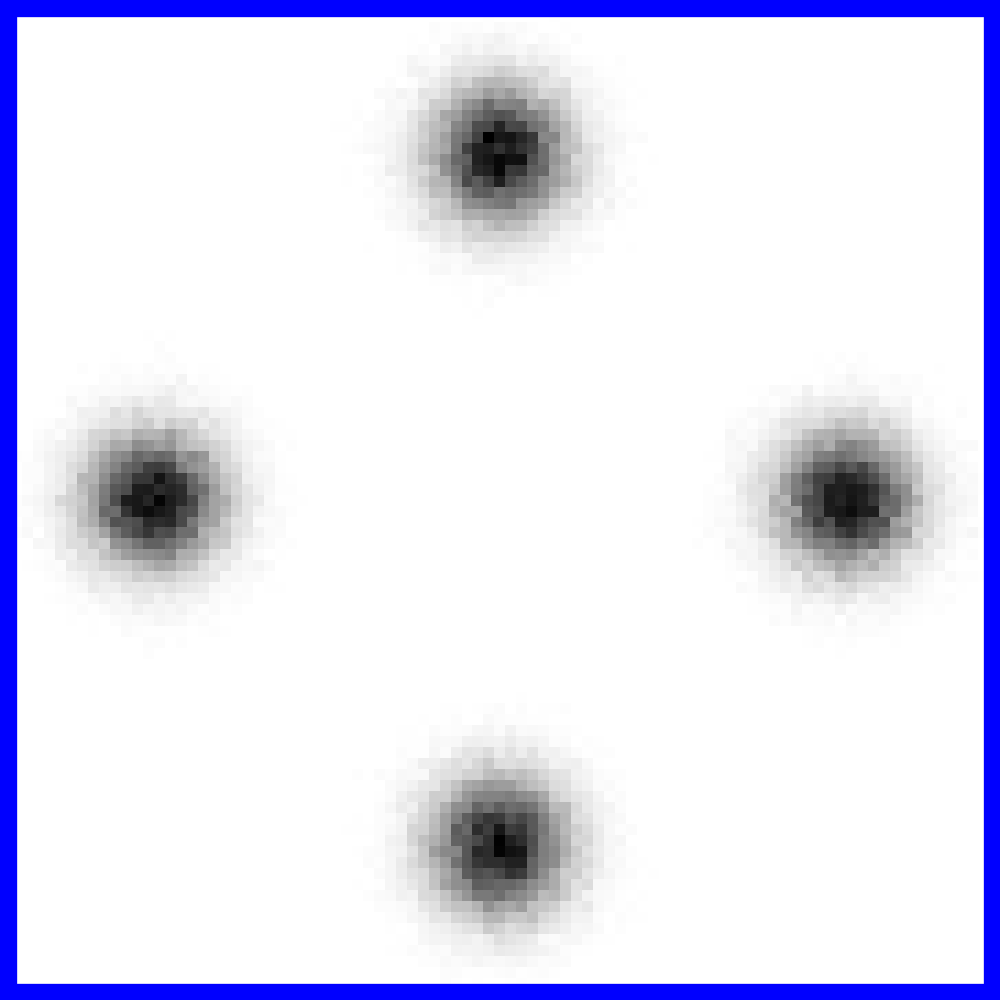}
		\subcaption{Four Gaussians}
		\label{fig:simple-datasets-four-gaussians-lambda}
	\end{subfigure}

	\vspace{0.5em}
	\begin{subfigure}[t]{\textwidth}
		\centering
        \simpledatasetlambdaimgboxed{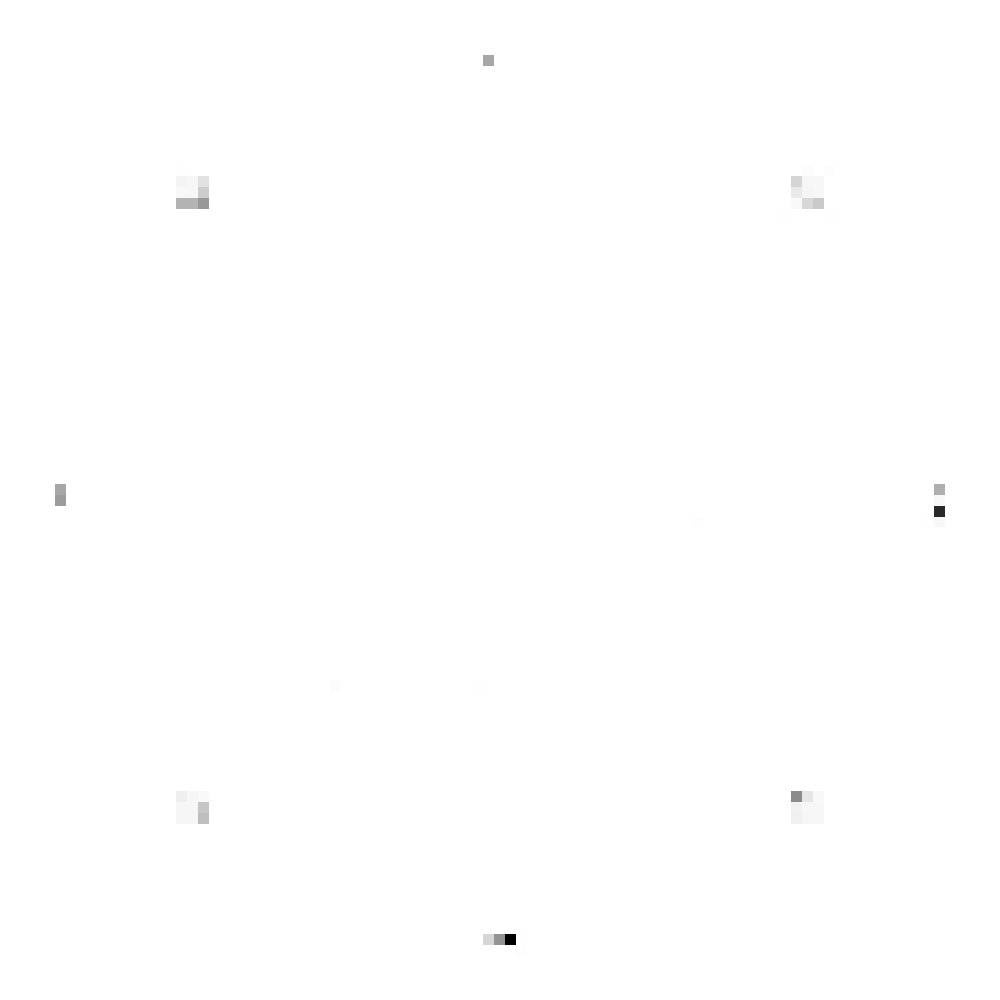}\hspace{0.03\textwidth}%
        \simpledatasetlambdaimgboxed{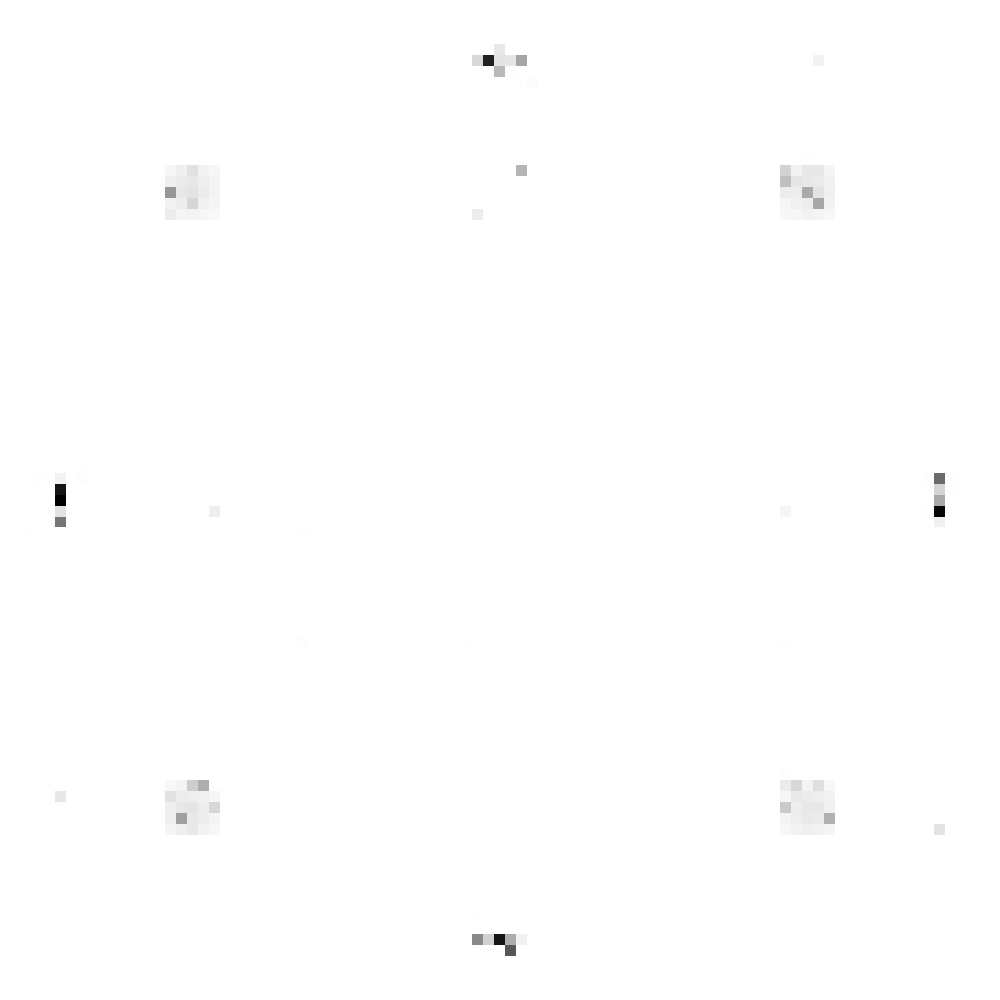}\hspace{0.03\textwidth}%
        \simpledatasetlambdaimgboxed{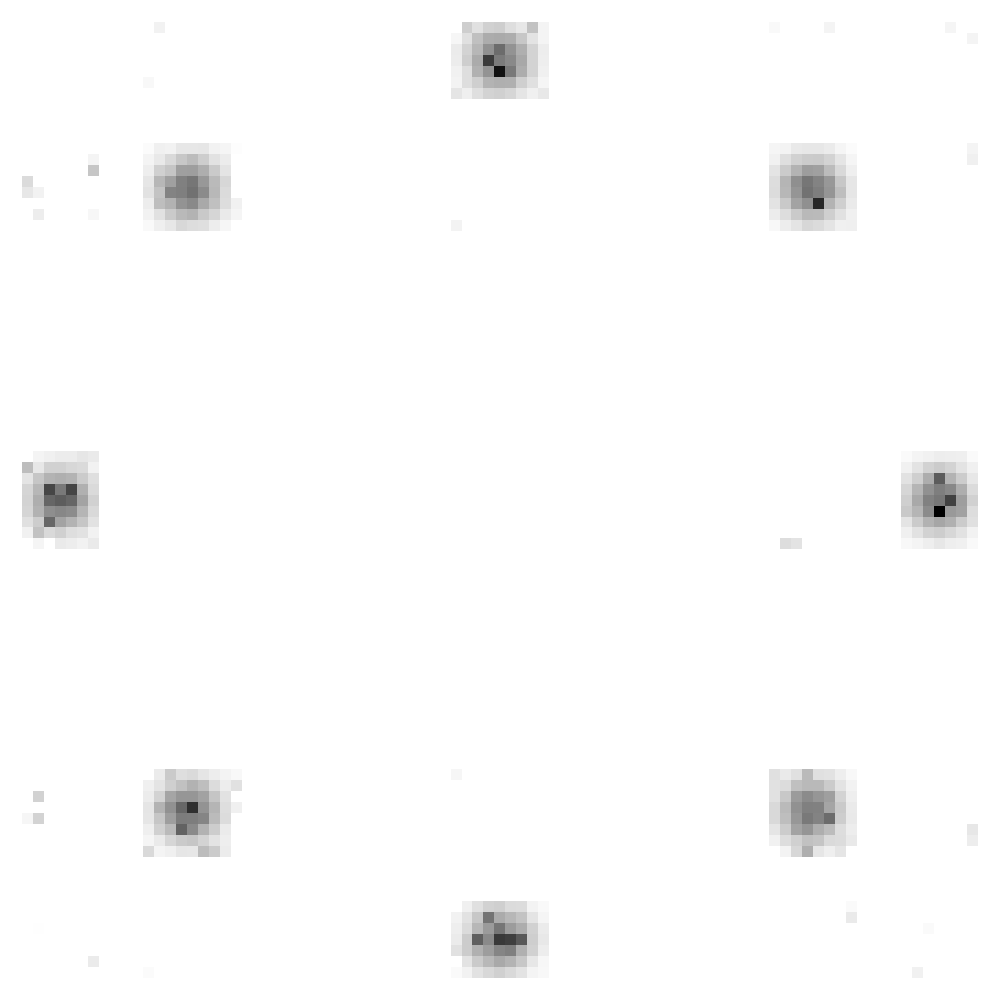}\hspace{0.03\textwidth}%
        \simpledatasetlambdaimgboxed{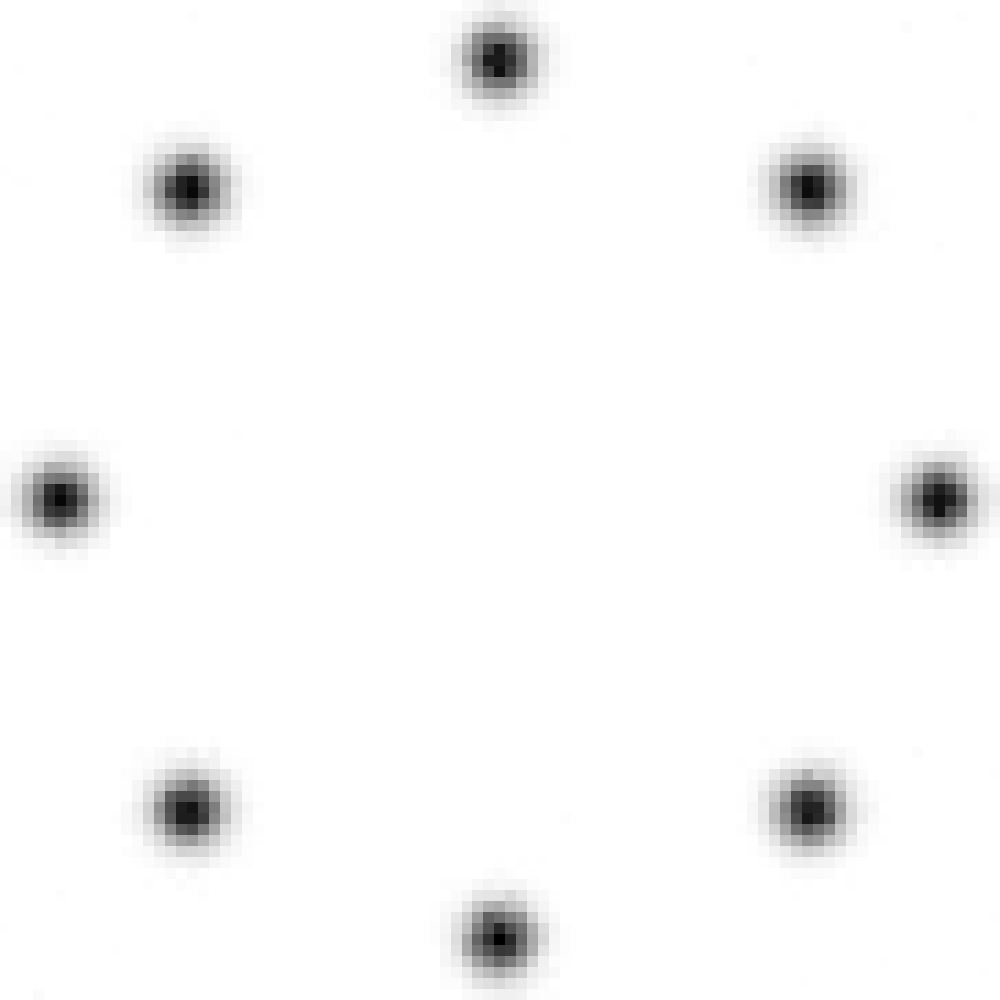}\hspace{0.03\textwidth}%
		\simpledatasetlambdaimg{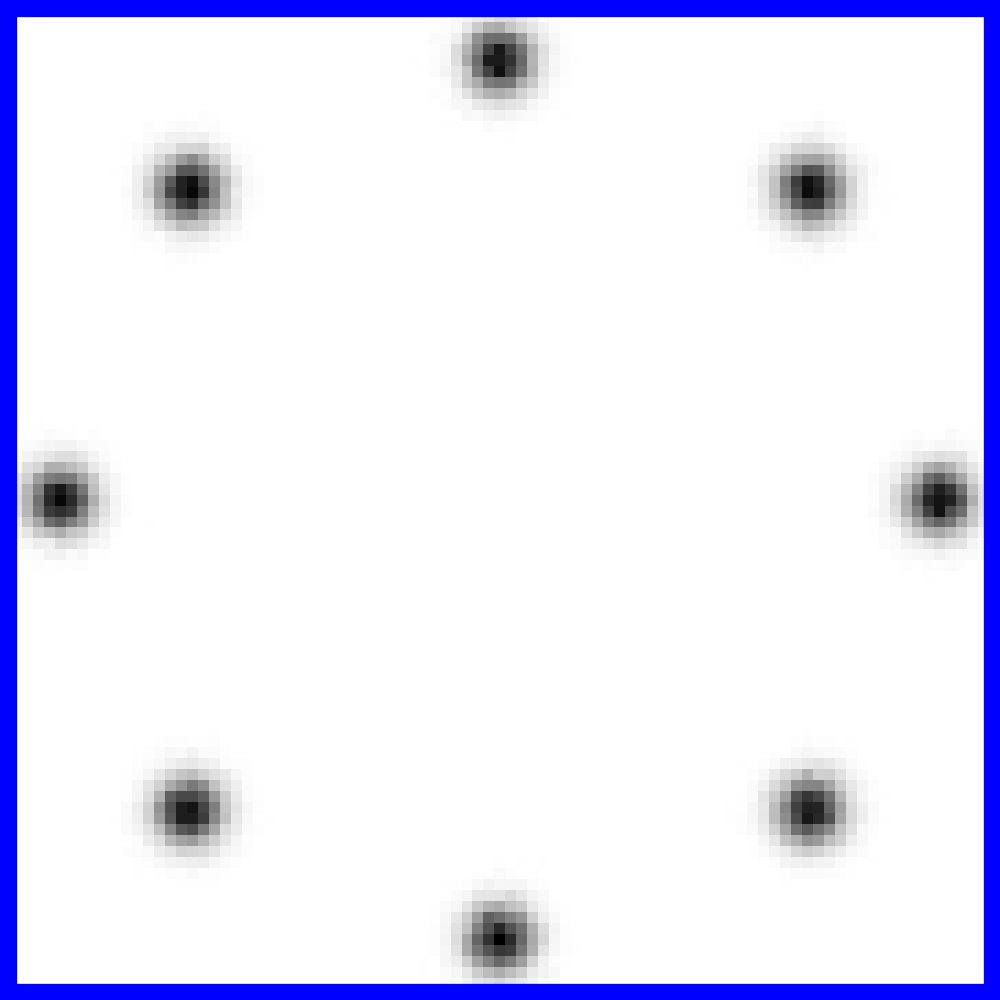}
		\subcaption{Gaussian Mixture}
		\label{fig:simple-datasets-gaussian-mixture-lambda}
	\end{subfigure}

	\vspace{0.5em}
	\begin{subfigure}[t]{\textwidth}
		\centering
        \simpledatasetlambdaimgboxed{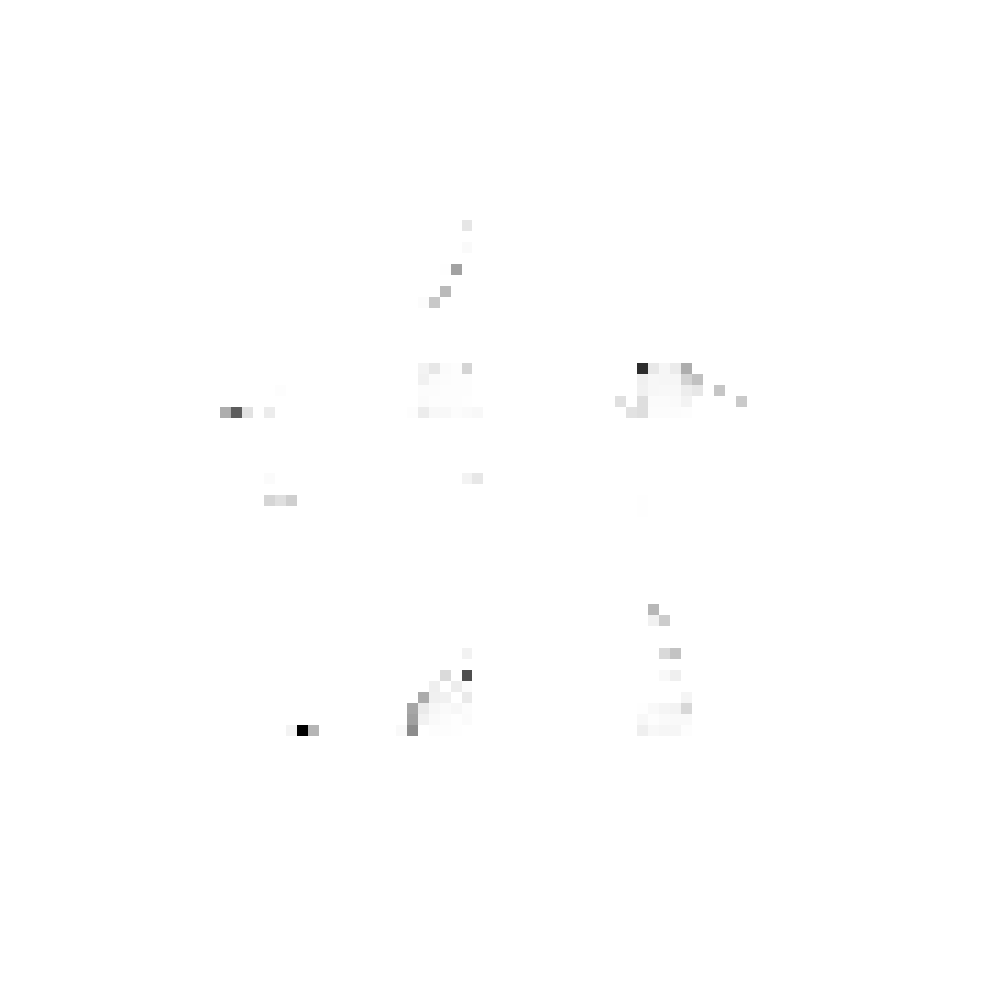}\hspace{0.03\textwidth}%
        \simpledatasetlambdaimgboxed{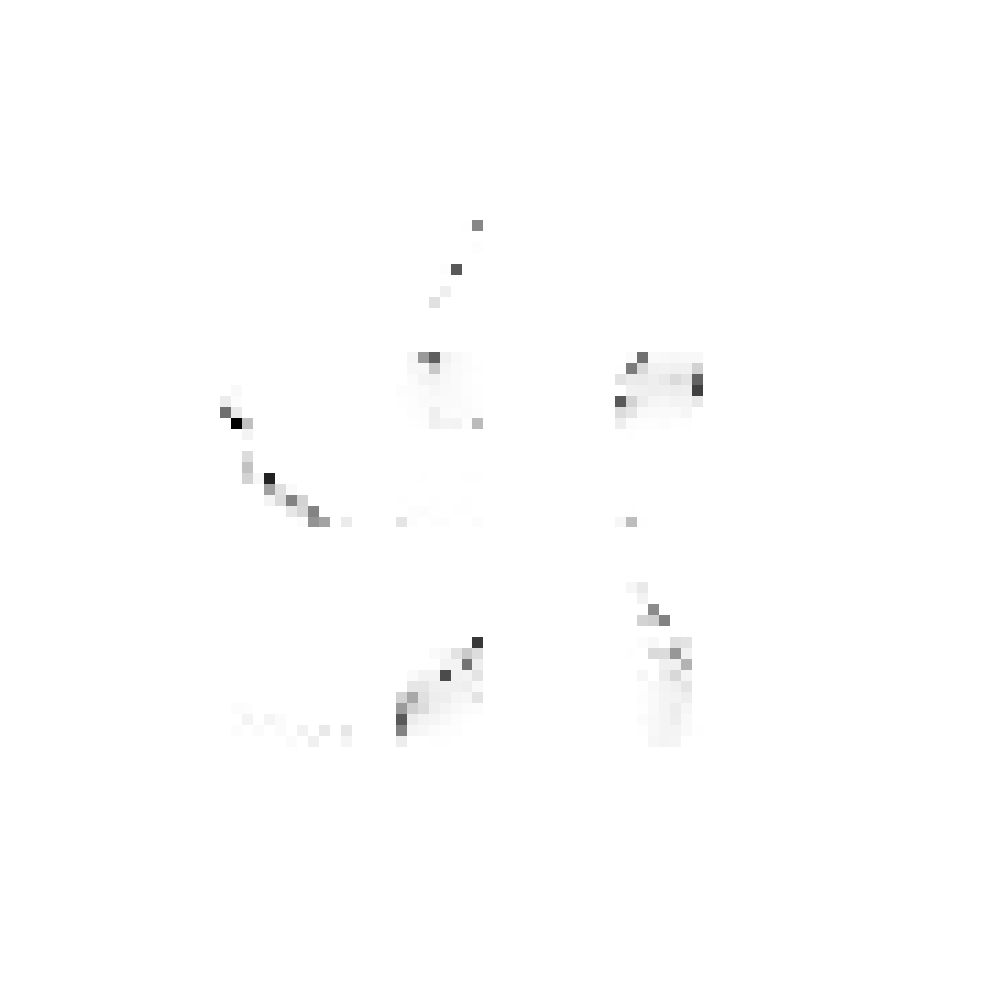}\hspace{0.03\textwidth}%
        \simpledatasetlambdaimgboxed{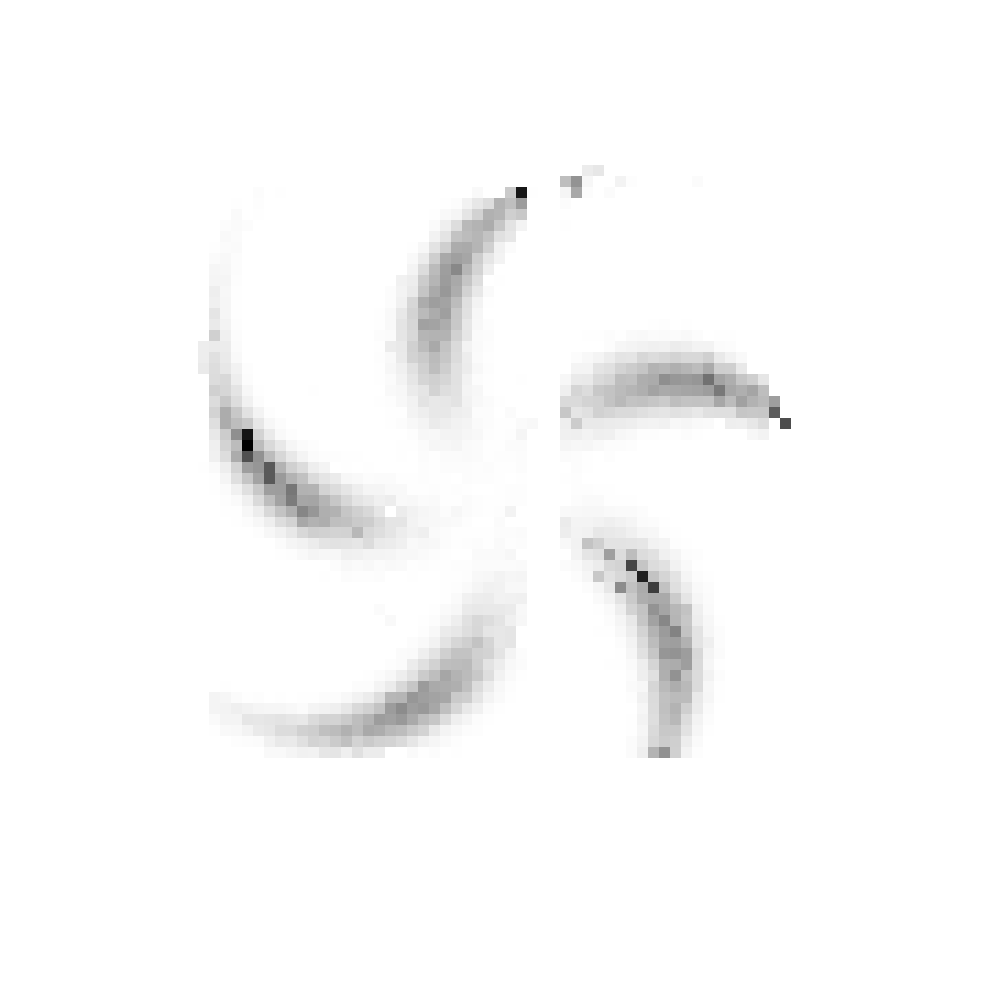}\hspace{0.03\textwidth}%
        \simpledatasetlambdaimgboxed{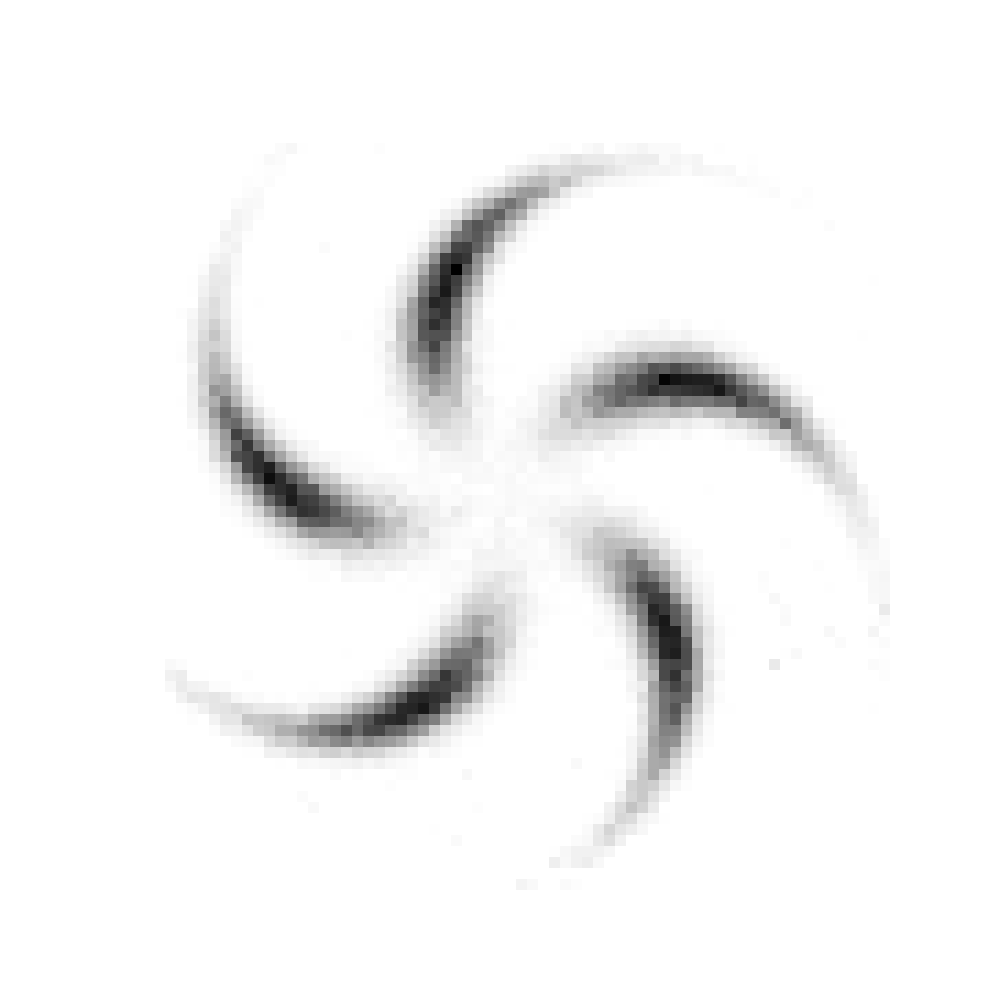}\hspace{0.03\textwidth}%
		\simpledatasetlambdaimg{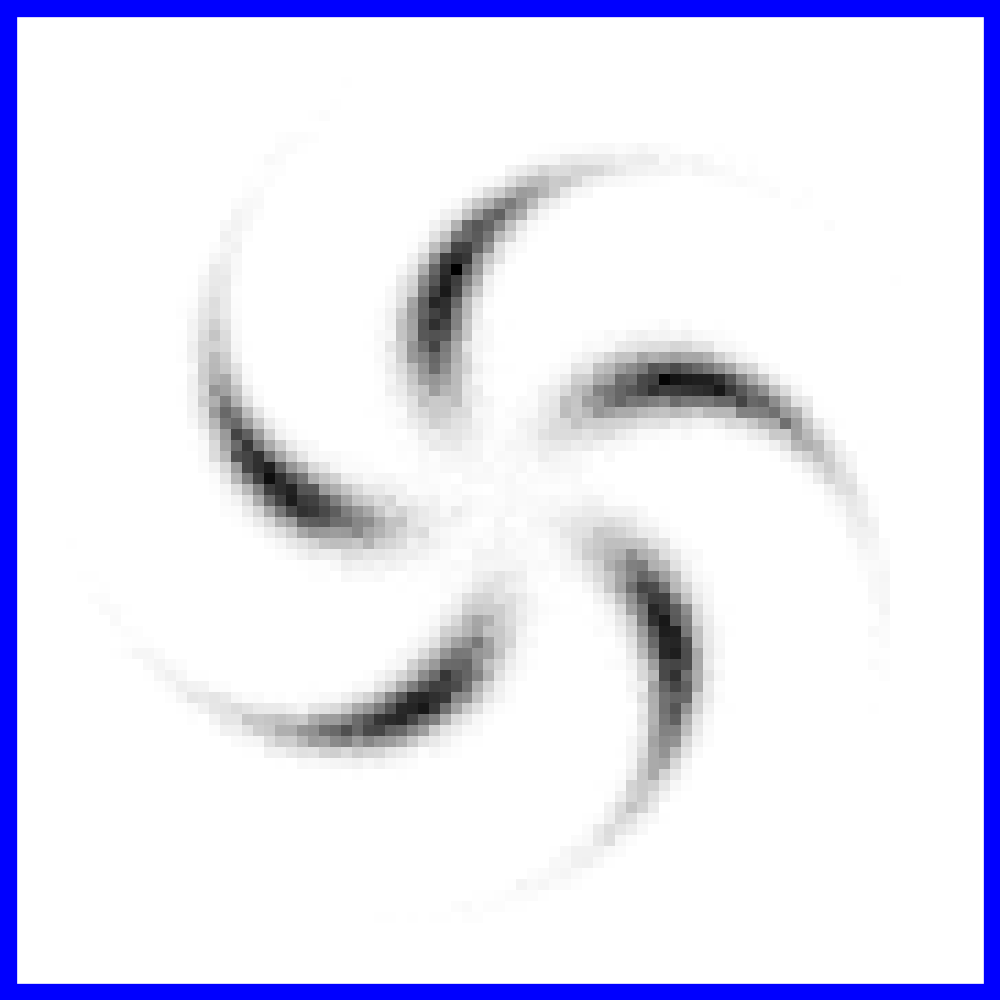}
		\subcaption{Pinwheel}
		\label{fig:simple-datasets-pinwheel-lambda}
	\end{subfigure}

	\vspace{0.5em}
	\begin{subfigure}[t]{\textwidth}
		\centering
        \simpledatasetlambdaimgboxed{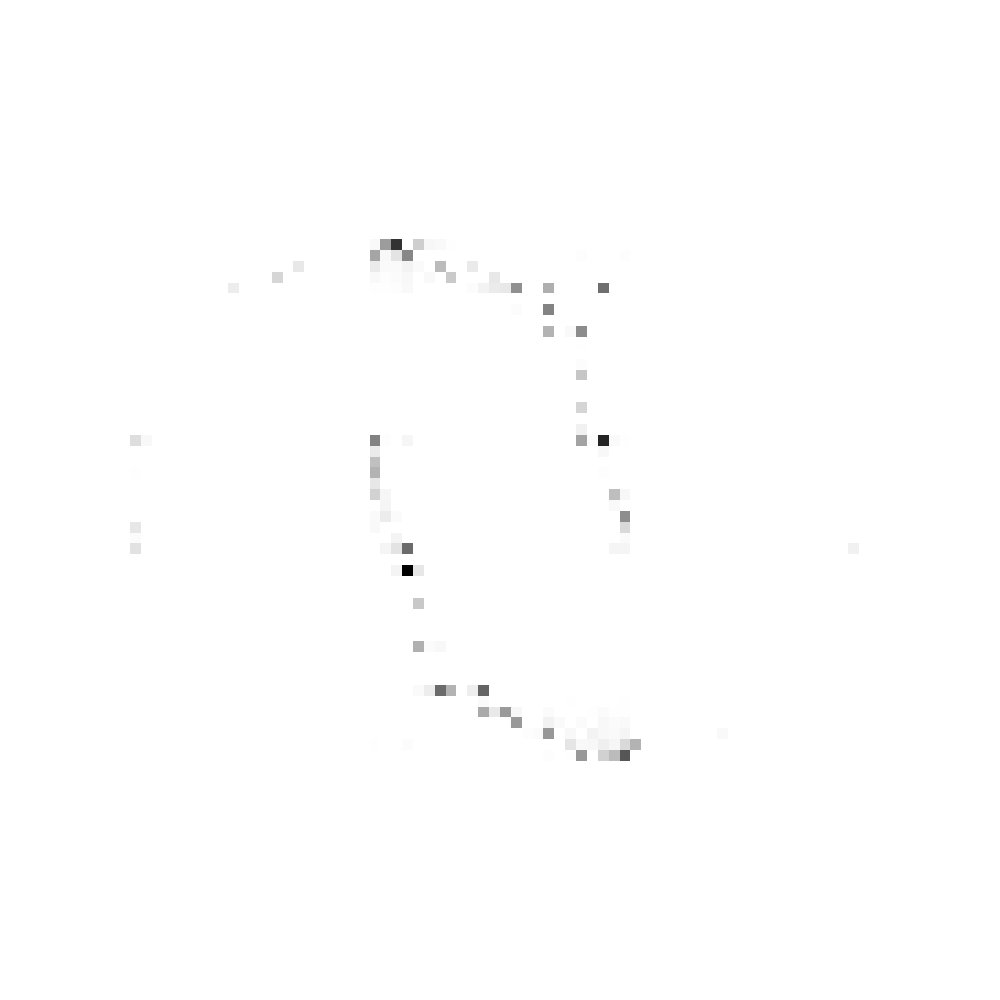}\hspace{0.03\textwidth}%
        \simpledatasetlambdaimgboxed{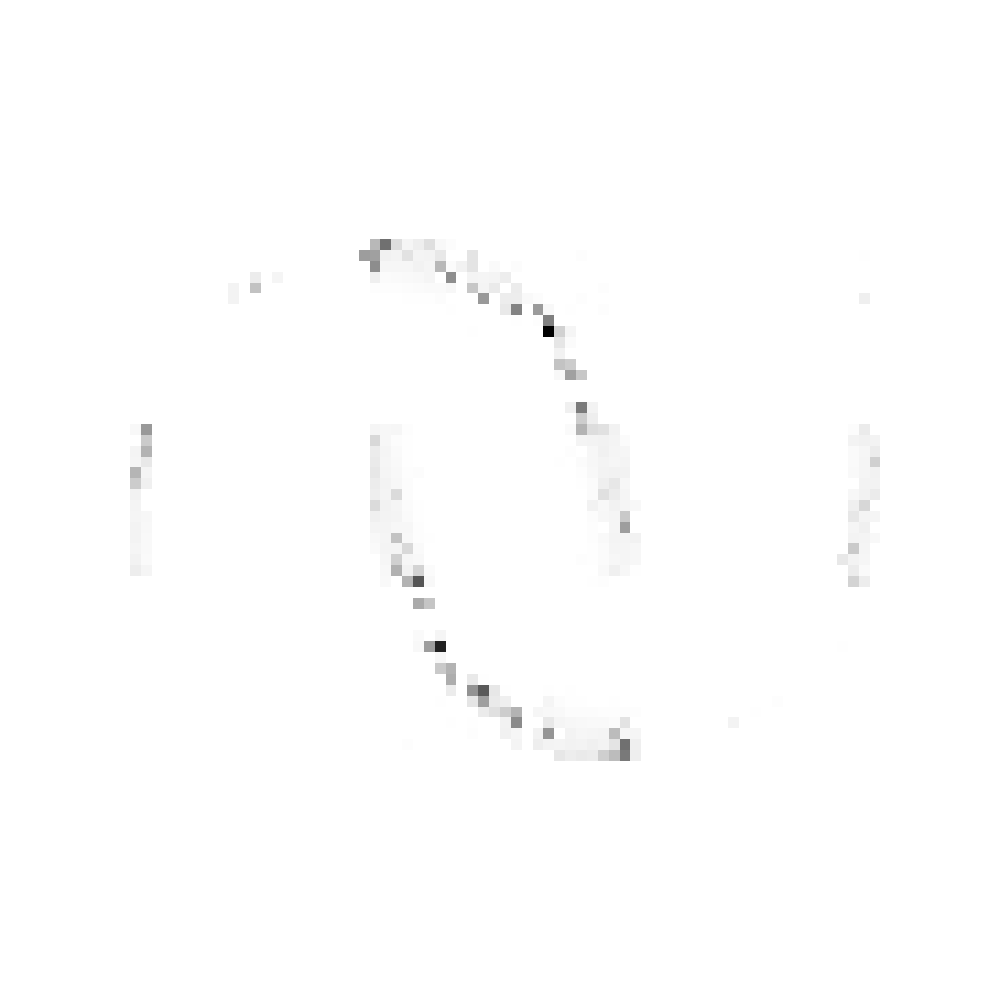}\hspace{0.03\textwidth}%
        \simpledatasetlambdaimgboxed{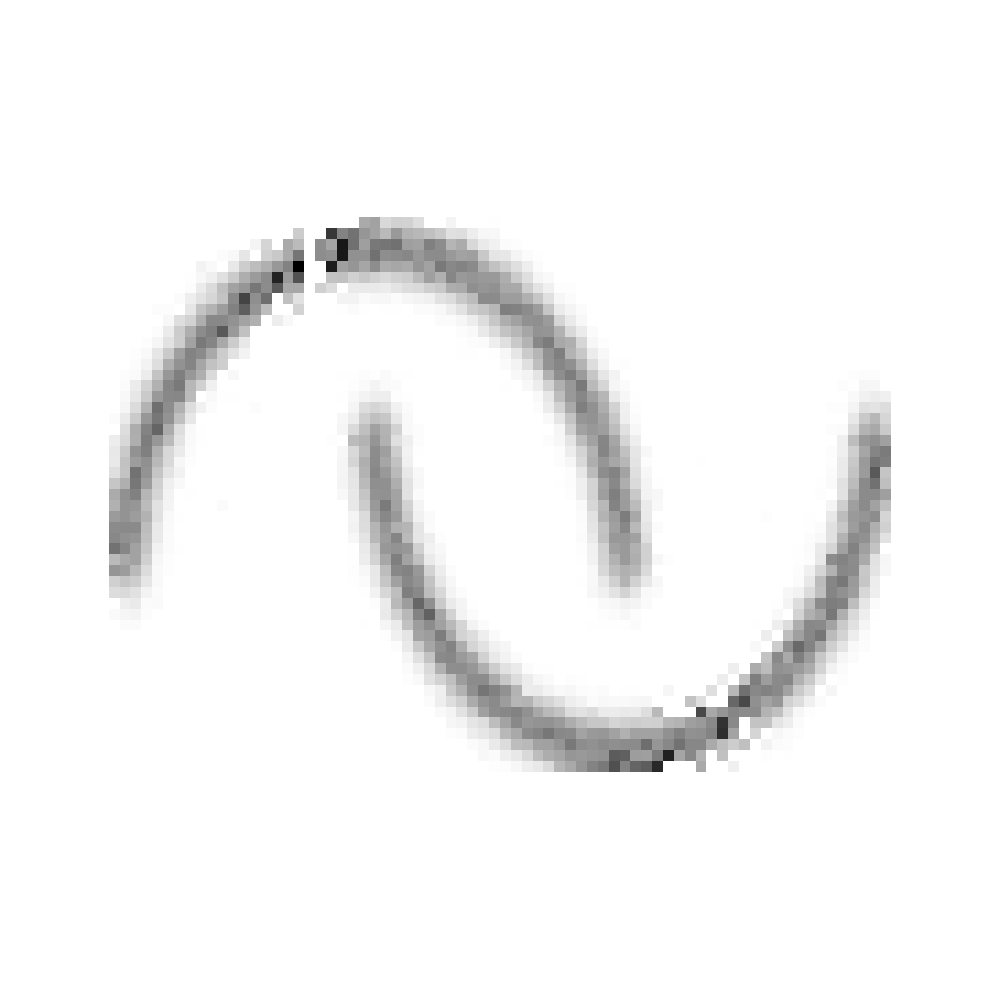}\hspace{0.03\textwidth}%
        \simpledatasetlambdaimgboxed{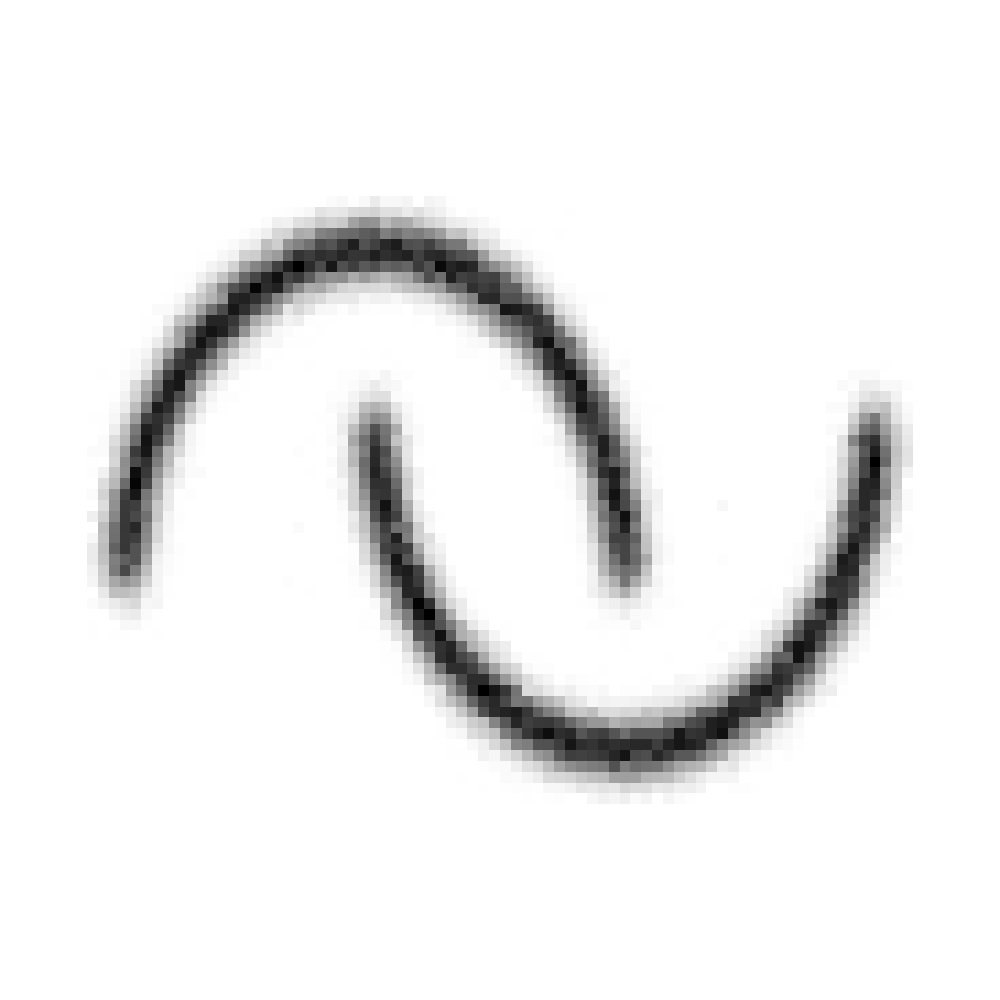}\hspace{0.03\textwidth}%
		\simpledatasetlambdaimg{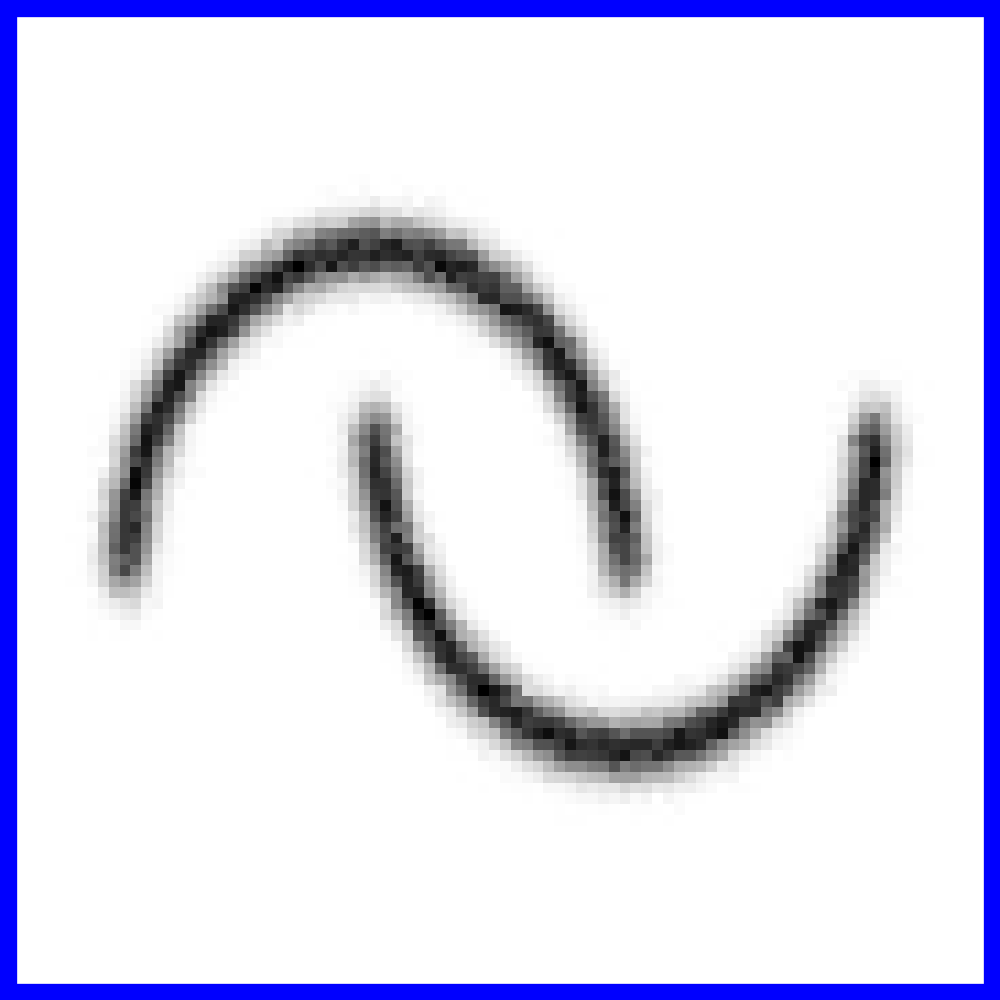}
		\subcaption{Two Moons}
		\label{fig:simple-datasets-two-moons-lambda}
	\end{subfigure}
	}
	\caption{Visual comparison of GPCA reconstructions of synthetic datasets, with $c=92$ categories, and regularization parameters $\lambda$, cf. Eq. \eqref{eq:loss-gpca-e_reg}, with \textbf{fixed latent dimension} $\boldsymbol{d=16}$. Each row corresponds to one dataset, and each column shows the empirical joint distribution estimated from $N=3 \cdot 10^4$ samples using $\lambda \in \{10^{0},10^{-1},10^{-2},10^{-3}\}$. The rightmost image, highlighted in blue, shows the empirical joint distribution estimated from the ground-truth distribution.}
	\label{fig:simple-datasets-grid-lambda}
\end{figure}

\begin{figure}[H]
	\centering
	\newcommand{\simpledatasetimg}[1]{\includegraphics[width=0.175\textwidth]{#1}}
    \newcommand{\simpledatasetimgboxed}[1]{\begingroup\setlength{\fboxsep}{0pt}\fbox{\includegraphics[width=\dimexpr0.175\textwidth-2\fboxrule\relax]{#1}}\endgroup}
	{\small
	\makebox[0.175\textwidth][c]{\boldmath$d=2$}\hspace{0.03\textwidth}%
	\makebox[0.175\textwidth][c]{\boldmath$d=4$}\hspace{0.03\textwidth}%
	\makebox[0.175\textwidth][c]{\boldmath$d=8$}\hspace{0.03\textwidth}%
	\makebox[0.175\textwidth][c]{\boldmath$d=16$}\hspace{0.03\textwidth}%
	\makebox[0.175\textwidth][c]{\textbf{Ground truth}}
	\par\medskip
	\captionsetup[subfigure]{justification=centering}
	\begin{subfigure}[t]{\textwidth}
		\centering
        \simpledatasetimgboxed{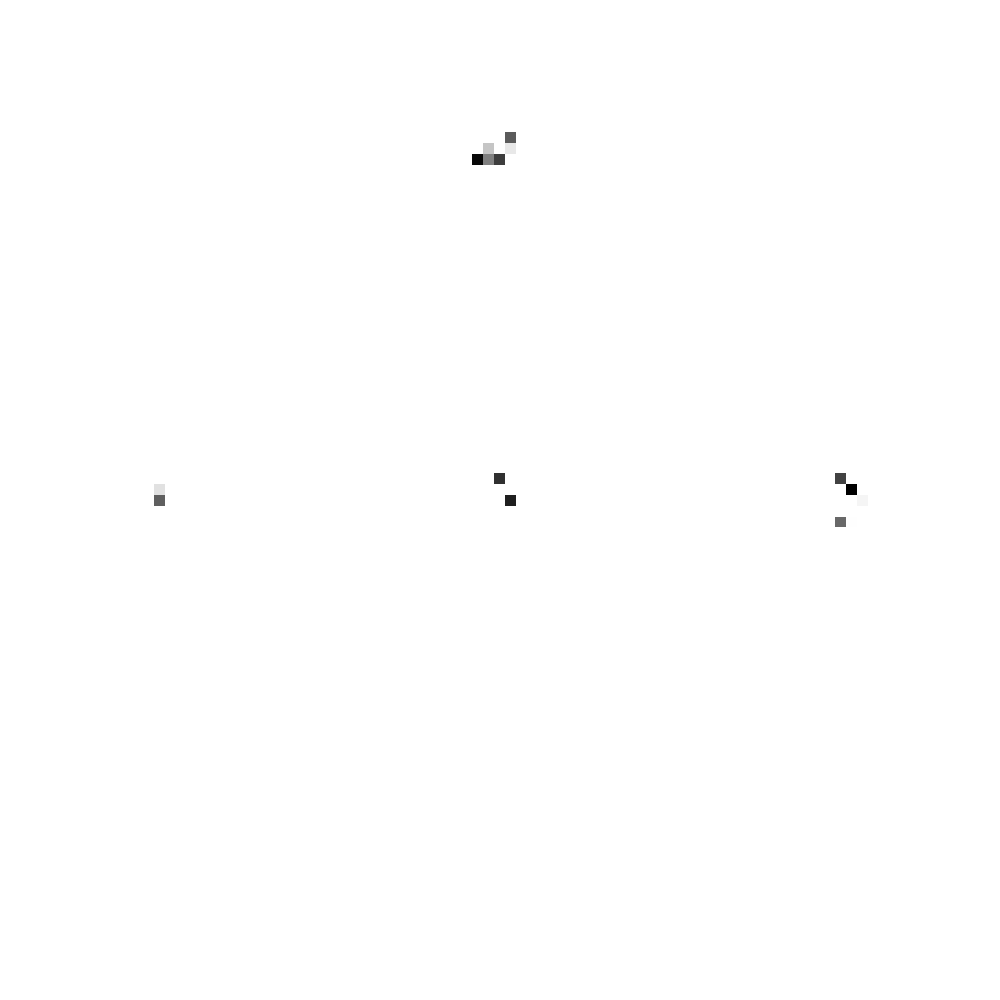}\hspace{0.03\textwidth}%
        \simpledatasetimgboxed{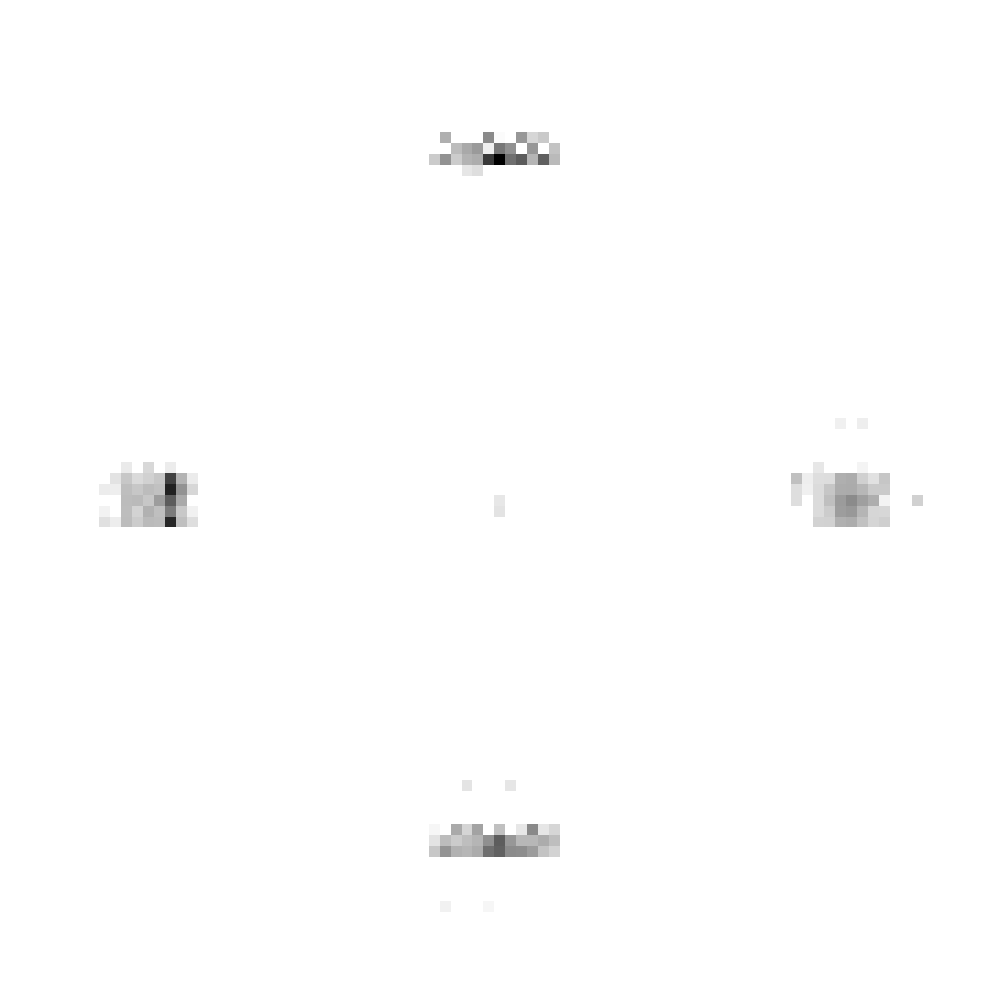}\hspace{0.03\textwidth}%
        \simpledatasetimgboxed{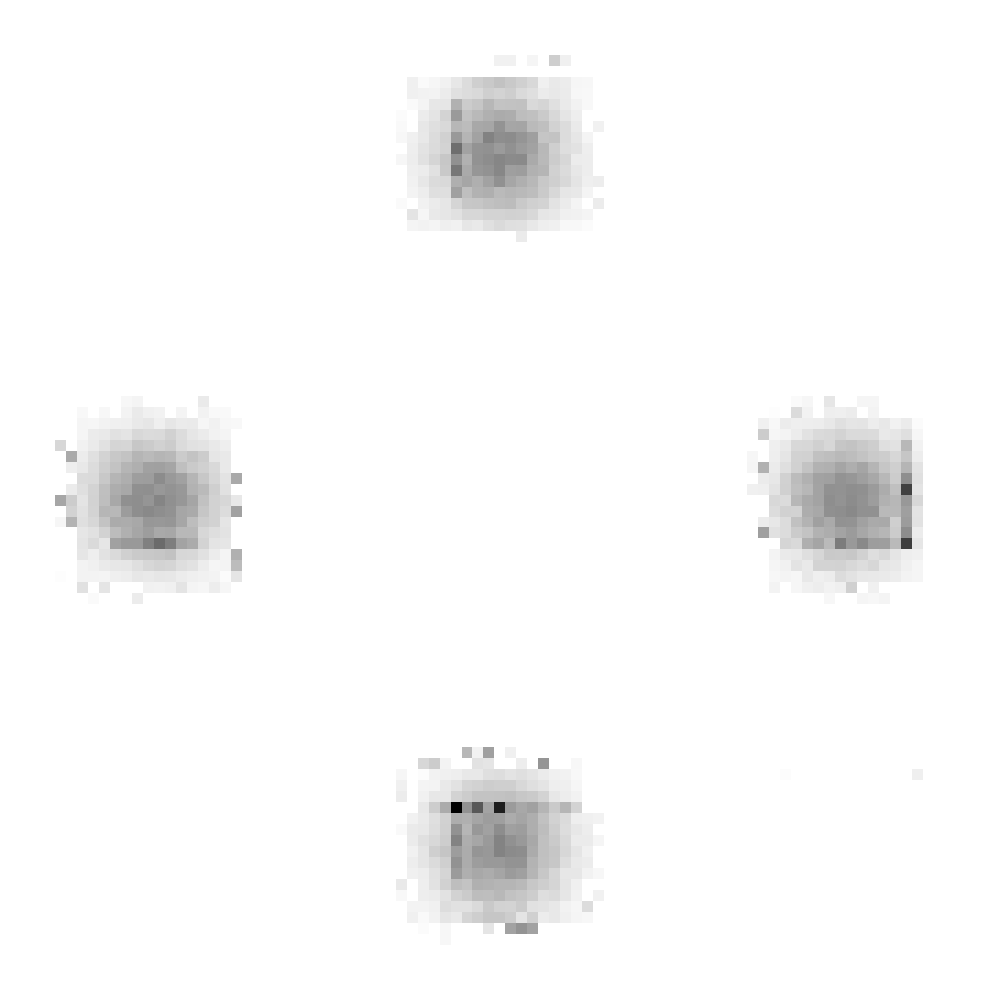}\hspace{0.03\textwidth}%
        \simpledatasetimgboxed{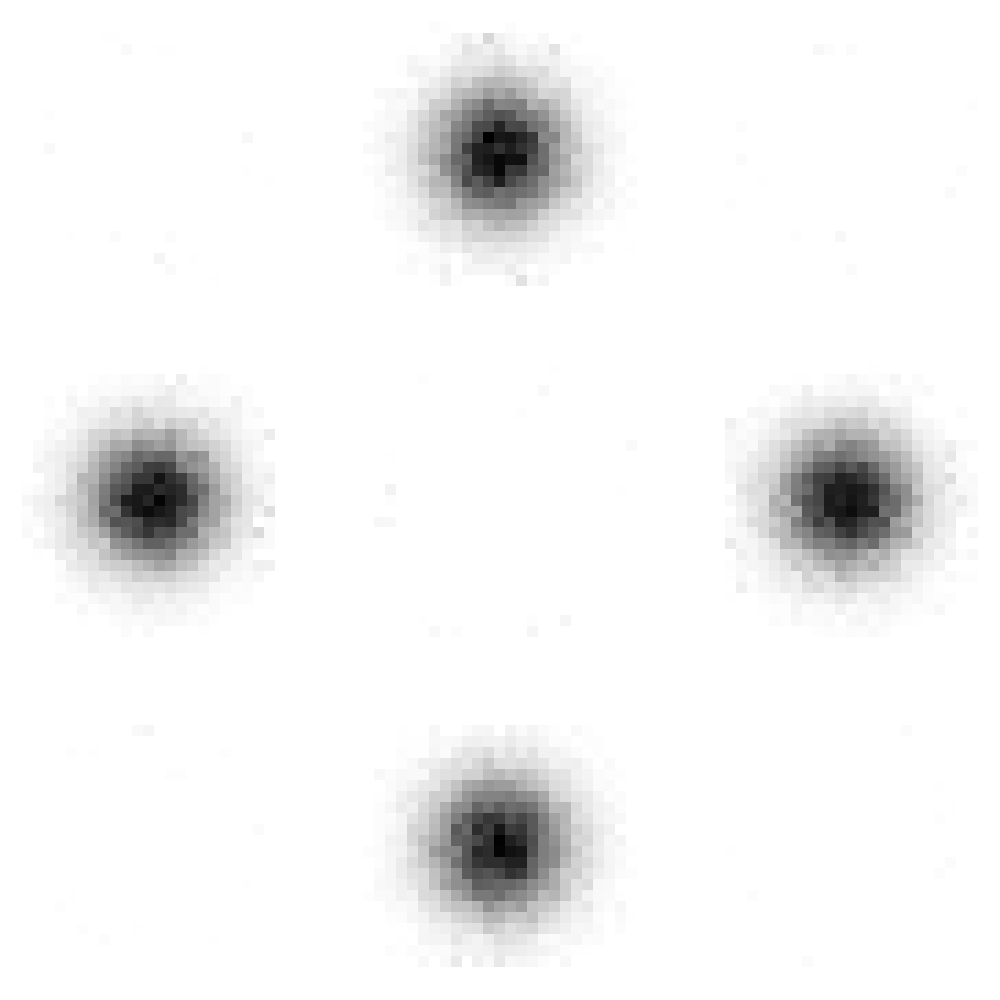}\hspace{0.03\textwidth}%
		\simpledatasetimg{Figures__four_gaussians.png}
		\subcaption{Four Gaussians}
		\label{fig:simple-datasets-four-gaussians}
	\end{subfigure}

	\vspace{0.5em}
	\begin{subfigure}[t]{\textwidth}
		\centering
        \simpledatasetimgboxed{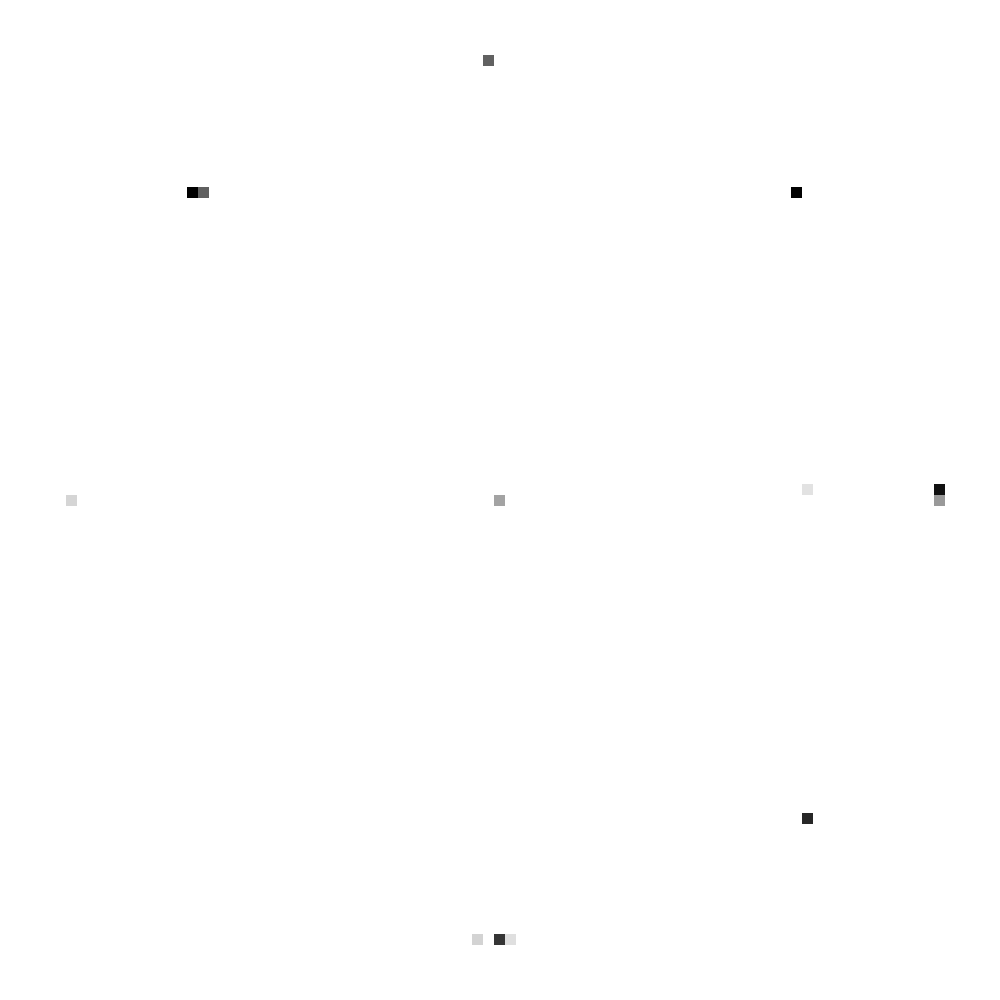}\hspace{0.03\textwidth}%
        \simpledatasetimgboxed{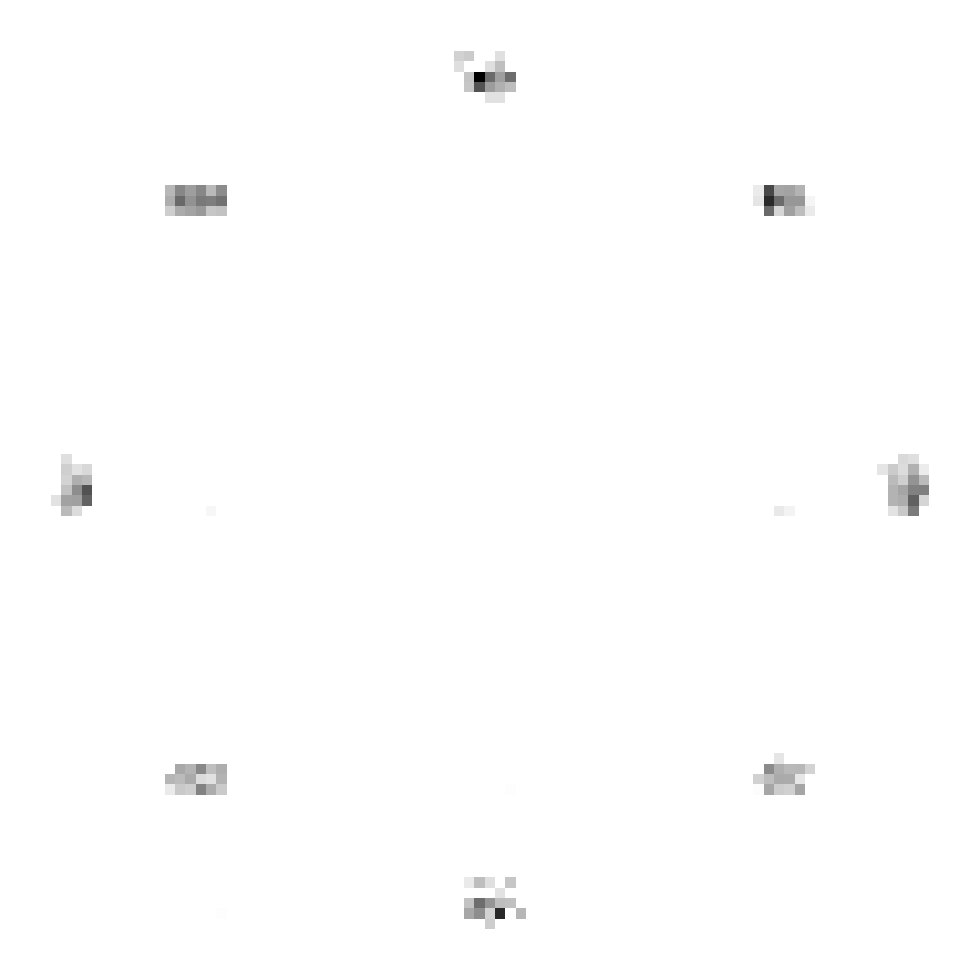}\hspace{0.03\textwidth}%
        \simpledatasetimgboxed{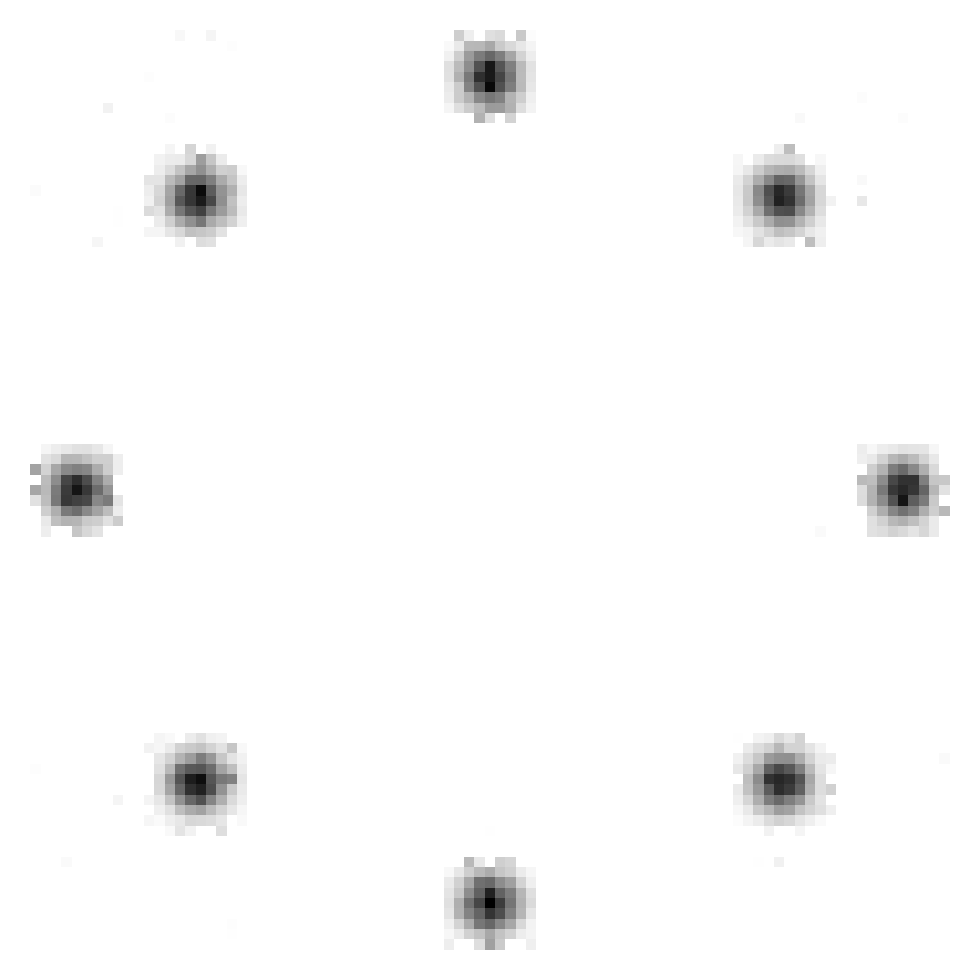}\hspace{0.03\textwidth}%
        \simpledatasetimgboxed{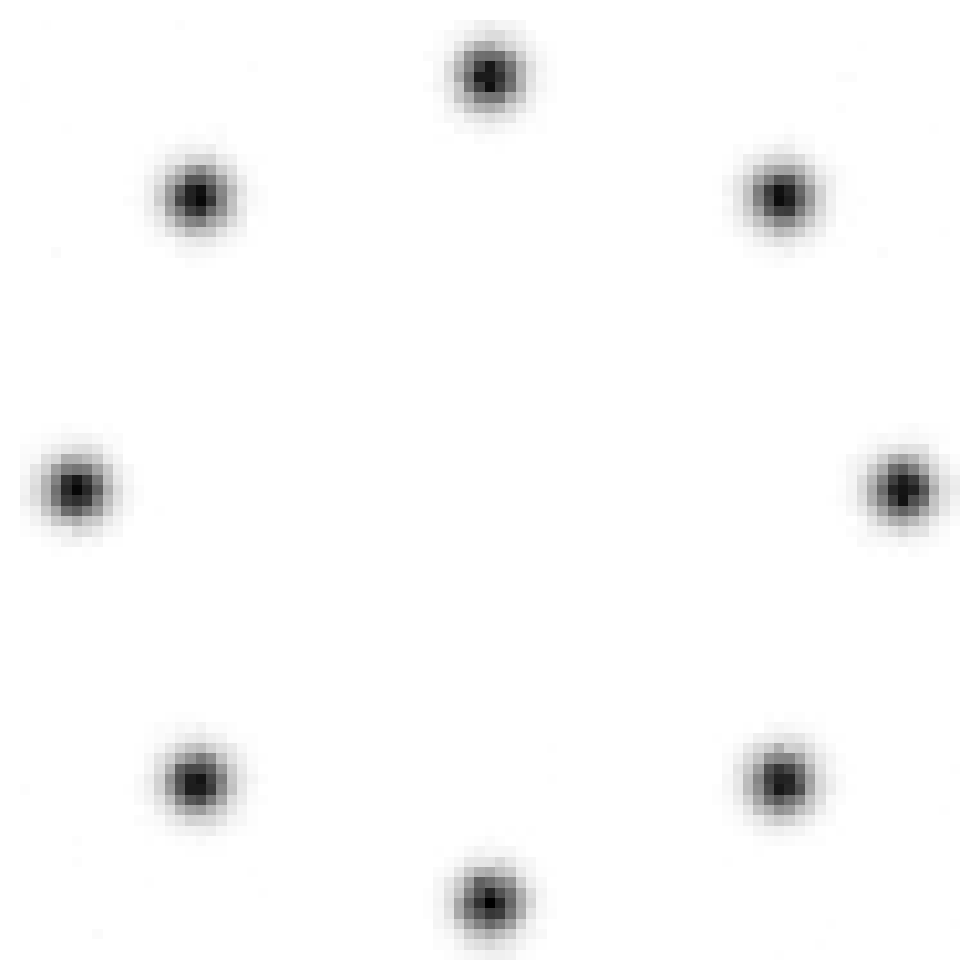}\hspace{0.03\textwidth}%
		\simpledatasetimg{Figures__gaussian_mixture.png}
		\subcaption{Gaussian Mixture}
		\label{fig:simple-datasets-gaussian-mixture}
	\end{subfigure}

	\vspace{0.5em}
	\begin{subfigure}[t]{\textwidth}
		\centering
        \simpledatasetimgboxed{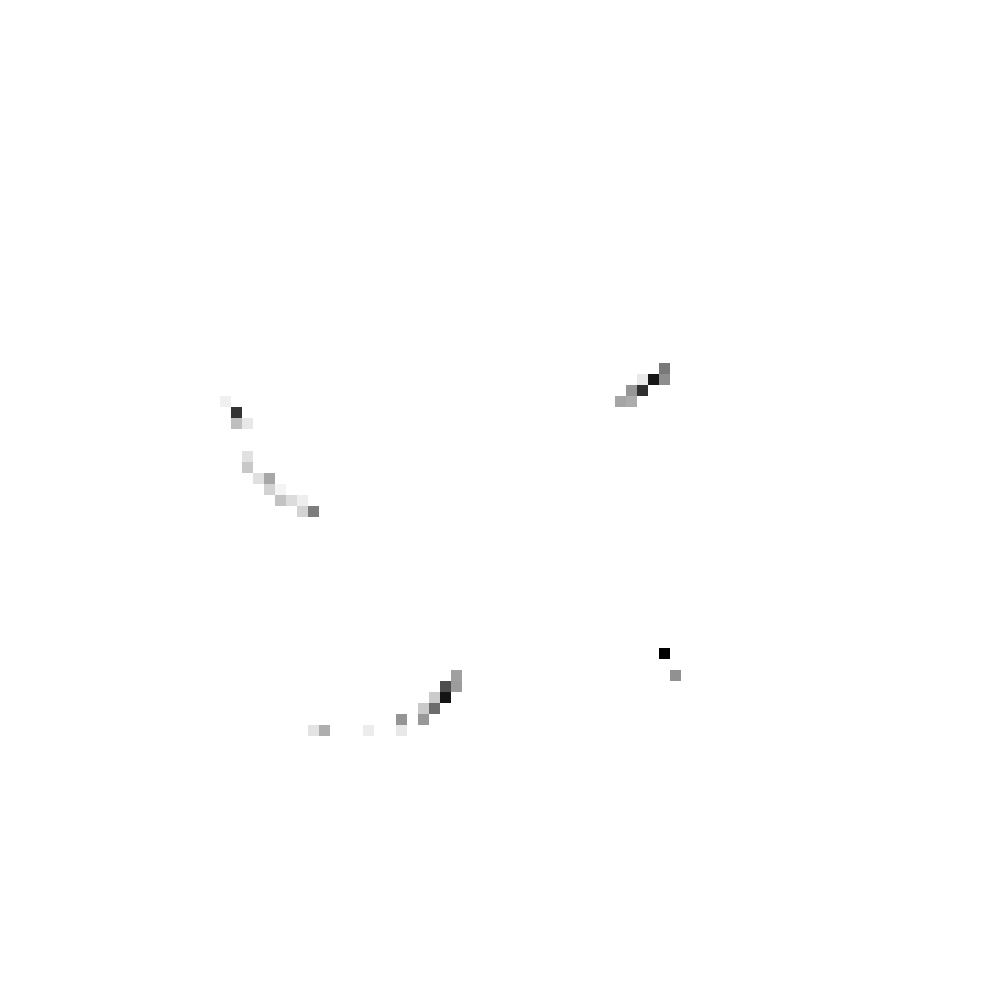}\hspace{0.03\textwidth}%
        \simpledatasetimgboxed{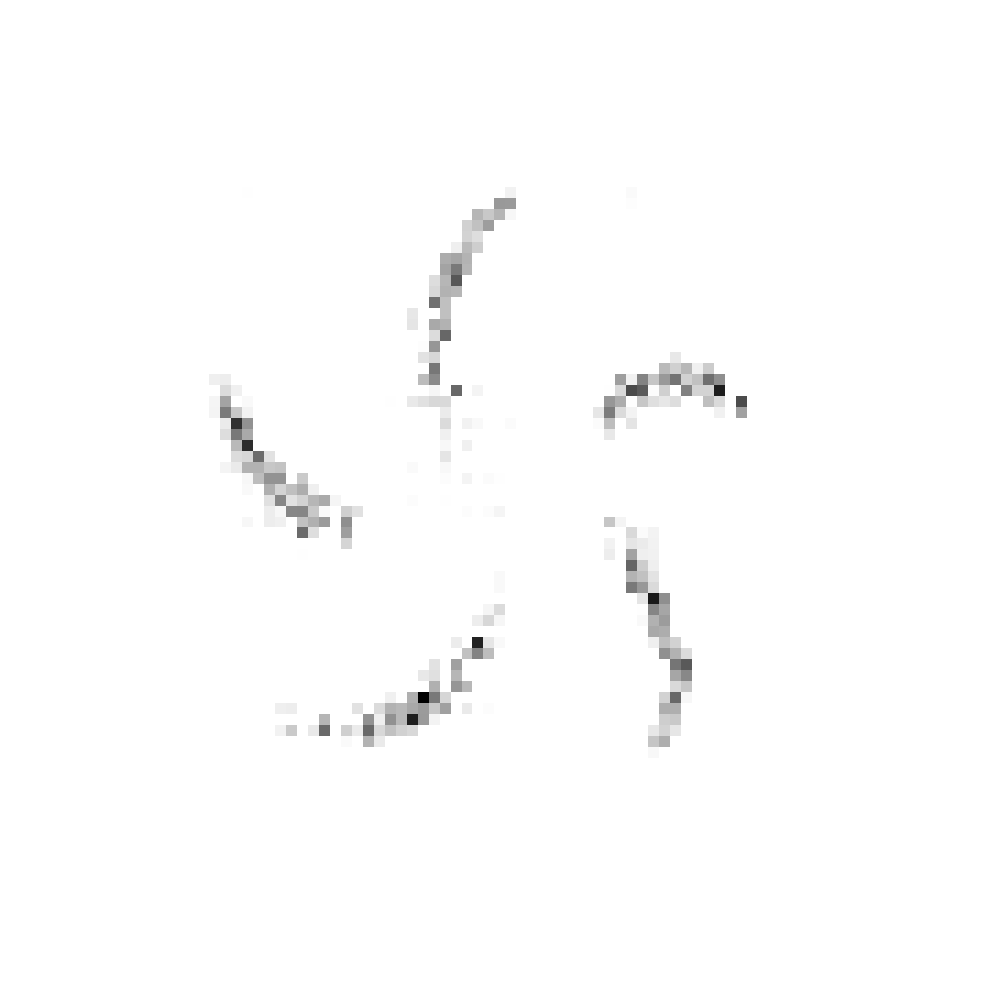}\hspace{0.03\textwidth}%
        \simpledatasetimgboxed{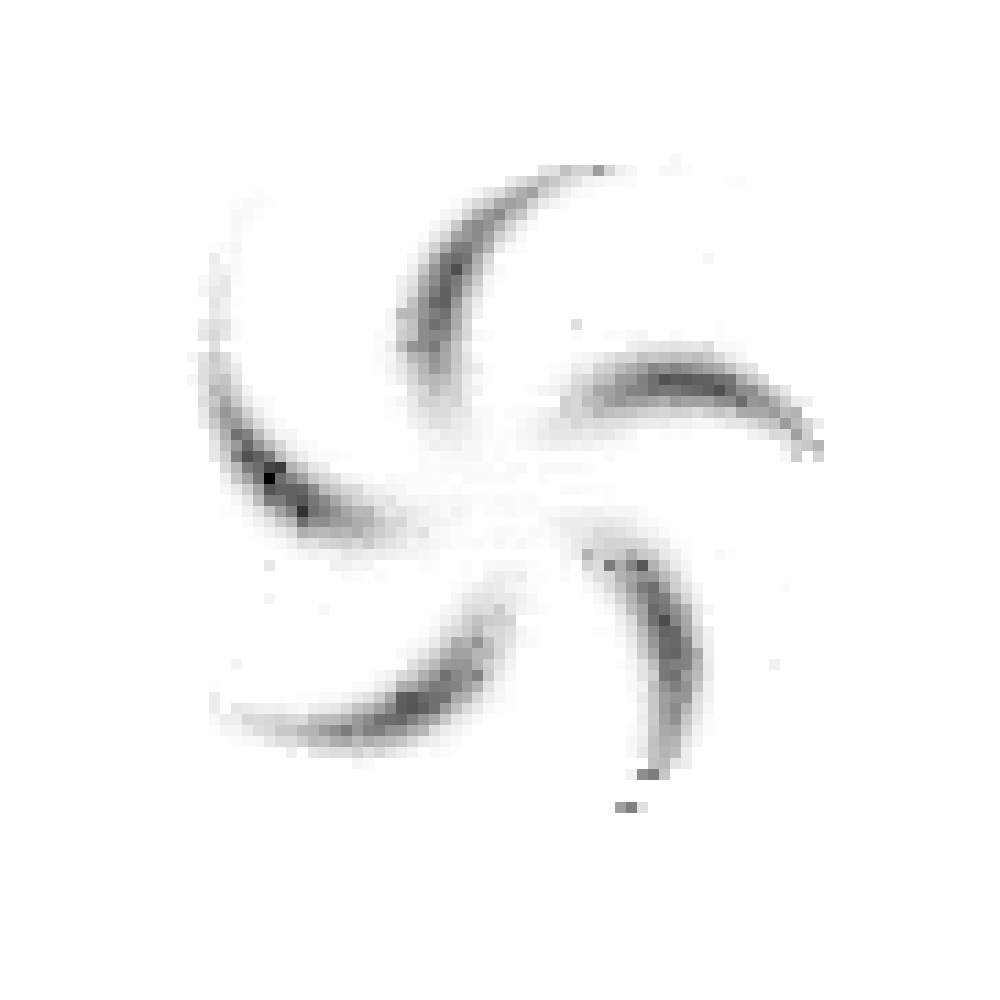}\hspace{0.03\textwidth}%
        \simpledatasetimgboxed{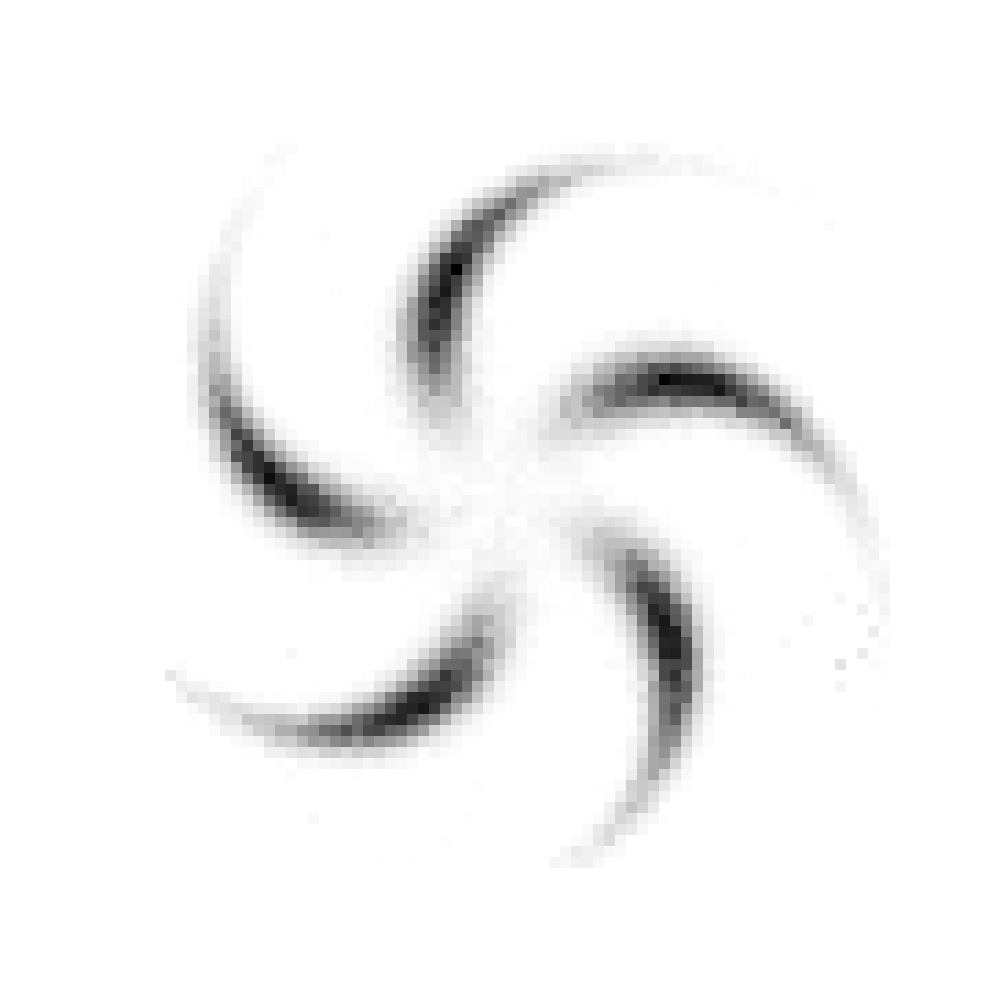}\hspace{0.03\textwidth}%
		\simpledatasetimg{Figures__pinwheel.png}
		\subcaption{Pinwheel}
		\label{fig:simple-datasets-pinwheel}
	\end{subfigure}

	\vspace{0.5em}
	\begin{subfigure}[t]{\textwidth}
		\centering
        \simpledatasetimgboxed{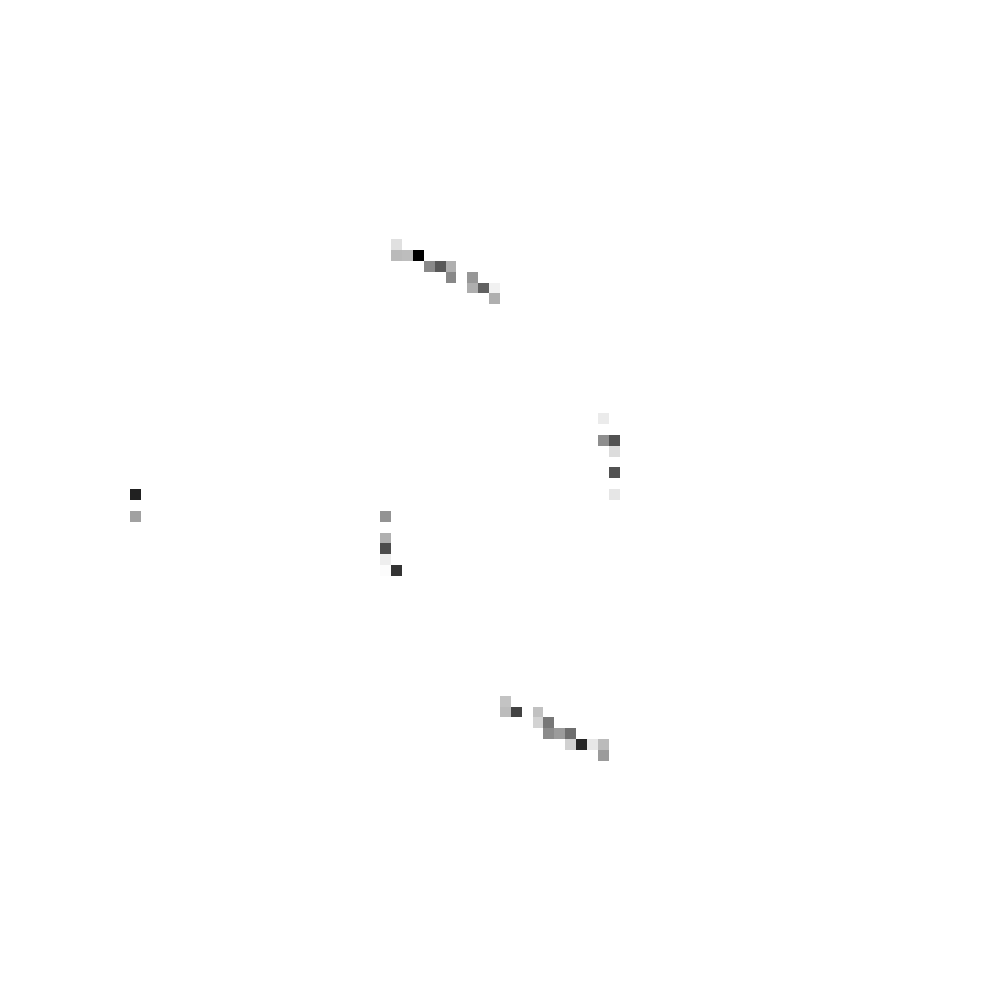}\hspace{0.03\textwidth}%
        \simpledatasetimgboxed{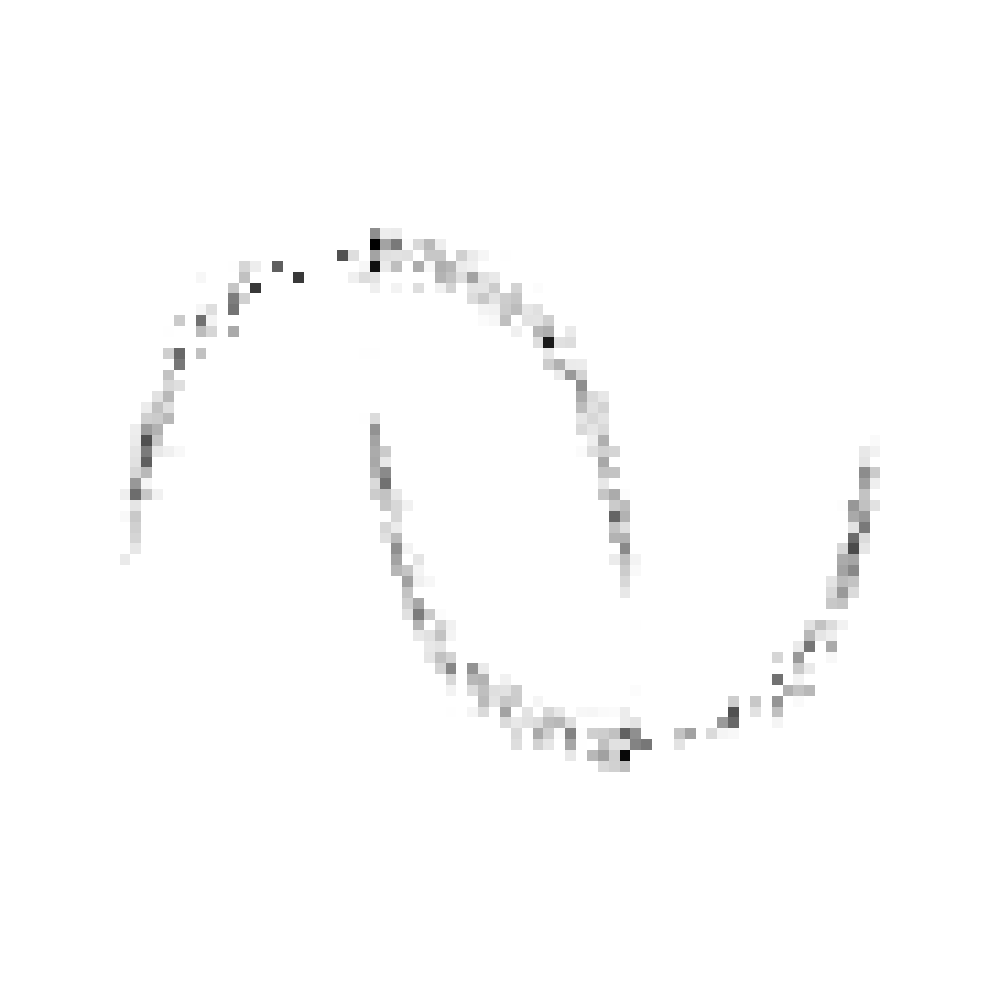}\hspace{0.03\textwidth}%
        \simpledatasetimgboxed{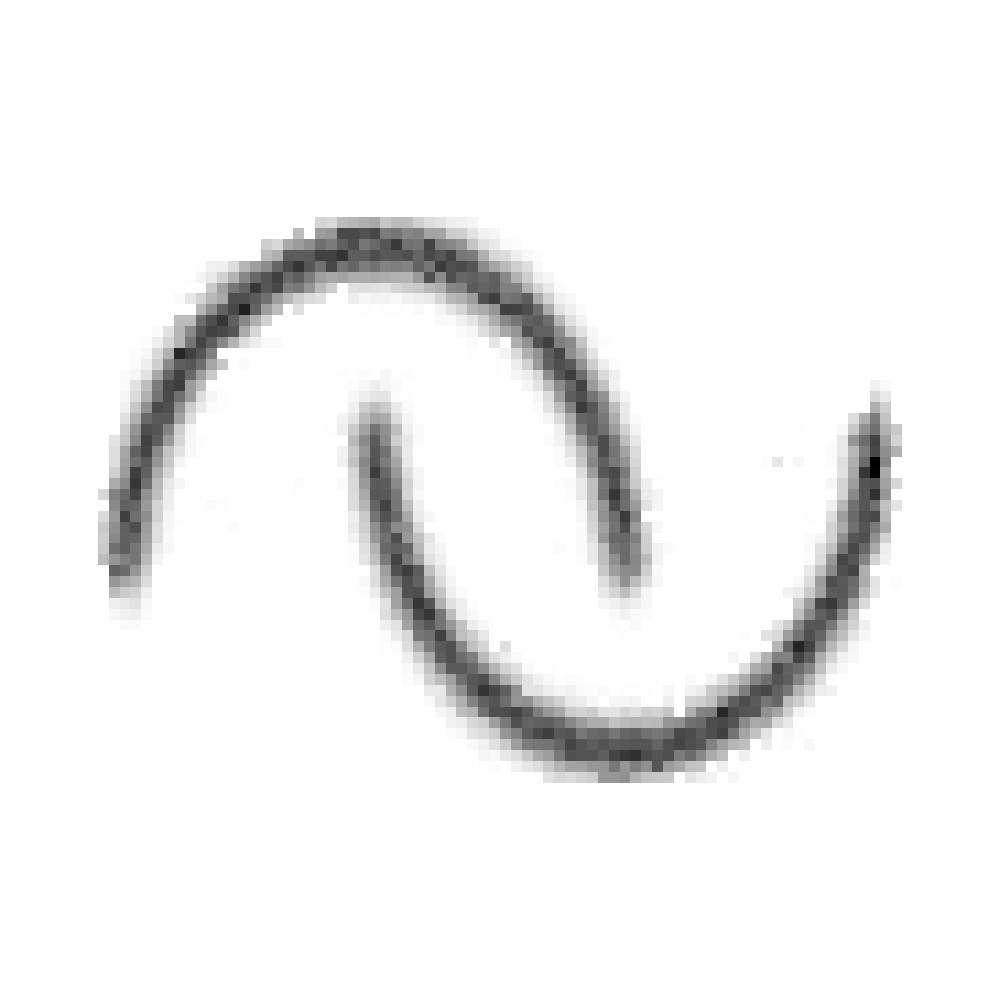}\hspace{0.03\textwidth}%
        \simpledatasetimgboxed{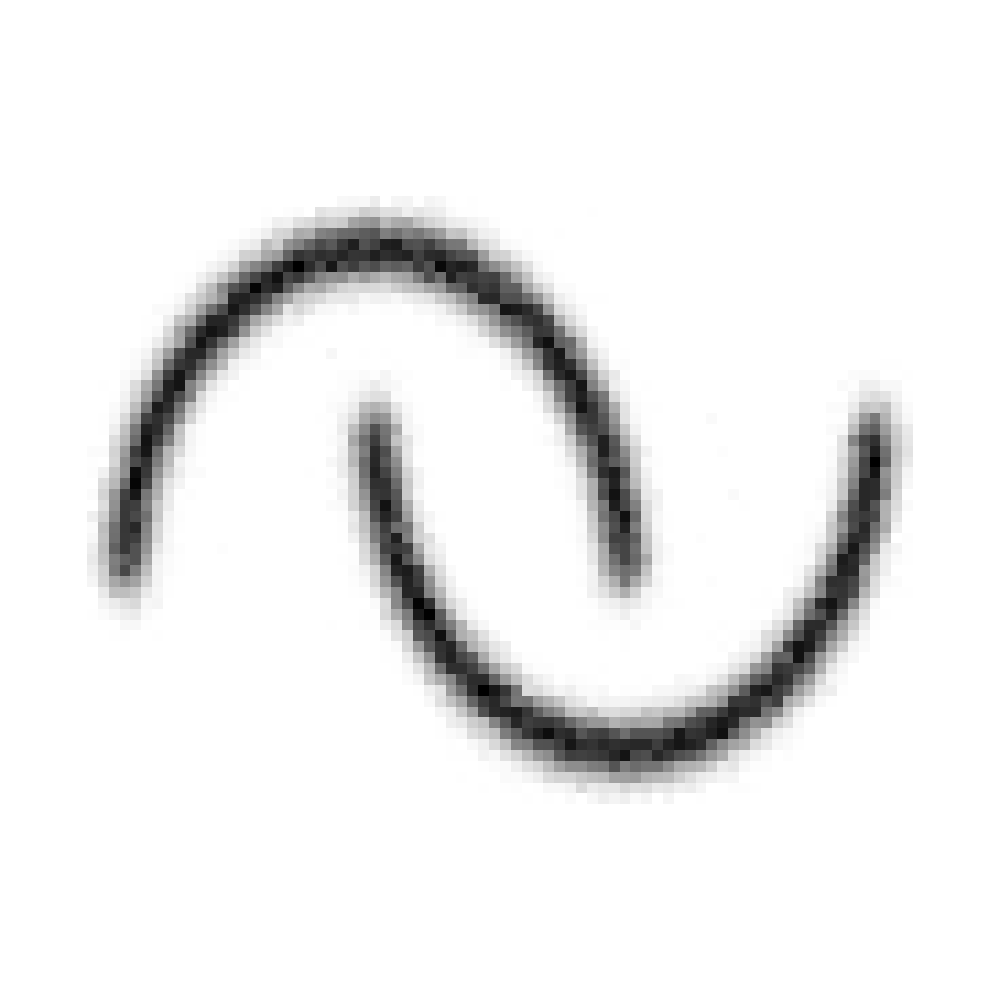}\hspace{0.03\textwidth}%
		\simpledatasetimg{Figures__two_moons.png}
		\subcaption{Two Moons}
		\label{fig:simple-datasets-two-moons}
	\end{subfigure}
	}
	\caption{Visual comparison of GPCA reconstructions of synthetic datasets, with $c=92$ categories, and latent dimensions. We use overall a \textbf{fixed} $\boldsymbol{\lambda=10^{-3}}$ for the GPCA optimization, cf. Eq. \eqref{eq:loss-gpca-e_reg}. Each row corresponds to one dataset, and each column shows the empirical joint distribution estimated from $N=3 \cdot 10^4$ samples using latent dimension $d \in \{2,4,8,16\}$. The rightmost image, highlighted in blue, shows the empirical joint distribution estimated from the ground-truth distribution.}
	\label{fig:simple-datasets-grid}
\end{figure}

\paragraph{Cityscapes Dataset.}
We illustrate the effect of the regularization parameter $\lambda$ and the latent dimension $d$ on the GPCA reconstructions of the Cityscapes dataset. Figures~\ref{fig:cityscapes-dims-grid}--\ref{fig:cityscapes-dims-grid-1e-3} correspond to selected values from Tables~\ref{tab:cityscapes-dims-lambda-1e-1}--\ref{tab:cityscapes-dims-lambda-3}. In particular, for $\lambda=10^{-3}$ and $d=256$, we observe almost perfect reconstructions. This is consistent with the very low Hamming distance of $412\cdot 10^{-6}$ reported in Table~\ref{tab:cityscapes-dims-lambda-3}, which corresponds, on average, to about 1.6 mislabeled pixels per image.

\newcommand{\cityscapesdimimg}[1]{\includegraphics[width=0.188\textwidth]{#1}}
\newcommand{\cityscapesgtimg}[1]{%
    \begingroup\setlength{\fboxsep}{0pt}\setlength{\fboxrule}{1.6pt}%
    \fcolorbox{red}{white}{\includegraphics[width=\dimexpr0.188\textwidth-2\fboxrule\relax]{#1}}%
    \endgroup}
\newcommand{\cityscapesdimsrow}[2]{%
    \cityscapesdimimg{Figures__cityscapes_dims__cityscapes_32_#1_#2}\hspace{0.01\textwidth}%
    \cityscapesdimimg{Figures__cityscapes_dims__cityscapes_64_#1_#2}\hspace{0.01\textwidth}%
    \cityscapesdimimg{Figures__cityscapes_dims__cityscapes_128_#1_#2}\hspace{0.01\textwidth}%
    \cityscapesdimimg{Figures__cityscapes_dims__cityscapes_256_#1_#2}\hspace{0.01\textwidth}%
    \cityscapesgtimg{Figures__cityscapes_dims__cityscapes_32_#2}}
\newcommand{\cityscapesdimslabels}{%
    \par\vspace{0.25em}
    {\small
    \makebox[0.188\textwidth][c]{\boldmath$d=32$}\hspace{0.01\textwidth}%
    \makebox[0.188\textwidth][c]{\boldmath$d=64$}\hspace{0.01\textwidth}%
    \makebox[0.188\textwidth][c]{\boldmath$d=128$}\hspace{0.01\textwidth}%
    \makebox[0.188\textwidth][c]{\boldmath$d=256$}\hspace{0.01\textwidth}%
    \makebox[0.188\textwidth][c]{\textbf{Ground truth}}}}

\begin{figure}[H]
    \centering
    \cityscapesdimsrow{1e-1}{0}
    \vspace{0.25em}
    \cityscapesdimsrow{1e-1}{1}
    \vspace{0.25em}
    \cityscapesdimsrow{1e-1}{3}
    \vspace{0.25em}
    \cityscapesdimsrow{1e-1}{14}
    \vspace{0.25em}
    \cityscapesdimsrow{1e-1}{11}
    \vspace{0.25em}
    \cityscapesdimsrow{1e-1}{17}
    \cityscapesdimslabels
    \caption{Visual comparison of Cityscapes GPCA reconstructions for randomly selected training samples. Each column corresponds to one latent dimension $d \in \{32,64,128,256\}$ for \textbf{fixed regularization parameter $\boldsymbol{\lambda = 10^{-1}}$}. The rightmost column, highlighted in red, shows the corresponding ground-truth data $x\in \mc X^n_N$.}
    \label{fig:cityscapes-dims-grid}
\end{figure}

\begin{figure}[H]
    \centering
    \cityscapesdimsrow{1e-2}{0}
    \vspace{0.25em}
    \cityscapesdimsrow{1e-2}{1}
    \vspace{0.25em}
    \cityscapesdimsrow{1e-2}{3}
    \vspace{0.25em}
    \cityscapesdimsrow{1e-2}{14}
    \vspace{0.25em}
    \cityscapesdimsrow{1e-2}{11}
    \vspace{0.25em}
    \cityscapesdimsrow{1e-2}{17}
    \cityscapesdimslabels
    \caption{Visual comparison of Cityscapes GPCA reconstructions for randomly selected training samples. Each column corresponds to one latent dimension $d \in \{32,64,128,256\}$ for \textbf{fixed regularization parameter $\boldsymbol{\lambda = 10^{-2}}$}. The rightmost column, highlighted in red, shows the corresponding ground-truth data $x\in \mc X^n_N$.}
    \label{fig:cityscapes-dims-grid-1e-2}
\end{figure}

\begin{figure}[H]
    \centering
    \cityscapesdimsrow{1e-3}{0}
    \vspace{0.25em}
    \cityscapesdimsrow{1e-3}{1}
    \vspace{0.25em}
    \cityscapesdimsrow{1e-3}{3}
    \vspace{0.25em}
    \cityscapesdimsrow{1e-3}{14}
    \vspace{0.25em}
    \cityscapesdimsrow{1e-3}{11}
    \vspace{0.25em}
    \cityscapesdimsrow{1e-3}{17}
    \cityscapesdimslabels
    \caption{Visual comparison of Cityscapes GPCA reconstructions for randomly selected training samples. Each column corresponds to one latent dimension $d \in \{32,64,128,256\}$ for \textbf{fixed regularization parameter $\boldsymbol{\lambda = 10^{-3}}$}. The rightmost column, highlighted in red, shows the corresponding ground-truth data $x\in \mc X^n_N$.}
    \label{fig:cityscapes-dims-grid-1e-3}
\end{figure}

\paragraph{MNIST and Fashion-MNIST Datasets.}
We illustrate the effects of the regularization parameter $\lambda$ and the latent dimension $d$ on the GPCA reconstructions of the MNIST and Fashion-MNIST datasets. Figures~\ref{fig:mnist-dims-grid-1e-2} and~\ref{fig:mnist-dims-grid-1e-3} correspond to selected values from Tables~\ref{tab:mnist-dims-lambda-1e-2} and~\ref{tab:mnist-128-loss-lambda}, respectively.
In particular, we observe that, for both $\lambda=10^{-2}$ and $\lambda=10^{-3}$, the reconstructions capture the overall structure of the data for dimensions $d=32$ and $d=64$. These are the latent dimensions used to train our flow matching model, which yielded good performance, cf. Section~\ref{sec:generation-metrics}.
Figures~\ref{fig:fashion-mnist-dims-grid} and~\ref{fig:fashion-mnist-dims-grid-1e-3} correspond to selected values from Tables~\ref{tab:fashion-mnist-dims-lambda-1e-2} and~\ref{tab:fashion-mnist-dims-lambda-1e-3}, respectively.

\newcommand{\mnistdimsimg}[1]{\includegraphics[width=0.188\textwidth]{#1}}
\newcommand{\mnistgtimg}[1]{%
    \begingroup\setlength{\fboxsep}{0pt}\setlength{\fboxrule}{1.6pt}%
    \fcolorbox{blue}{white}{\includegraphics[width=\dimexpr0.188\textwidth-2\fboxrule\relax]{#1}}%
    \endgroup}
\newcommand{\mnistdimsrow}[2]{%
    \mnistdimsimg{Figures__mnist_dims__mnist_8_#1_#2}\hspace{0.01\textwidth}%
    \mnistdimsimg{Figures__mnist_dims__mnist_16_#1_#2}\hspace{0.01\textwidth}%
    \mnistdimsimg{Figures__mnist_dims__mnist_32_#1_#2}\hspace{0.01\textwidth}%
    \mnistdimsimg{Figures__mnist_dims__mnist_64_#1_#2}\hspace{0.01\textwidth}%
    \mnistgtimg{Figures__mnist_dims__mnist_8_#2}}
\newcommand{\mnistdimslabels}{%
    \par\vspace{0.25em}
    {\fontsize{14}{16}\selectfont
    \makebox[0.188\textwidth][c]{\boldmath$d=8$}\hspace{0.01\textwidth}%
    \makebox[0.188\textwidth][c]{\boldmath$d=16$}\hspace{0.01\textwidth}%
    \makebox[0.188\textwidth][c]{\boldmath$d=32$}\hspace{0.01\textwidth}%
    \makebox[0.188\textwidth][c]{\boldmath$d=64$}\hspace{0.01\textwidth}%
    \makebox[0.188\textwidth][c]{\textbf{Ground truth}}}}

\begin{figure}[H]
    \centering
    \resizebox{0.70\textwidth}{!}{%
        \begin{minipage}{\textwidth}
            \mnistdimsrow{1e-2}{1}
            \vspace{0.25em}
            \mnistdimsrow{1e-2}{23}
            \vspace{0.25em}
            \mnistdimsrow{1e-2}{16}
            \vspace{0.25em}
            \mnistdimsrow{1e-2}{12}
            \vspace{0.25em}
            \mnistdimsrow{1e-2}{26}
            \vspace{0.25em}
            \mnistdimsrow{1e-2}{11}
            \vspace{0.25em}
            \mnistdimsrow{1e-2}{13}
            \vspace{0.25em}
            \mnistdimsrow{1e-2}{15}
            \vspace{0.25em}
            \mnistdimsrow{1e-2}{17}
            \vspace{0.25em}
            \mnistdimsrow{1e-2}{19}
            \mnistdimslabels
        \end{minipage}%
    }
    \caption{Visual comparison of MNIST GPCA reconstructions for randomly selected training samples. Each column corresponds to one latent dimension $d \in \{8,16,32,64\}$ for \textbf{fixed regularization parameter $\boldsymbol{\lambda = 10^{-2}}$}. The rightmost column, highlighted in blue, shows the corresponding ground-truth data $x\in \mc X^n_N$.}
    \label{fig:mnist-dims-grid-1e-2}
\end{figure}

\begin{figure}[H]
    \centering
    \resizebox{0.70\textwidth}{!}{%
        \begin{minipage}{\textwidth}
            \mnistdimsrow{1e-3}{1}
            \vspace{0.25em}
            \mnistdimsrow{1e-3}{23}
            \vspace{0.25em}
            \mnistdimsrow{1e-3}{16}
            \vspace{0.25em}
            \mnistdimsrow{1e-3}{12}
            \vspace{0.25em}
            \mnistdimsrow{1e-3}{26}
            \vspace{0.25em}
            \mnistdimsrow{1e-3}{11}
            \vspace{0.25em}
            \mnistdimsrow{1e-3}{13}
            \vspace{0.25em}
            \mnistdimsrow{1e-3}{15}
            \vspace{0.25em}
            \mnistdimsrow{1e-3}{17}
            \vspace{0.25em}
            \mnistdimsrow{1e-3}{19}
            \mnistdimslabels
        \end{minipage}%
    }
    \caption{Visual comparison of MNIST GPCA reconstructions for randomly selected training samples. Each column corresponds to one latent dimension $d \in \{8,16,32,64\}$ for \textbf{fixed regularization parameter $\boldsymbol{\lambda = 10^{-3}}$}. The rightmost column, highlighted in blue, shows the corresponding ground-truth data $x\in \mc X^n_N$.}
    \label{fig:mnist-dims-grid-1e-3}
\end{figure}

\newcommand{\fashiondimsimg}[1]{\includegraphics[width=0.188\textwidth]{#1}}
\newcommand{\fashiongtimg}[1]{%
    \begingroup\setlength{\fboxsep}{0pt}\setlength{\fboxrule}{1.6pt}%
    \fcolorbox{blue}{white}{\includegraphics[width=\dimexpr0.188\textwidth-2\fboxrule\relax]{#1}}%
    \endgroup}
\newcommand{\fashiondimsrow}[2]{%
    \fashiondimsimg{Figures__fashion_dims__fashion_mnist_16_#1_#2}\hspace{0.01\textwidth}%
    \fashiondimsimg{Figures__fashion_dims__fashion_mnist_32_#1_#2}\hspace{0.01\textwidth}%
    \fashiondimsimg{Figures__fashion_dims__fashion_mnist_64_#1_#2}\hspace{0.01\textwidth}%
    \fashiondimsimg{Figures__fashion_dims__fashion_mnist_128_#1_#2}\hspace{0.01\textwidth}%
    \fashiongtimg{Figures__fashion_dims__fashion_mnist_16_#2}}
\newcommand{\fashiondimslabels}{%
    \par\vspace{0.25em}
    {\fontsize{18}{20}\selectfont
    \makebox[0.188\textwidth][c]{\boldmath$d=16$}\hspace{0.01\textwidth}%
    \makebox[0.188\textwidth][c]{\boldmath$d=32$}\hspace{0.01\textwidth}%
    \makebox[0.188\textwidth][c]{\boldmath$d=64$}\hspace{0.01\textwidth}%
    \makebox[0.188\textwidth][c]{\boldmath$d=128$}\hspace{0.01\textwidth}%
    \makebox[0.188\textwidth][c]{\textbf{\; \; \; \; \; Ground truth}}}}

\begin{figure}[H]
    \centering
    \resizebox{0.50\textwidth}{!}{%
        \begin{minipage}{\textwidth}
            \fashiondimsrow{1e-2}{0}
            \vspace{0.25em}
            \fashiondimsrow{1e-2}{1}
            \vspace{0.25em}
            \fashiondimsrow{1e-2}{3}
            \vspace{0.25em}
            \fashiondimsrow{1e-2}{5}
            \vspace{0.25em}
            \fashiondimsrow{1e-2}{8}
            \vspace{0.25em}
            \fashiondimsrow{1e-2}{13}
            \fashiondimslabels
        \end{minipage}%
    }
    \caption{Visual comparison of Fashion-MNIST GPCA reconstructions for randomly selected training samples. Each column corresponds to one latent dimension $d \in \{16,32,64,128\}$ for \textbf{fixed regularization parameter $\boldsymbol{\lambda = 10^{-2}}$}. The rightmost column, highlighted in blue, shows the corresponding ground-truth data $x\in \mc X^n_N$.}
    \label{fig:fashion-mnist-dims-grid}
\end{figure}

\begin{figure}[H]
    \centering
    \resizebox{0.50\textwidth}{!}{%
        \begin{minipage}{\textwidth}
            \fashiondimsrow{1e-3}{0}
            \vspace{0.25em}
            \fashiondimsrow{1e-3}{1}
            \vspace{0.25em}
            \fashiondimsrow{1e-3}{3}
            \vspace{0.25em}
            \fashiondimsrow{1e-3}{5}
            \vspace{0.25em}
            \fashiondimsrow{1e-3}{8}
            \vspace{0.25em}
            \fashiondimsrow{1e-3}{13}
            \fashiondimslabels
        \end{minipage}%
    }
    \caption{Visual comparison of Fashion-MNIST GPCA reconstructions for randomly selected training samples. Each column corresponds to one latent dimension $d \in \{16,32,64,128\}$ for \textbf{fixed regularization parameter $\boldsymbol{\lambda = 10^{-3}}$}. The rightmost column, highlighted in blue, shows the corresponding ground-truth data $x\in \mc X^n_N$.}
    \label{fig:fashion-mnist-dims-grid-1e-3}
\end{figure}

\clearpage

\section{Experiments: generation quality for MNIST with binary inputs} \label{sec:generation-metrics}

We evaluate unconditional samples produced by models trained on the MNIST handwritten-digit dataset \cite{Lecun:2010} with binary inputs. To study the effect of latent dimensionality, we train models with latent dimensions $d \in \{4, 8, 16, 32, 64,128,256,512\}$, and asses the quality and diversity of generated samples using a pretrained classifier. 

For each latent dimension $d$, we generate $M$ unconditional samples $w_1,\dots,w_M$ and compute the mean classifier confidence
\begin{align}
Q(d) = \frac{1}{M} \sum_{i=1}^{M} \max_{k=0,\dots,9} p_{\phi}( k \mid w_i) ,
\end{align}
where $p_\phi(k \mid w_i)$ denotes the predicted class probability assigned by the pretrained classifier. This quantity measures how confidently the generated samples are recognized as valid digits. Larger values of $Q(d)$ indicate that samples are, on average, more digit-like according to the classifier.
To asses class diversity, we also compute the entropy of the empirical predicted class histogram
\begin{align}
H(d) = - \sum_{k=0}^9 \widehat p_d(k) \log \widehat p_d(k) , \qquad \widehat p_d(k) = \frac{1}{M} \sum_{i=1}^M \eins[\arg \max_{c} p_\phi(c \mid w_i) = k ],
\end{align}
which measures the class coverage. The maximal class confidence is $Q(d)=1$, the maximal class coverage is $H(d)=\log(10)=2.3036$. 

Table~\ref{tab:gpca-dims-summary} summarizes the results. Overall, both metrics indicate that sample quality improves significantly as the latent dimension $d$ increases. At the same time, strong performance is already achieved for moderately large latent spaces.

\begin{table}[H]
    \centering
    \caption{Classifier-based metrics for different latent dimensions $\boldsymbol{d}$.}
    \label{tab:gpca-dims-summary}
    \begin{tabular}{lcccccc}
        \toprule
         & $\boldsymbol{d=8}$ & $\boldsymbol{d=16}$ & $\boldsymbol{d=32}$ & $\boldsymbol{d=64}$ & \textbf{test data}  \\
        \midrule
        $Q(d)$ & $0.79$& $0.93$ & $0.976$ & $0.977$ &  $0.991$  \\
        $H(d)$ & $2.21$& $2.29$ & $0.299$ & $2.300$ &  $2.302$ \\
        \bottomrule
    \end{tabular}
\end{table}


\section{Experiments: Generation of novel data points} \label{sec:generation-novelty}
To assess whether the generative model produces novel samples rather than reproducing training examples, we compare randomly generated outputs with their nearest images from the training set. Models are trained with regularization parameter $\lambda = 10^{-2}$ and latent dimensions $32$, $64$, and $512$ for MNIST, FashionMNIST, and Cityscapes, respectively. For each generated sample, we retrieve the nearest training image using the Hamming distance in \eqref{eq:hamming-distance} and display the pairs side by side. The results indicate that the generated samples are not direct copies of training images, suggesting that the model does not simply memorize the training set.

\newcommand{\cityscapesnoveltywidth}{0.85\textwidth}
\newcommand{\mnistnoveltywidth}{0.65\textwidth}
\newcommand{\fashionnoveltywidth}{0.65\textwidth}

\begin{figure}[H]
    \centering
    \includegraphics[width=\mnistnoveltywidth]{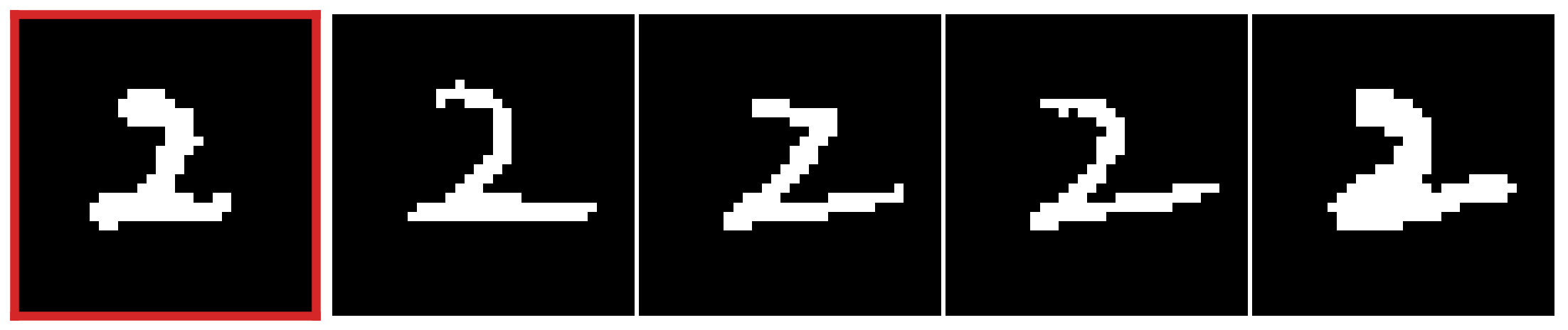}
    \vspace{0.3em}

    \includegraphics[width=\mnistnoveltywidth]{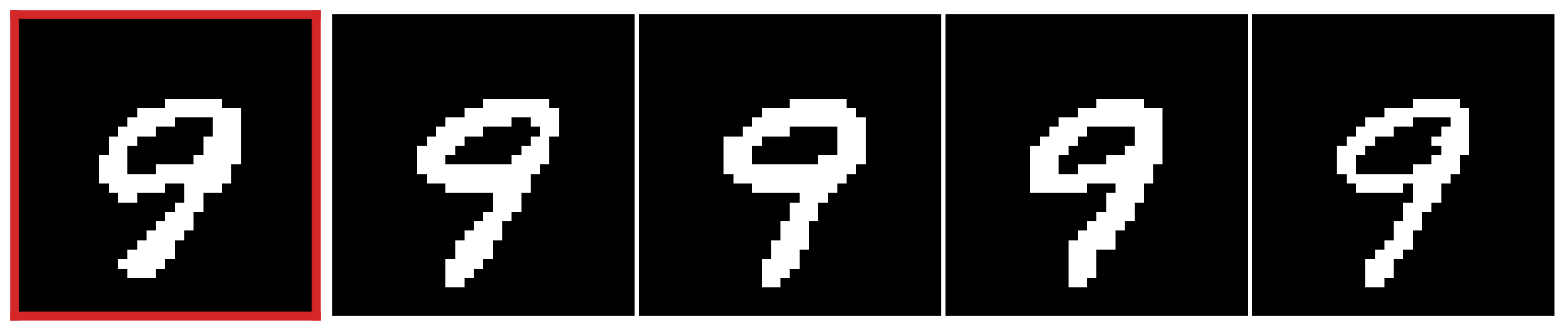}
    \vspace{0.3em}

    \includegraphics[width=\mnistnoveltywidth]{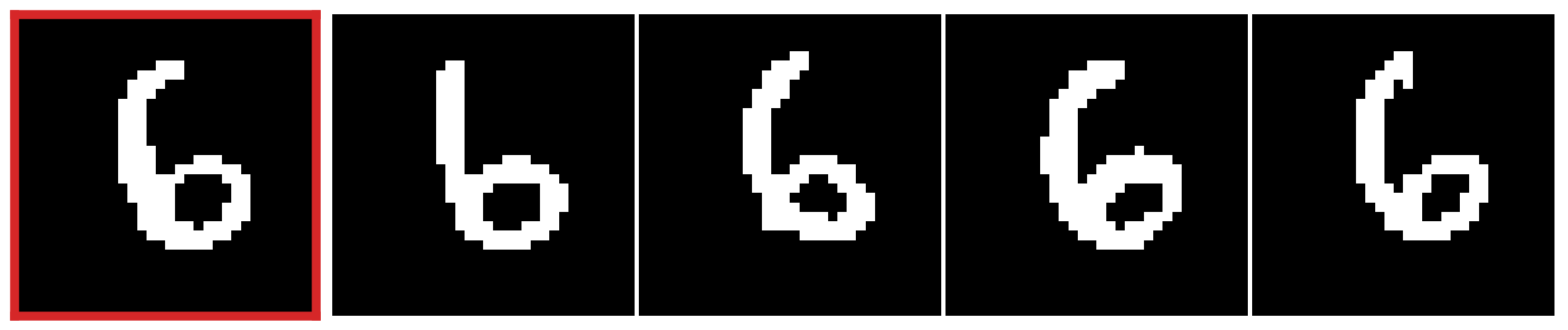}
    \vspace{0.3em}

    \includegraphics[width=\mnistnoveltywidth]{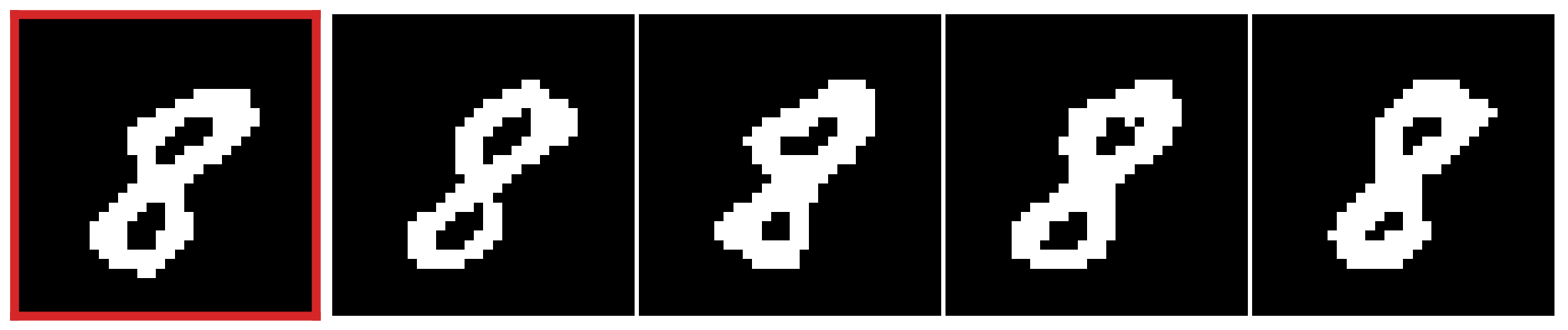}
    \vspace{0.3em}

    \includegraphics[width=\mnistnoveltywidth]{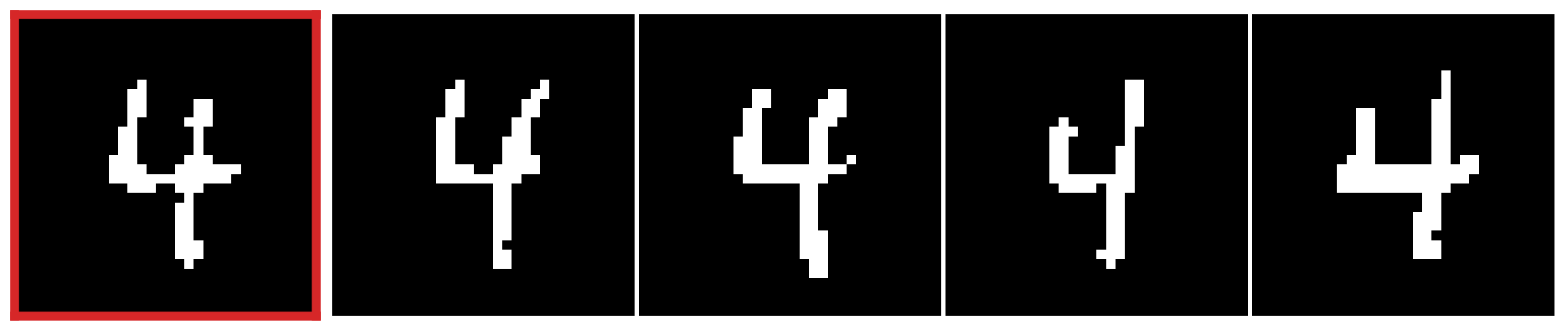}
    \caption{\textbf{Comparison of model-generated and data samples.} The leftmost column (marked red) shows samples generated by our model (Section~\ref{sec:Flow-Matching}) with latent dimension $d=32$ and regularization parameter $\lambda = 10^{-2}$. The remaining panels depict the four closest training samples in terms of Hamming distance.}
    \label{fig:mnist-generation-novelty}
\end{figure}

\begin{figure}[H]
    \centering
    \includegraphics[width=\fashionnoveltywidth]{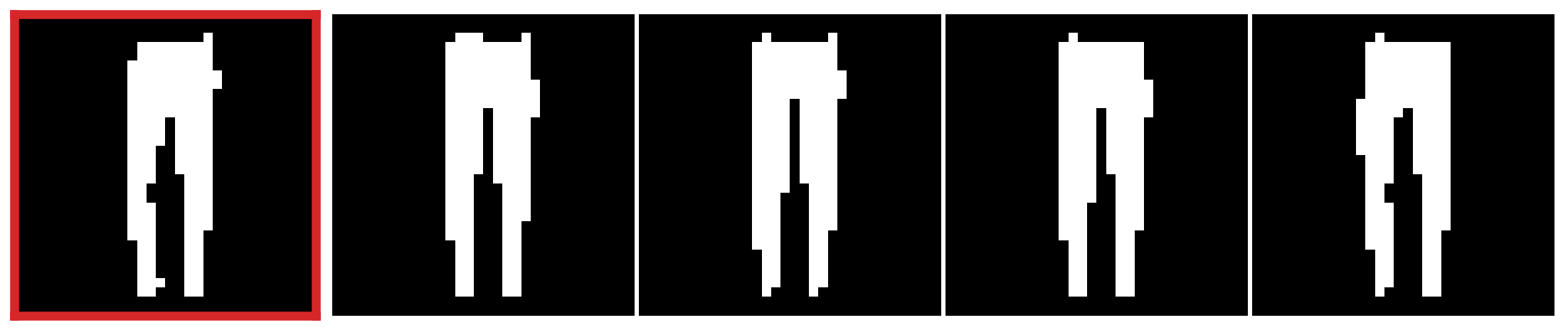}
    \vspace{0.3em}

    \includegraphics[width=\fashionnoveltywidth]{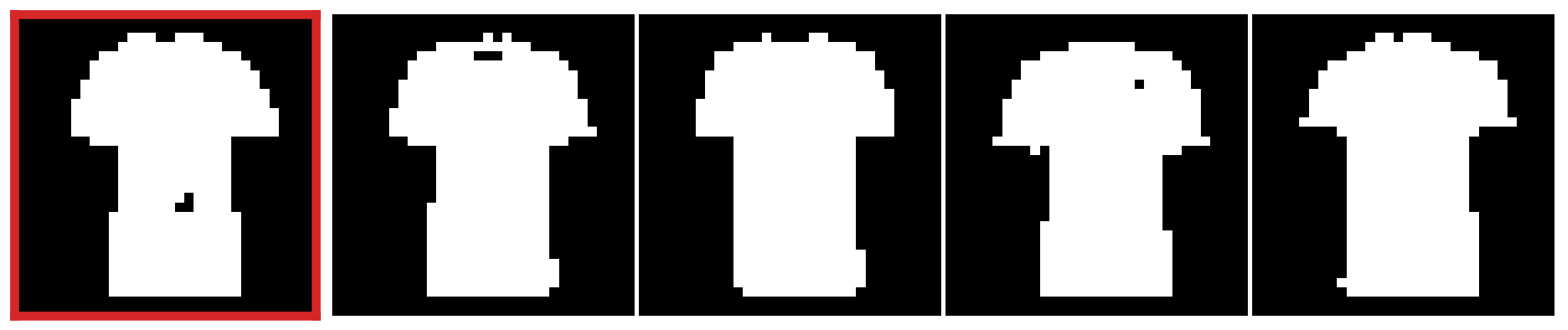}
    \vspace{0.3em}

    \includegraphics[width=\fashionnoveltywidth]{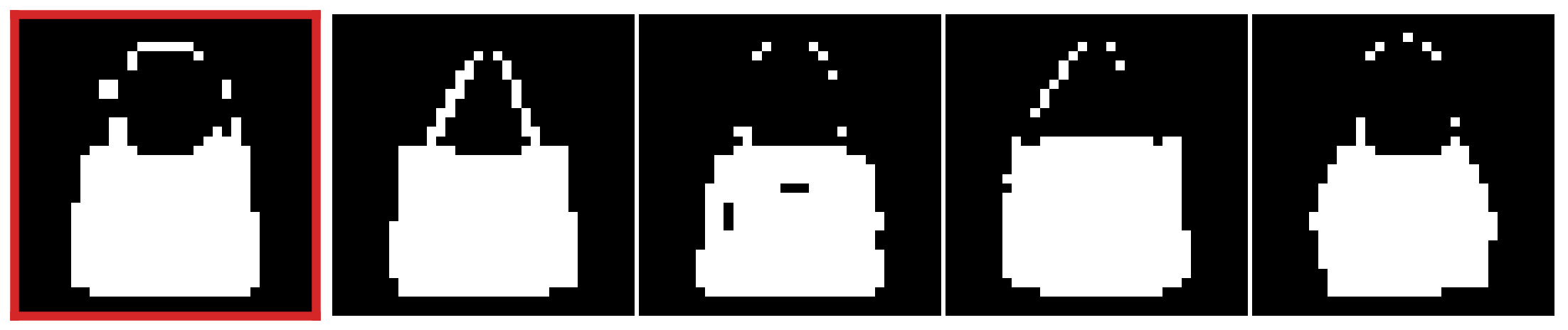}
    \vspace{0.3em}

    \includegraphics[width=\fashionnoveltywidth]{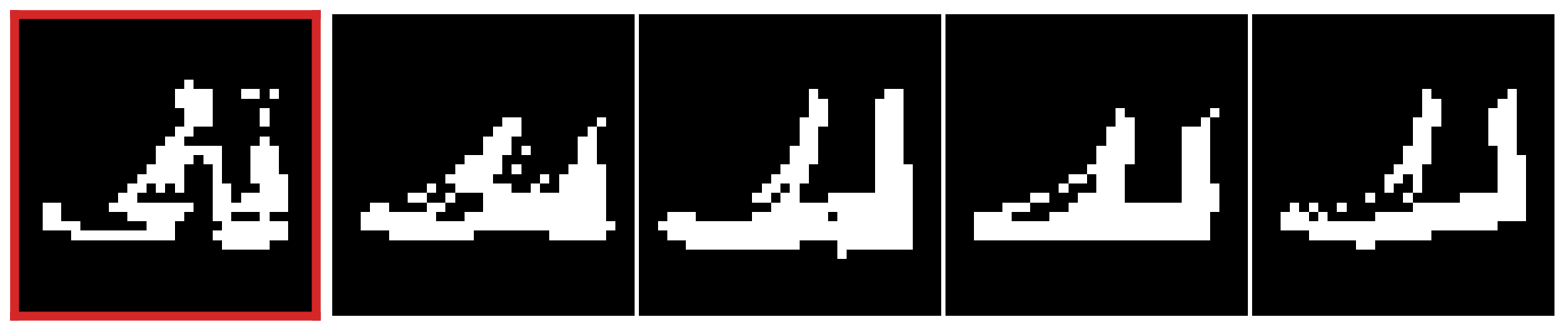}
    \vspace{0.3em}

    \includegraphics[width=\fashionnoveltywidth]{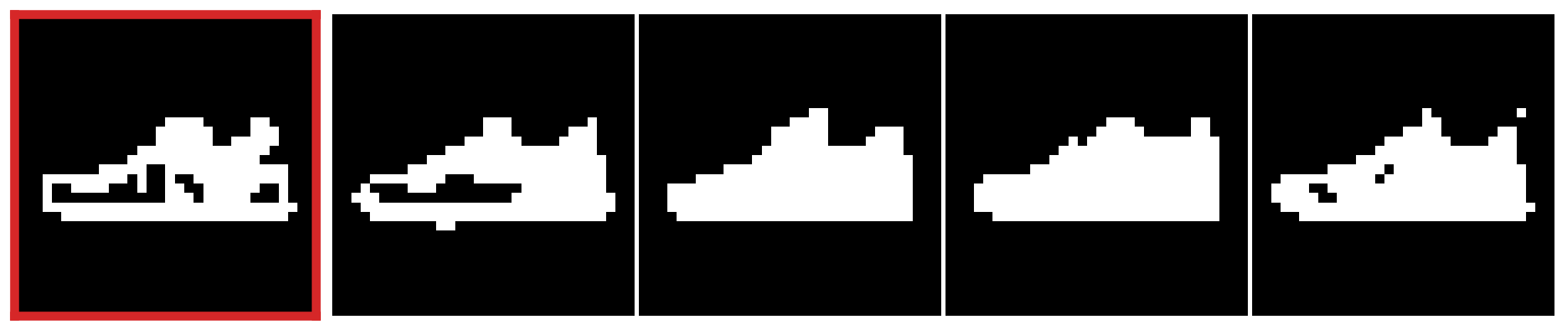}
    \caption{\textbf{Comparison of model-generated and data samples.} The leftmost column (marked red) shows samples generated by our model (Section~\ref{sec:Flow-Matching}) with latent dimension $d=64$ and regularization parameter $\lambda = 10^{-2}$. The remaining panels depict the four closest training samples in terms of Hamming distance.}
    \label{fig:fashion-generation-novelty}

\end{figure}

\begin{figure}[H]
    \centering
    \includegraphics[width=\cityscapesnoveltywidth]{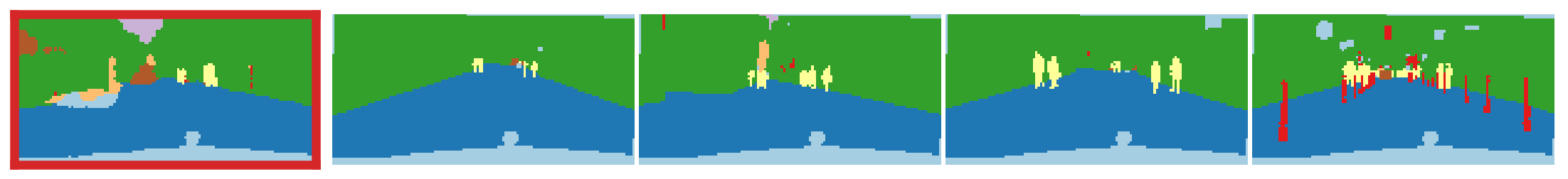}
    \vspace{0.3em}

    \includegraphics[width=\cityscapesnoveltywidth]{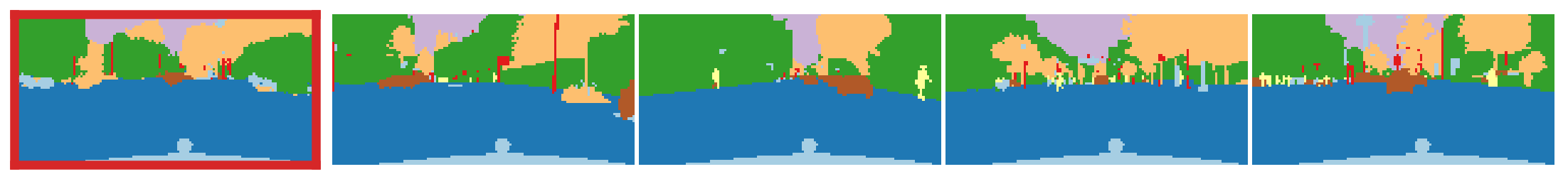}
    \vspace{0.3em}

    \includegraphics[width=\cityscapesnoveltywidth]{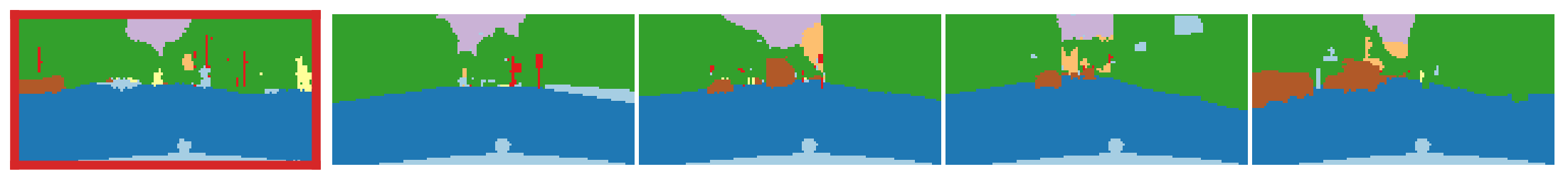}
    \vspace{0.3em}

    \includegraphics[width=\cityscapesnoveltywidth]{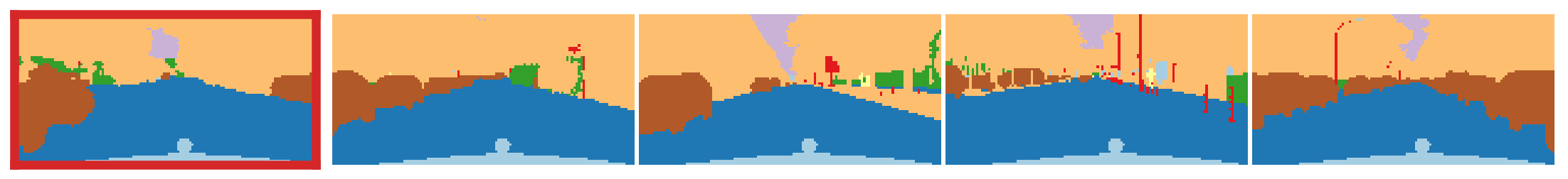}
    \vspace{0.3em}

    \includegraphics[width=\cityscapesnoveltywidth]{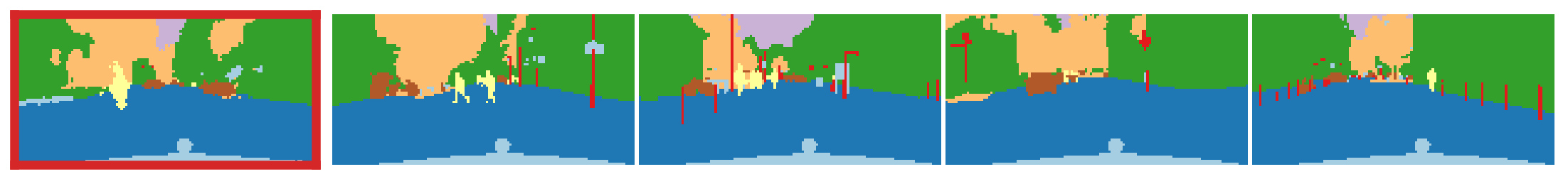}
    \caption{\textbf{Comparison of model-generated and data samples.} The leftmost column (marked red) shows samples generated by our model (Section~\ref{sec:Flow-Matching}) with latent dimension $d=512$ and regularization parameter $\lambda = 10^{-2}$. The remaining panels depict the four closest training samples in terms of Hamming distance.}
    \label{fig:cityscapes-generation-novelty}
\end{figure}

\section{Experiments: conditional flow matching on enhancer DNA datasets}\label{appendix:DNA}
\begin{table}[H]
    \centering
    \caption{
    \textbf{Generation quality for \textsc{enhancer} datasets.} Generation quality in FBD of our flow matching approach on \textsc{enhancer} DNA datasets with latent dimension $\boldsymbol{d=500}$ and $\lambda = 0.001$. We were not able to reproduce the exact FBD values for \textsc{fisher-flow} from \cite{Davis:2024}, such that our numbers differ from the original publication.
    We used the same neural network architectures and hyperparameters as in the reference. This required the flow matching task to be performed in the high dimensional latent subspace $\mc{U}$, with the consequence that the performance gains of the GPCA method were not materialized in this experiment.
    We also understand that correlation of FBD metric for actual generation quality is not entirely clear \cite{Davis:2024}, such that our conclusion from these results is to further investigate our method in terms of more robust testing methods. 
    }
    \label{tab:dna-fn}
    \scshape
    \begin{tabular}{lcc}
    \toprule
    \shortstack{\textbf{Method}  \,}	 & \textbf{\shortstack{Melanoma  FBD}} & \textbf{\shortstack{Fly Brain  FBD}}  \\
    \midrule
    Fisher-Flow          & \g{$28.4$}   & \g{$16.9$} \\  
    Language Model       & \r{$35.4$}   & \r{$25.2$} \\ 
    \midrule
    $e$-geodesics + GPCA & \g{$29.6$}   & \g{$17.3$} \\
    \bottomrule
    \end{tabular}
\end{table}


\begin{thebibliography}{41}
\providecommand{\natexlab}[1]{#1}
\providecommand{\url}[1]{\texttt{#1}}
\expandafter\ifx\csname urlstyle\endcsname\relax
  \providecommand{\doi}[1]{doi: #1}\else
  \providecommand{\doi}{doi: \begingroup \urlstyle{rm}\Url}\fi

\bibitem[Boll et~al.(2024)Boll, Gonzalez-Alvarado, and
  Schn\"{o}rr]{Boll:2024ab}
B.~Boll, D.~Gonzalez-Alvarado, and C.~Schn\"{o}rr.
\newblock {Generative Modeling of Discrete Joint Distributions by E-Geodesic
  Flow Matching on Assignment Manifolds}.
\newblock \emph{arXiv:2402.07846}, 2024.

\bibitem[Boll et~al.(2025)Boll, Gonzalez-Alvarado, Petra, and
  Schn\"{o}rr]{Boll:2025aa}
B.~Boll, D.~Gonzalez-Alvarado, S.~Petra, and C.~Schn\"{o}rr.
\newblock {Generative Assignment Flows for Representing and Learning Joint
  Distributions of Discrete Data}.
\newblock \emph{J. Math. Imaging Vision}, 67\penalty0 (34), 2025.

\bibitem[Stark et~al.(2024)Stark, Jing, Wang, Corso, Berger, Barzilay, and
  Jaakkola]{Stark:2024}
Hannes Stark, Bowen Jing, Chenyu Wang, Gabriele Corso, Bonnie Berger, Regina
  Barzilay, and Tommi Jaakkola.
\newblock {Dirichlet Flow Matching with Applications to DNA Sequence Design}.
\newblock \emph{arXiv:2402.05841}, 2024.

\bibitem[Davis et~al.(2024)Davis, Kessler, Petrache, Ceylan, Bronstein, and
  Bose]{Davis:2024}
Oscar Davis, Samuel Kessler, Mircea Petrache, Ismail~Ilkan Ceylan, Michael
  Bronstein, and Avishek~Joey Bose.
\newblock {Fisher Flow Matching for Generative Modeling over Discrete Data}.
\newblock \emph{arXiv:2405.14664}, 2024.

\bibitem[Cheng et~al.(2024)Cheng, Li, Peng, and Liu]{Cheng:2024}
Chaoran Cheng, Jiahan Li, Jian Peng, and Ge~Liu.
\newblock {Categorical Flow Matching on Statistical Manifolds}.
\newblock \emph{arXiv:2405.16441}, 2024.

\bibitem[Cheng et~al.(2025)Cheng, Li, Fan, and Liu]{Cheng:2025}
Chaoran Cheng, Jiahan Li, Jiajun Fan, and Ge~Liu.
\newblock {{$\alpha$}-Flow: A Unified Framework for Continuous-State Discrete
  Flow Matching Models}.
\newblock \emph{arXiv:2504.10283}, 2025.

\bibitem[Williams et~al.(2025)Williams, Yeom-Song, Hartmann, and
  Klami]{Williams:2025}
Bernardo Williams, Victor~M Yeom-Song, Marcelo Hartmann, and Arto Klami.
\newblock {Simplex-to-Euclidean Bijections for Categorical Flow Matching}.
\newblock \emph{arXiv:2510.27480}, 2025.

\bibitem[Collins et~al.(2001)Collins, Dasgupta, and Schapire]{Collins:2001aa}
M.~Collins, S.~Dasgupta, and R.~E. Schapire.
\newblock {A Generalization of Principal Component Analysis to the Exponential
  Family}.
\newblock In \emph{NIPS}, 2001.

\bibitem[Amari and Nagaoka(2000)]{Amari:2000aa}
S.-I. Amari and H.~Nagaoka.
\newblock \emph{{Methods of Information Geometry}}.
\newblock Amer. Math. Soc. and Oxford Univ. Press, 2000.

\bibitem[Fefferman et~al.(2016)Fefferman, Mitter, and
  Narayanan]{Fefferman:2016aa}
C.~Fefferman, S.~Mitter, and H.~Narayanan.
\newblock {Testing the Manifold Hypothesis}.
\newblock \emph{J. Amer. Math. Soc.}, 29\penalty0 (4):\penalty0 983--1049,
  2016.

\bibitem[Goldt et~al.(2020)Goldt, M\'{e}zard, Krzakala, and
  Zdeborov\'{a}]{Goldt:2020aa}
S.~Goldt, M.~M\'{e}zard, F.~Krzakala, and L.~Zdeborov\'{a}.
\newblock {Modeling the Influence of Data Structure on Learning in Neural
  Networks: The Hidden Manifold Model}.
\newblock \emph{Physical Review X}, 10\penalty0 (041044), 2020.

\bibitem[Meil\u{a} and Zhang(2024)]{Meila:2024aa}
M.~Meil\u{a} and H.~Zhang.
\newblock {Manifold Learning: What, How, and Why}.
\newblock \emph{Annual Review of Statistics and Its Application}, 11:\penalty0
  393--417, 2024.

\bibitem[Wainwright and Jordan(2008)]{Wainwright:2008aa}
M.J. Wainwright and M.I. Jordan.
\newblock {Graphical Models, Exponential Families, and Variational Inference}.
\newblock \emph{Found. Trends Mach. Learn.}, 1\penalty0 (1-2):\penalty0 1--305,
  2008.

\bibitem[Barndorff-Nielsen(1978)]{Barndorff-Nielsen:1978aa}
O.~E. Barndorff-Nielsen.
\newblock \emph{{Information and Exponential Families in Statistical Theory}}.
\newblock Wiley, Chichester, 1978.

\bibitem[Aitchinson(1982)]{Aitchinson:1982vt}
J.~Aitchinson.
\newblock {The Statistical Analysis of Compositional Data}.
\newblock \emph{J. Royal Statistical Soc. B}, 2:\penalty0 139--177, 1982.

\bibitem[Brown(1986)]{Brown:1986vy}
L.~D. Brown.
\newblock \emph{{Fundamentals of Statistical Exponential Families}}.
\newblock Institute of Mathematical Statistics, Hayward, CA, 1986.

\bibitem[Zhang(2004)]{Zhang:2004aa}
J.~Zhang.
\newblock {Divergence Function, Duality, and Convex Analysis}.
\newblock \emph{Neural Computation}, 16\penalty0 (1):\penalty0 159--195, 2004.

\bibitem[Basseville(2013)]{Basseville:2013aa}
M.~Basseville.
\newblock {Divergence Measures for Statistical Data Processing -- An Annotated
  Bibliography}.
\newblock \emph{Signal Proc.}, 93\penalty0 (4):\penalty0 621--633, 2013.

\bibitem[Lipman et~al.(2023)Lipman, Chen, Ben-Hamu, Nickel, and
  Le]{Lipman:2023}
Yaron Lipman, Ricky T.~Q. Chen, Heli Ben-Hamu, Maximilian Nickel, and Matthew
  Le.
\newblock {Flow Matching for Generative Modeling}.
\newblock In \emph{The Eleventh International Conference on Learning
  Representations}, 2023.

\bibitem[Liu(2022)]{liu2022rectified}
Qiang Liu.
\newblock Rectified flow: A marginal preserving approach to optimal transport.
\newblock \emph{arXiv:2209.14577}, 2022.

\bibitem[Chen and Lipman(2024)]{Chen:2023}
R.~T.~Q. Chen and Y.~Lipman.
\newblock {Riemannian Flow Matching on General Geometries}.
\newblock In \emph{ICLR}, 2024.

\bibitem[Jolliffe(2002)]{Jolliffe:2002aa}
I.~T. Jolliffe.
\newblock \emph{{Principal Component Analysis}}.
\newblock Springer, 2nd edition, 2002.

\bibitem[Matumoto(1993)]{Matumoto:1993aa}
T.~Matumoto.
\newblock {Any Statistical Manifold has a Contrast Function---on the
  $C^{3}$-Functions Taking the Minimum at the Diagonal of the Product
  Manifold}.
\newblock \emph{Hiroshima Math. J.}, 23\penalty0 (2):\penalty0 327--332, 1993.

\bibitem[Lauritzen(1987)]{Lauritzen:1987aa}
S.~L. Lauritzen.
\newblock {Chapter 4: Statistical Manifolds}.
\newblock In Shanti~S. Gupta, S.~I. Amari, O.~E. Barndorff-Nielsen, R.~E. Kass,
  S.~L. Lauritzen, and C.~R. Rao, editors, \emph{{Differential Geometry in
  Statistical Inference}}, pages 163--216. Institute of Mathematical
  Statistics, Hayward, CA, 1987.

\bibitem[Ay et~al.(2017)Ay, Jost, L{\^e}, and Schwachh{\"o}fer]{Ay:2017aa}
N.~Ay, J.~Jost, H.~V. L{\^e}, and L.~Schwachh{\"o}fer.
\newblock \emph{{Information Geometry}}.
\newblock Springer, 2017.

\bibitem[McCullagh and Nelder(1989)]{McCullagh:1989aa}
P.~McCullagh and J.A. Nelder.
\newblock \emph{Generalized Linear Models}.
\newblock Chapman and Hall, 1989.

\bibitem[Critchley and Salmon(1994)]{Critchley:1994aa}
P.~Critchley, F.~Marriott and M.~Salmon.
\newblock {Preferred Point Geometry and the Local Differential Geometry of the
  Kullback-Leibler Divergence}.
\newblock \emph{Ann. Statistics}, 22\penalty0 (3):\penalty0 1587--1602, 1994.

\bibitem[Warmuth and Kivinen(2001)]{Warmuth:2001aa}
M.~K. Warmuth and J.~Kivinen.
\newblock {Relative Loss Bounds for Multidimensional Regression Problems}.
\newblock \emph{Machine Learning}, 45:\penalty0 301--329, 2001.

\bibitem[Kivinen and Warmuth(1997)]{Kivinen:1997aa}
J.~Kivinen and M.~Warmuth.
\newblock {Exponentiated Gradient versus Gradient Descent for Linear
  Predictors}.
\newblock \emph{Inform. Comput.}, 132:\penalty0 1--63, 1997.

\bibitem[Mahony and Williamson(2001)]{Mahony:2001aa}
R.~E. Mahony and R.~C. Williamson.
\newblock {Prior Knowledge and Preferential Structures in Gradient Descent
  Learning Algorithms}.
\newblock \emph{J. Mach. Learning Res.}, 1:\penalty0 311--355, 2001.

\bibitem[Albergo et~al.(2025)Albergo, Boffi, and
  Vanden-Eijnden]{Albergo:2025aa}
M.~Albergo, N.~M. Boffi, and E.~Vanden-Eijnden.
\newblock {Stochastic Interpolants: A Unifying Framework for Flows and
  Diffusions}.
\newblock \emph{J.~Machine Learning Research}, 26\penalty0 (209):\penalty0
  1--80, 2025.

\bibitem[Dao et~al.(2023)Dao, Phung, Nguyen, and Tran]{Dao:2023}
Quan Dao, Hao Phung, Binh Nguyen, and Anh Tran.
\newblock {Flow Matching in Latent Space}.
\newblock \emph{arXiv:2307.08698}, 2023.

\bibitem[Samaddar et~al.(2025)Samaddar, Sun, Nilsson, and
  Madireddy]{samaddar2025efficient}
Anirban Samaddar, Yixuan Sun, Viktor Nilsson, and Sandeep Madireddy.
\newblock Efficient flow matching using latent variables.
\newblock \emph{arXiv:2505.04486}, 2025.

\bibitem[Filzmoser et~al.(2018)Filzmoser, Hron, and Templ]{Filzmoser:2018vq}
P.~Filzmoser, K.~Hron, and M.~Templ.
\newblock \emph{{Applied Compositional Data Analysis}}.
\newblock Springer, 2018.

\bibitem[Erb and Ay(2021)]{Erb:2021aa}
I.~Erb and N.~Ay.
\newblock {The Information-Geometric Perspectice of Compositional Data
  Analysis}.
\newblock In P.~Filzmoser, K.~Hron, J.~A. Mart\'{i}n-Fern\'{a}ndez, and
  J.~Palarea-Albaladejo, editors, \emph{{Advances in Compositional Data
  Analysis}}, pages 21--43. Springer, 2021.

\bibitem[Boffi et~al.(2025)Boffi, Albergo, and Vanden-Eijnden]{Boffi:2025aa}
N.~M. Boffi, M.~S. Albergo, and E.~Vanden-Eijnden.
\newblock {How to Build a Consistency Model: Learning Flow Maps via
  Self-Distillation}.
\newblock \emph{arXiv:2505.18825}, Oct 5 2025.

\bibitem[LeCun et~al.(2010)LeCun, Cortes, and Burges]{Lecun:2010}
Yann LeCun, Corinna Cortes, and Christopher~J.C. Burges.
\newblock {MNIST Handwritten Digit Database}.
\newblock \emph{ATT Labs [Online]}, 2010.

\bibitem[Xiao et~al.(2017)Xiao, Rasul, and Vollgraf]{Xiao:2017}
Han Xiao, Kashif Rasul, and Roland Vollgraf.
\newblock {Fashion-Mnist: a Novel Image Dataset for Benchmarking Machine
  Learning Algorithms}.
\newblock \emph{arXiv:1708.07747}, 2017.

\bibitem[Cordts et~al.(2016)Cordts, Omran, Ramos, Rehfeld, Enzweiler, Benenson,
  Franke, Roth, and Schiele]{Cordts:2016}
Marius Cordts, Mohamed Omran, Sebastian Ramos, Timo Rehfeld, Markus Enzweiler,
  Rodrigo Benenson, Uwe Franke, Stefan Roth, and Bernt Schiele.
\newblock {The Cityscapes Dataset for Semantic Urban Scene Understanding}.
\newblock In \emph{Proceedings of the IEEE Conference on Computer Vision and
  Pattern Recognition (CVPR)}, 2016.

\bibitem[Avdeyev et~al.(2023)Avdeyev, Shi, Tan, Dudnyk, and
  Zhou]{avdeyev2023dirichlet}
Pavel Avdeyev, Chenlai Shi, Yuhao Tan, Kseniia Dudnyk, and Jian Zhou.
\newblock Dirichlet diffusion score model for biological sequence generation.
\newblock In \emph{International Conference on Machine Learning}, pages
  1276--1301. PMLR, 2023.

\bibitem[Calin and Udriste(2014)]{calin2014geometric}
O.~Calin and C.~Udriste.
\newblock \emph{{Geometric Modeling in Probability and Statistics}}.
\newblock Springer, 2014.

\end{thebibliography}
\end{document}